\newcommand{\akash}[1]{{\textcolor{blue}{[{\bf A:} #1]}}}
\newcommand{\sanjoy}[1]{{\textcolor{red}{\rm [{\bf S:} #1]}}}
\newcommand{\annotate}[1]{{\textcolor{blue}{[{\bf Note:} #1]}}}
\newcommand{\akash}[1]{}
\newcommand{\sanjoy}[1]{}
\newcommand{\annotate}[1]{}
\newif\if@restonecol
\newtheorem{assumption}{Assumption}
\newcommand{\Rmnum}[1]{\expandafter\@slowromancap\romannumeral #1@}
\newcommand{\M}{\mathbb{D}}
\numberwithin{equation}{section}
\newcommand{\defref}[1]{Definition~\ref{#1}}
\newcommand{\figref}[1]{Fig.~\ref{#1}}
\newcommand{\eqnref}[1]{\text{Eq.}~(\ref{#1})}
\newcommand{\secref}[1]{\textnormal{Section}~\ref{#1}}
\newcommand{\appref}[1]{Appendix \ref{#1}}
\newcommand{\thmref}[1]{Theorem~\ref{#1}}
\newcommand{\corref}[1]{Corollary~\ref{#1}}
\newcommand{\lemref}[1]{Lemma~\ref{#1}}
\newcommand{\assref}[1]{Assumption \ref{#1}}
\newcommand{\algoref}[1]{Algorithm~\ref{#1}}
\newcommand{\remref}[1]{Remark~\ref{#1}}
\newcommand{\paren} [1] {\ensuremath{ \left( {#1} \right) }}
\newcommand{\bracket}[1]{\left[#1\right]}
\newcommand{\curlybracket}[1]{\ensuremath{\left\{#1\right\}}}
\newcommand{\expctover}[2]{\mathop{\mathbb{E}}_{#1}\!\left[#2\right]}
\def \argmax {\mathop{\rm arg\,max}}
\def \argmin {\mathop{\rm arg\,min}}
\newcommand{\reals}{\ensuremath{\mathbb{R}}}
\newcommand{\sgn}[1]{\operatorname{sgn}\paren{#1}}
\newcommand{\cS}{{\mathcal{S}}}
\newcommand{\cR}{{\mathcal{R}}}
\newcommand{\cB}{{\mathcal{B}}}
\newcommand{\cM}{{\mathcal{M}}}
\newcommand{\cN}{{\mathcal{N}}}
\newcommand{\cX}{{\mathcal{X}}}
\newcommand{\cU}{{\mathcal{U}}}
\newcommand{\cT}{{\mathcal{T}}}
\newcommand{\cC}{{\mathcal{C}}}
\newcommand{\cO}{{\mathcal{O}}}
\newcommand{\cP}{{\mathcal{P}}}
\newcommand{\rank}[1]{\text{rank}({#1})}
\newcommand{\idot}{\boldsymbol{\cdot}}
\renewcommand{\tt}[1]{\textit{#1}}
\renewcommand{\sf}[1]{\textsf{#1}}
\def\BState{\State\hskip-\ALG@thistlm}
\newcommand{\maha}{\cM_{\textsf{maha}}}
\newcommand{\fin}{\cM_{\textsf{fin}}}
\newcommand{\nul}[1]{\mathrm{null}({#1})}
\newcommand{\symmp}{\text{Sym}_{+}(\reals^{p\times p})}
\newcommand{\symm}{\text{Sym}(\reals^{p\times p})}
\newcommand{\naive}{\cM_{\sf{NN}}}
\newcommand{\sA}{\mathrsfso{A}}
\newcommand{\sM}{\mathrsfso{M}}
\newcommand{\gm}{\mathchorus{e}}%{\psi}
\newcommand{\gM}{\mathchorus{E}}%{\Psi}
\newcommand{\sH}{\sf{H}^*_c}
\newcommand{\sHx}{\sf{H}^*_x}
\newcommand{\sMx}{\sf{M}^*_x}
\newcommand{\sHc}[1]{\sf{H}^*_{c(#1)}}
\newcommand{\cd}{\cC}
\newcommand{\re}{\textcolor{black}{\omega}}
\newcommand{\ew}{h(\textcolor{purple}{\omega})}
\definecolor{cyanblue}{RGB}{170, 23, 205}
\newcommand{\tcg}[1]{\textcolor{cyanblue}{#1}}
\definecolor{redblue}{RGB}{10, 23, 205}
\newcommand{\ue}{U_{\sf{ext}}}
\newcommand{\g}{\kappa_0}
\pgfplotsset{compat=1.18}
\definecolor{shadecolor}{gray}{0.9}
\newcommand{\inner}[1]{\left<#1\right>}
\definecolor{shadecolor}{gray}{0.9}
\definecolor{myblue}{RGB}{10, 112, 224}
\definecolor{bblue}{RGB}{100, 12, 224}
\DeclareFontFamily{T1}{calligra}{}
\DeclareFontShape{T1}{calligra}{m}{n}{<->s*[1.44]callig15}{}
\DeclareMathAlphabet\mathcalligra   {T1}{calligra} {m} {n}
\DeclareMathAlphabet\mathzapf       {T1}{pzc} {mb} {it}
\DeclareMathAlphabet\mathchorus     {T1}{qzc} {m} {n}
\DeclareMathAlphabet\mathrsfso      {U}{rsfso}{m}{n}
\begin{document}

\title{Learning Smooth Distance Functions via Queries}

\author{\name Akash Kumar \email akk002@ucsd.edu \\
       \addr Department of Computer Science $\&$ Engineering,\\
	University of California, San Diego
        \AND
       \name Sanjoy Dasgupta \email sadasgupta@ucsd.edu \\
       \addr Department of Computer Science $\&$ Engineering,\\
	University of California, San Diego}
       %Berkeley, CA 94720-1776, USA}

%\editor{}

\maketitle

% If your paper is accepted and the title of your paper is very long,
% the style will print as headings an error message. Use the following
% command to supply a shorter title of your paper so that it can be
% used as headings.
%
%\runningtitle{I use this title instead because the last one was very long}

% If your paper is accepted and the number of authors is large, the
% style will print as headings an error message. Use the following
% command to supply a shorter version of the authors names so that
% they can be used as headings (for example, use only the surnames)
%
%\runningauthor{Surname 1, Surname 2, Surname 3, ...., Surname n}

%\twocolumn[

\begin{abstract}
 In this work, we investigate the problem of learning distance functions within the query-based learning framework, where a learner is able to pose triplet queries of the form: ``Is \(x_i\) closer to \(x_j\) or \(x_k\)?'' We establish formal guarantees on the query complexity required to learn smooth, but otherwise general, distance functions under two notions of approximation: \(\omega\)-additive approximation and \((1 + \omega)\)-multiplicative approximation. For the additive approximation, we propose a global method whose query complexity is quadratic in the size of a finite cover of the sample space. For the (stronger) multiplicative approximation, we introduce a method that combines global and local approaches, utilizing multiple Mahalanobis distance functions to capture local geometry. This method has a query complexity that scales quadratically with both the size of the cover and the ambient space dimension of the sample space.\end{abstract}

\section{Introduction}

In \emph{personalized retrieval} or, more generally, \emph{personalized prediction}, a machine learning system needs to adapt to a specific user's perceptions of similarity and dissimilarity over a space $\cX$ of objects (products, documents, movies, etc). Suppose the relevant structure is captured by a distance function $d: \cX \times \cX \to \mathbb{R}$. How can this be learned through interaction with the user?

This is one of the basic questions underlying the field of \emph{metric learning}~\citep{kulis_ml}. A key design choice in the interactive learning of distance functions is the nature of user feedback. One line of work assumes that the user is able to provide \emph{numerical dissimilarity values} $d(x,x')$ for pairs of objects $x,x'$~\citep{Cox2008,Stewart2005AbsoluteIB}; the goal is then to generalize from the provided examples to an entire distance function. Another option, arguably more natural from a human standpoint, is for the user to provide \emph{comparative} information about distances~\citep{distance_metric_relative, Jamieson2011LowdimensionalEU}. For instance, given a triplet of objects, $x,x',x''$, the user can specify which of $d(x,x')$ and $d(x,x'')$ is larger. This is the model we consider.

Earlier work in metric learning has focused primarily on learning Mahalanobis (linear transformations) distance functions~\citep{LMNN,Xing2002DistanceML,itml,NCA}. We look at the more general case of smooth but otherwise arbitrary distance functions $d$ over $\cX \subset \mathbb{R}^p$. We show a variety of results about the learnability of such distances using triplet queries.

The learning protocol operates as follows: on each round of interaction, 
\begin{itemize}
\item the learner selects a triplet $(x,x',x'') \in \cX^3$, and 
\item the user answers with
$\mbox{sign}(d(x,x') - d(x,x'')) \in \{-1, 0, 1\}$.
\end{itemize}
That is, the user either says that $d(x,x')$ and $d(x,x'')$ are equal, or if not, declares which is larger. After several such rounds, the learner halts and announces a distance function $d': \cX \times \cX \to \mathbb{R}$.

The best we could hope is for $d'$ and $d$ to agree on all triplets, that is, to be \emph{triplet-equivalent}. But this is unrealistic if the space of distance functions is infinite, since each query only returns a constant number of bits of information. We therefore consider two notions of \emph{approximate triplet-equivalence}.

The first, weaker, notion is that for any $x,x',x'' \in \cX$, 
\begin{gather*}
   |d(x,x') - d(x,x'')| > \omega \implies  \mbox{sign}(d(x,x') - d(x,x'')) = \mbox{sign}(d'(x,x') - d'(x,x'')). 
\end{gather*}
Here $\omega$ is a small constant that is pre-specified to the learner. In other words, $d$ and $d'$ agree on all triplets $(x,x',x'')$ for which there is a non-negligible gap between $d(x,x')$ and $d(x,x'')$. We show that this is achieved quite easily, by taking a finite cover of the space $\cX$ and selecting a distance $d'$ that agrees with $d$ on all triplets drawn from the cover.

\paragraph{Informal theorem 1.} Suppose $\cX \subset \mathbb{R}^p$. Let $N$ be the size of a $c\sqrt{\frac{\omega}{p^{1.5}}}$-cover of $\cX$ with respect to $\ell_2$ distance, where $c$ is a constant that depends upon the second-order smoothness of $d$. Then $O(N^2 \log N)$ queries are sufficient for learning a distance function that is $\omega$-approximately equivalent to $d$ in the (additive) sense above.

The problem with this \emph{additive} notion of approximation is that it provides no guarantees for triplets $(x,x',x'')$ in which there is a significant gap between $d(x,x')$ and $d(x,x'')$ in that the ratio over the distances $\max\{d(x,x')/d(x,x''), d(x,x'')/d(x,x')\}$ is large, even if the distances themselves are small. For this reason, a better and stronger notion of approximation is to require that $d$ and $d'$ agree on all triplets for which either $d(x,x') > (1+\omega) d(x,x'')$ or $d(x,x'') > (1+\omega) d(x,x')$.

Our main result is to realize this kind of multiplicative approximation. We use a hierarchical model consisting of a finite cover of the space $\cX$, and for each element of the cover, a Mahalanobis approximation to the distance function in the vicinity of that point. We show the following result.

\paragraph{Informal theorem 2.} Suppose $\cX \subset \mathbb{R}^p$. Let $N$ be the size of a $c(\frac{\omega}{p})^{1.5}$-cover of $\cX$ with respect to $\ell_2$ distance, where $c$ is a smoothness parameter for $d$. Then $O(N^2 \log N + N p^2 \log \frac{p}{\omega})$ queries suffice for learning a distance function that is $(1+\omega)$-approximately equivalent to $d$ in the multiplicative sense above.

A key component in this result is an efficient algorithm for learning a Mahalanobis distance function from triplet queries.

\section{Related Work}

There is a rich literature, spanning many decades, on methods for inferring the geometry of an unknown space, such as the internal semantic or preference space of a user. Early work dating back to the 1960s includes the psychometric literature on multidimensional scaling, which showed how to fit Euclidean distance functions to user-supplied dissimilarity values. More recently, it has become standard practice to train neural net embeddings using pairs of dissimilar and similar data points.

The geometry that is learned can be of three forms:
\begin{itemize}
\item A distance function, typically Mahalanobis distance.
\item A similarity function, typically a kernel function.
\item An embedding, nowadays typically a neural net.
\end{itemize}
The information used to learn these can consist of exact distance or similarity values, as in classical multidimensional scaling, or comparisons such as triplets and quadruples.
\paragraph{Learning a distance function}
Distance learning has been extensively studied in two main categories: (1) exact measurement information, where precise pairwise distances are provided (e.g., in metric multidimensional scaling) \citep{Cox2008, Stewart2005AbsoluteIB}, and (2) pairwise constraints informed by class labels, where the goal is to learn a distance function under the assumption that similar objects share the same class or cluster \citep{LMNN, Xing2002DistanceML, itml, NCA}. Further work has explored learning with ordinal or triplet constraints, where only relative comparisons between samples are provided \citep{distance_metric_relative, Kleindessner2016KernelFB}, and queries involving ordinal constraints \citep{Jamieson2011LowdimensionalEU}.

One specific class of distance functions, the \tt{Mahalanobis distance}, defined through a linear transformation, has received considerable attention in metric learning. In supervised contexts, studies such as \cite{LMNN}, \cite{Xing2002DistanceML}, \cite{itml}, and \cite{NCA} have focused on designing Mahalanobis distance functions that minimize intra-class distances while maximizing inter-class distances. This has been shown to improve prediction performance when integrated with hypothesis classes \citep{classifier_based, McFee2010MetricLT}. However, unlike these works, which rely on class supervision, our approach seeks to learn a target Mahalanobis or general distance function without any side information. In \tt{unsupervised metric learning}, classical methods like linear discriminant analysis (LDA) \citep{LDA} and principal component analysis (PCA) \citep{PCA} solve the problem by approximating a low-dimensional structure without requiring labeled data. Our work diverges from these methods, as we assume the learner queries triplets to obtain qualitative feedback, rather than having direct access to linearly transformed data.

Generalization bounds for metric learning in i.i.d. settings have also been explored for various loss functions \citep{Verma2015SampleCO, Mason2017LearningLM, Ye2019, sharp_gen, gen_bound_cao}. Of particular relevance to our work are the studies by \cite{distance_metric_relative} and \cite{Mason2017LearningLM}, which use triplet comparisons to learn Mahalanobis distance functions. Our approach differs from \cite{Mason2017LearningLM}, as we allow the learner to select triplets and receive exact labels, while they use i.i.d. triplets where the oracle may provide noisy labels. Additionally, \cite{pseudo_metric} addressed learning Mahalanobis distances in an online setting. For comprehensive surveys on Mahalanobis and general distance functions, refer to \cite{bellet_survey} and \cite{kulis_ml}.

\paragraph{Learning an embedding}%\cite{isomap,Belkin2003LaplacianEF,Roweis2000NonlinearDR}
%Learning appropriate representations or embeddings for a given set of data has been an important problem in machine learning, studied as non-metric multidimensional scaling, isometric embedding, isotonic embedding, among others. Classical work has dealt with this problem in both linear~\cite{PCA,Cox2008} and nonlinear settings~\cite{Belkin2003LaplacianEF,isomap,Roweis2000NonlinearDR} where the task is to find good representatives to the low-dimensional structure of the data, whereas in our work, we consider embeddings of the data space in the form of a cover to globally approximate a distance function. Prior works have explored the idea of learning embeddings for objects satisfying certain triplet/quadruplet constraints in two broad categories--ordinal embeddings~\cite{Agarwal2007GeneralizedNM,AriasCastro2015SomeTF,Kleindessner2014UniquenessOO,Terada2014LocalOE} and triplet embeddings~\cite{Jamieson2011LowdimensionalEU,Tamuz2011AdaptivelyLT,Maaten2012StochasticTE,Amid2015MultiviewTE,Heim2015EfficientOR,nowak_embed}. These works focus on devising methods of finding embeddings so that they can be used for downstream tasks such as kernel learning, nearest neighbor search, density estimation, among others. In our work, the learner uses the feedback from an Oracle to learn a distance function up to triplet similarity transformation. 
Learning appropriate representations or embeddings for a given dataset has been a central problem in machine learning, studied under various frameworks such as metric and non-metric multidimensional scaling, isometric embedding, and dimensionality reduction, among others. Classical approaches have addressed this problem in both linear \citep{PCA, Cox2008} and nonlinear settings \citep{Belkin2003LaplacianEF, isomap, Roweis2000NonlinearDR}, with the primary objective of uncovering a suitable low-dimensional representation of the data’s inherent structure. In contrast, our work focuses on using an embedding of the data space in the form of a cover that globally approximates the underlying latent distance function.

Previous research has extensively explored the concept of learning embeddings for objects that satisfy specific triplet or quadruplet constraints, broadly categorized into two main areas: ordinal embeddings \citep{Agarwal2007GeneralizedNM, AriasCastro2015SomeTF, Kleindessner2014UniquenessOO, Terada2014LocalOE} and triplet embeddings \citep{Jamieson2011LowdimensionalEU, Tamuz2011AdaptivelyLT, Maaten2012StochasticTE, Amid2015MultiviewTE, Heim2015EfficientOR, nowak_embed}. These studies primarily aim to develop methods for discovering embeddings that can be applied to various downstream tasks such as kernel learning, nearest neighbor search, and density estimation. In our work, we adopt a different framework, where the learner utilizes feedback from a user on triplet queries, leveraging this information to approximate a distance function that is accurate both locally and globally, up to a triplet similarity transformation.
\vspace{-4mm}
%Recent works have also studied the problem of learning embeddings corresponding to triplet constraints~\cite{distance_metric_relative,Kleindessner2016KernelFB}.
%leaning with triplet constraints~\cite{Jamieson2011LowdimensionalEU,Tamuz2011AdaptivelyLT,Maaten2012StochasticTE,Amid2015MultiviewTE,Heim2015EfficientOR,nowak_embed}
%\cite{Cox2008,Kleindessner2016KernelFB,Ukkonen2015CrowdsourcedND,Heim2015EfficientOR,nowak_embed,Tamuz2011AdaptivelyLT,Amid2015MultiviewTE}

% non-metric multidimensional scaling~\cite{Jamieson2011LowdimensionalEU, Agarwal2007GeneralizedNM}
%triplet contraints~\cite{distance_metric_relative,Kleindessner2016KernelFB,Jamieson2011LowdimensionalEU,nowak_embed}, pairwise constraints with side information~\cite{LMNN,Xing2002DistanceML,itml,NCA} 
%\begin{itemize}
%    \item triplet~\cite{distance_metric_relative}
%    \item  branch of the machine learning literature has emerged that relaxes this scenario (e.g., Agarwal et al., 2007, Jamieson and Nowak, 2011, van der Maaten and Weinberger, 2012, Heikinheimo and Ukkonen, 2013, Kleindessner and von Luxburg, 2014, Terada and von Luxburg, 2014, Jain et al., 2016; ordinal~\cite{Agarwal2007GeneralizedNM,AriasCastro2015SomeTF,Kleindessner2014UniquenessOO,Terada2014LocalOE}
%    \item multidimensional scaling \cite{Stewart2005AbsoluteIB, Cox2008}
%    \item pairwise constraits with class labels specific~\cite{LMNN}, \cite{Xing2002DistanceML}, \cite{itml}, \cite{NCA}, \cite{classifier_based}
%\end{itemize}

% \section{Problems: Learning a metric}

%\input{problem_setup}
%\input{sample_complexity}

\section{Preliminaries}\label{sec: setup}

We denote by $\cX$ a space of objects/inputs. We consider a general notion of distance on this space.
%For the purpose of this work, we assume that $\cX \subset \reals^p$. We are interested in a distance function \(d\), which is defined as follows:
%\sanjoy{In our case, the first result (for finite $\cX$) applies to any distance function while the main result applies only to distance functions. It is worth beign careful about these specifics because in ML there is a lot of interest in non-distance function distances.}
%\sanjoy{I would suggest the following: start with a \emph{distance function} being any $d: \cX \times \cX \to \mathbb{R}$ that has $d(x,x') \geq 0$ and $d(x,x) = 0$. Then define symmetry, positivity, and triangle inequality, as you do below, and say that a distance function with all three properties is a distance function.}
\begin{definition}[distance function] A bivariate function \(d: \cX \times \cX \to \mathbb{R}\) is a distance function if 
$d(x,x') \geq 0$ and $d(x,x) = 0$ for all $x, x' \in \cX$.
\end{definition}
In this work, we are interested in various families of distance functions: distances on finite spaces; Mahalanobis distance functions; and smooth distances. For the latter two cases we will assume $\cX \subset \reals^p$.
%Denote by \(\cH_{\cX}\) the collection of all possible distance function on \(\cX\), i.e., \(\cH_{\cX} := \{d: \cX \times \cX \to \mathbb{R} \, |\, d \textit{ is a distance function}\}\). We are interested in studying certain subsets \(\cM \subset \cH_{\cX}\) where $\cX$ could be a finite sample space or a continuous space, e.g., \(\cM\) is a collection of Mahalanobis distance functions, or \(\cM\) is a collection of finite sample distance function, or \(\cM\) is a family of smooth distance function.
%\sanjoy{$\cH_{\cX}$ is unnecessary notation; I would avoid it. E.g. For the previous paragraph, could say: In this paper, we will work with various families of distance functions: distances on finite spaces; Mahalanobis metrics; and smooth distances.}

%A well-studied family of distance functions includes metrics that satisfy certain niceness properties.
%\begin{definition}[metric]
%    Consider a space of objects/inputs \(\cX\). Consider a distance function \(d: \cX \times \cX \to \mathbb{R}\). We say \(d\) forms a metric if it satisfies the following properties:
%    \begin{enumerate}
%        \item (positivity) For all \(x \neq x' \in \cX\), \(d(x, x') > 0\). Furthermore, for any \(x \in \cX\), \(d(x, x) = 0\). 
%        \item (symmetry) For all \(x, x' \in \cX\), \(d(x, x') = d(x', x)\).
%        \item (triangle inequality) For all \(x, x', x'' \in \cX\), \(d(x, x') + d(x', x'') \ge d(x, x'')\).
%    \end{enumerate}
%\end{definition}
%A widely studied family of distance functions are Mahalanbis distances which is described as follows:
\paragraph{Mahalanobis distance function}
Assume that \(\cX \subset \mathbb{R}^p\). A \textit{Mahalanobis} distance function is characterized by a symmetric, positive semidefinite matrix (written $M \succeq 0$, or alternatively \(M \in \symmp\)), and is denoted by \(d_M\), where for all $x, x' \in \cX$,
\begin{align}
     d_M(x, x') = \sqrt{(x - x')^{\top}M(x - x')} .\label{eq: maha}
\end{align}
We denote the family of Mahalanobis distance functions by
\begin{align*}
    \cM_{\mathsf{maha}} = \curlybracket{d_M: \cX \times \cX \to \reals_{\ge 0}\,|\, M \in \symmp}
\end{align*}
\looseness-1
\paragraph{Comparisons of distances} In this work, we obtain information about a target distance function on $\cX$ via triplet comparisons. For any triplet $x,y,z \in \cX$ and distance function $d$, we define the label $\ell((x,y,z);d) \in \curlybracket{-1,0,1}$ to be the sign of $d(x,y) - d(x,z)$. 
%Denote the space of all possible triplets and labels as $ \cT_q := \curlybracket{(x,y,z) \in \cX^3}$.

%We call this \tt{triplet comparison}. %Similarly, on any quadruplets $x,y,u,v \in \cX$, we specify $d(x,y)$ and $d(u,v)$ via comparisons: $<$ or $=$ or $>$ depending on the sign of $d(x,y) - d(u,v)$. We call this \tt{quadruple comparison}. 

The label of a triplet corresponds to the three possibilities $d(x,y) < d(x, z)$, $d(x,y) = d(x,z)$, and $d(x,y) > d(x,z)$. From a human feedback standpoint, it would also be reasonable to allow the labels to be either $\leq$ or $\geq$. However, these relaxed labels would not be  sufficiently informative for our purposes; for instance, they are always satisfied by the trivial distance function $d(x,x') = 1(x \neq x')$.

\paragraph{Query learning a distance function:}  Given a family of distance functions $\cM$, the learner seeks to learn a target distance function $d$ within this space by querying the user/oracle. The learner is allowed rounds of interaction in which it adaptively pick triplets $(x,y,z) \in \cX^3$ and receives their labels.  We use the notation \( (x,y,z)_{\ell} \) to denote a triplet \( (x,y,z) \) along with its label \( \ell((x,y,z); d) \). We call this setup \tt{learning a distance function via queries}. 

Given the family $\cM$ we are interested in understanding the minimal number of triplet queries that the learner needs so that it can learn $d$ either exactly or approximately. To make this precise, we start by defining \emph{triplet equivalence} between two distance functions.
%More specifically, the goal is to find a $d'$ triplet equivalent to $d$, that is defined as follows
\begin{definition}[triplet equivalence]\label{def: triplet}
    On a given sample space $\cX$, we say distance functions $d$ and $d'$ are triplet-equivalent if for any triplet $(t,u,v) \in \cX^3$
    \begin{equation*}
        \ell((t,u,v); d) = \ell((t,u,v); d')
    \end{equation*}
\end{definition}
Triplet equivalence is achievable in finite instance spaces, but for more general spaces we need to allow approximations. We consider two approximate notions of triplet equivalence---additive (see \secref{sec: add}) and multiplicative (see \secref{sec: mult}). Formally, we define these approximations as
\begin{definition}[additive approximation] Given a target distance function $d$ and threshold $\omega > 0$, we say distance function $d'$ is triplet equivalent to $d$ up to additive factor $\omega$ if 
    \begin{align*}
        (\forall x,x',x'' \in \cX),\, |d(x,x') - d(x,x'')| > \omega \implies  \mbox{sign}(d(x,x') - d(x,x'')) = \mbox{sign}(d'(x,x') - d'(x,x''))
    \end{align*}
%We denote the subset of $\cM$ satisfying this property as $\sA_{\epsilon}(\cM)$ where the knowledge of $d$ is implicit.
\end{definition}
\begin{definition}[multiplicative approximation] Given a target distance function $d$ and a threshold $\omega > 0$, 
%a family of distance functions $\cM$, 
we say distance function $d'$ is triplet equivalent to $d$ up to multiplicative factor $1+\omega$ if 
    \begin{align*}
        (\forall x,x',x'' \in \cX),\, d(x,x') > (1 + \omega) d(x,x'') \implies \mbox{sign}(d(x,x') - d(x,x'')) = \mbox{sign}(d'(x,x') - d'(x,x''))
    \end{align*}
%We denote the subset of $\cM$ satisfying this property as $\sA_{\epsilon}(\cM)$ where the knowledge of $d$ is implicit. 
\end{definition}

%\paragraph{Querying strategy:} In order to query the Oracle a learner can follow various strategies, also called as querying algorithm. We denote a querying algorithm as 
%\begin{align*}
% \cA^{q}_{\cM}: 2^{\cX^3 \times \curlybracket{<,=,>}} \to \cX^3: \quad
% \cS \subseteq \cT \mapsto \cA^{q}_{\cM}(\cS) \in \cX^3
%\end{align*}
%To simplify the notation we use the notation $\cA^{q}$ when the underlying distance model is known. Denote the space of all querying strategies for a distance model $\cM$ and sample space $\cX$ as $\boldsymbol{\cA}:= \curlybracket{\cA^{q}_{\cM}}$. In \algoref{alg: interactionprotocol}, a strategy $\cA^{q}_{\cM}$ generates a sequence of triplets with labels, which we denote as $\cA^{q}_{\cM}(\emptyset)$. An application of $\cA^{q}$ for $k$ iterations is denoted as $\cA^{q}(\emptyset, k)$.
%This leads to a definition of minimal triplet comparison with respect to a relation. Let \( \sim_R \) denote a relation on \( \cM \). We define the oblivious learning complexity of a distance function with respect to \( \sim_R \) as follows:

In this paper, we consider various families of distance functions: distances on finite spaces, Mahalanobis distances, and smooth distances. In each case, we study the query complexity of learning a target distance function upto approximate triplet equivalence, in both the additive and multiplicative sense.

\section{Additive approximation of a smooth distance function}\label{sec: add}
%\sanjoy{I would avoid using the relation $\sim$: just extra notation that doesn't add anything and in fact causes confusion (because it isn't an equivalence relation).}
In this section, we show how to learn a smooth distance function $d$  up to approximate triplet equivalence in the $\omega$-additive sense.
%Here, we are interested in understanding, for a threshold $\omega > 0$, if the learner can find a smooth distance function $d^*$ triplet equivalent up to additive approximation.%, denoted as  $\sim_{\sA(\epsilon)}$, i.e., a distance function $ d' \sim_{\sA(\epsilon)} d^*$ if 
%\begin{align*}
%  (\forall x,y,z \in \cX),\, |d^*(x,y) - d^*(x,z)| > \epsilon \implies \mbox{sign}(d^*(x,x') - d^*(x,x'')) = \mbox{sign}(d'(x,x') - d'(x,x''))
%\end{align*}

%We show that a learner can use a naive approach to query based on a cover of the space \(\cX\) to achieve this. 
The learner's strategy is to take an $\epsilon$-cover $\mathcal{C}$ of the instance space $\cX$ (with respect to $\ell_2$ distance) and to learn a distance function that is exactly triplet-equivalent to $d$ on $\mathcal{C}$. This distance function is then extended to all of $\cX$ using $\ell_2$ nearest neighbor. 

\subsection{Learning a distance function on a finite space, upto triplet equivalence}

In \thmref{thm: finite}, we demonstrate how any distance function on a finite space (e.g., $d$ on $\mathcal{C}$) can efficiently be learned upto exact triplet-equivalence. The full proof is deferred to \iftoggle{longversion}{\appref{app: finite}}{the supplemental materials}.
%\vspace{-5mm}
%\sanjoy{Should give the algorithm right here so that the reader knows exactly what is going on. I have put a modified version of the algorithm in Algorithm~\ref{alg: naive_pseudo2}.}
\begin{algorithm}[t]
\caption{Learning a smooth distance function up to additive approximation via $\ell_2$ covering}
\label{alg: naive_pseudo2}
\setlength{\algomargin}{3pt}
\LinesNumbered
%\SetAlgoLined
\SetKwInOut{Given}{Given}
\SetKwInOut{Output}{Output}
\SetKwProg{For}{for}{}{}
\Given{Approximation threshold $\omega$.}
\Output{Distance function $d'$ on $\cX$.}
%\textit{In batch setting:}\;
Construct an $\epsilon$-cover $\mathcal{C} \subset \cX$ with respect to $\ell_2$ distance, for $\epsilon$ as given in \thmref{thm: smoothl2}\;
Using triplet queries, construct distance function $\hat{d}$ on $\mathcal{C}$ that is triplet-equivalent to $d$ on $\mathcal{C}$, as in \thmref{thm: finite}\;
For any $x \in \cX$, set $c(x) = \arg\min_{c \in \mathcal{C}} \|x - c\|_2$\;
Define distance function $d'$ on $\cX$ by $d'(x,x') = \hat{d}(c(x),c(x'))$ \;
%\textit{To answer a triplet comparison on $(x,y,z)$, learner checks:}\;
%\Begin{
%    \If{$c_y = c_z$}{
%        \textbf{Say} ``$d(x,y) \approx d(x,z)$''\;
%    }
%    \Else{
%        \textbf{Answer} according to the triplet $(c_x, c_y, c_z)_{\hat{d}}$\;
%    }
%}
Output $d'$.
\end{algorithm}
\begin{theorem}\label{thm: finite}
    Given any distance function $d: \cX \times \cX \to \reals_{\ge 0}$ on a sample space $\cX$, and a finite subset $\cX_o \subset \cX$, a learner can find a distance function $\hat{d}: \cX_o \times \cX_o \to \reals_{\ge 0}$ that is triplet-equivalent to $d$ on $\cX_o$ using $|\cX_o|^2 \log |\cX_o|$ triplet queries.
\end{theorem}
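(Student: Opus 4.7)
The plan hinges on a decoupling observation: triplet equivalence on $\cX_o$ only constrains comparisons of the form $d(t,u)$ versus $d(t,v)$ that share a common first coordinate $t$. Equivalently, two distance functions are triplet-equivalent on $\cX_o$ if and only if, for every $t \in \cX_o$, they induce the same total preorder on $\cX_o \setminus \{t\}$ via the map $y \mapsto d(t,y)$. So the problem reduces to recovering, for each fixed $t$, the preorder induced by $d(t,\cdot)$, and then constructing any $\hat{d}(t,\cdot)$ that realises it.

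For a fixed $t$, each comparison $d(t,y)$ versus $d(t,z)$ is exactly one triplet query with first coordinate $t$, and the returned label gives a three-way outcome in $\{-1,0,1\}$. Because these outcomes come from a genuine real-valued function $d(t,\cdot)$, they are automatically consistent with a total preorder (transitivity holds), so any standard comparison-based sort --- for instance a merge sort adapted to cluster equal elements into a single group --- recovers the full preorder on $\cX_o \setminus \{t\}$ with at most $O(|\cX_o| \log |\cX_o|)$ queries. Summing over the $|\cX_o|$ choices of $t$ yields the claimed total of $|\cX_o|^2 \log |\cX_o|$ triplet queries.

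To finish, for each $t$ I would list the recovered equivalence classes in increasing order as $C_1^t \prec C_2^t \prec \cdots \prec C_{k_t}^t$, set $\hat{d}(t,t) = 0$, and set $\hat{d}(t,y) = i$ whenever $y \in C_i^t$. Then $\hat{d} \geq 0$ and $\hat{d}(t,t) = 0$, so $\hat{d}$ is a valid distance function in the sense of the paper's definition, and by construction $\mbox{sign}(\hat{d}(t,u) - \hat{d}(t,v)) = \mbox{sign}(d(t,u) - d(t,v))$ for every triplet $(t,u,v) \in \cX_o^3$. Crucially, no cross-consistency between the $|\cX_o|$ separately chosen orderings is required, since the paper's definition of distance function does not demand symmetry and triplet equivalence imposes no constraint across different first coordinates.

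The only nontrivial step is the sort, and even there the only thing to verify is that the oracle's three-way answers are consistent with a total preorder, which is automatic because they arise from comparing real numbers. Thus I do not anticipate a real obstacle: the substance of the theorem is essentially the observation that triplet equivalence on a finite set decouples into $|\cX_o|$ independent three-way sorting problems, each solvable in $O(|\cX_o| \log |\cX_o|)$ comparisons.
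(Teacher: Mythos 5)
Your proof matches the paper's: decouple triplet equivalence into one sorting problem per first coordinate $t$, recover each ordering by a comparison-based merge sort in $O(|\cX_o| \log |\cX_o|)$ queries, sum over the $|\cX_o|$ choices of $t$, and then assign ranks as the values of $\hat{d}(t,\cdot)$. Your version is slightly more careful than the paper's in that it explicitly groups equal distances into equivalence classes (a total preorder) and assigns them the same rank, whereas the paper phrases $\sigma_x$ as a permutation and implicitly breaks ties, which would not literally satisfy $\mathrm{sign}(\hat{d}(t,u)-\hat{d}(t,v)) = 0$ when $d(t,u)=d(t,v)$; this is a genuine though minor improvement.
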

\begin{proof}{\textbf{outline}:}
There are only $O(|\cX_o|^3)$ triplets involving points in $\cX_o$, so at most this many queries are needed. To get a better query complexity, we use a reduction to comparison-based sorting.

Fix any $x_o \in \cX_o$. In order to correctly answer all triplet queries of the form $(x_o, x', x'')$ for $x', x'' \in \cX_o$, it is both necessary and sufficient to know the \emph{ordering} of the distances $\{d(x_o, x): x \in \cX_o\}$. In other words, we need to sort $\cX_o \setminus \{x_o\}$ by increasing distance from $x_o$. We can do this by simulating a comparison-based sorting algorithm like mergesort. To compare which of $x', x''$ comes first in the ordering, we make the query $(x_o, x', x'')$. Mergesort asks for at most $|\cX_o| \log_2 |\cX_o|$ such comparisons, and thus we make at most this many triplet queries for $x_o$.

Repeating this for all $x_o \in \cX_o$, we get a total query complexity of $|\cX_o|^2 \log_2 |\cX_o|$. 

It is easy to devise a distance function $\hat{d}$ that is consistent with the resulting orderings, e.g. set $\hat{d}(x,x') = j$ if $x'$ is the $j$-th furthest point (in $\cX_o$) from $x$.
%
%In order to sort $\cX_o \setminus \{x_o\}$ by increasing distance from $x_o$, we can use triplet queries
%
%    Fix an arbitrary finite subset $\cX_o \subset \cX$. To learn $d$ restricted to $\cX_o$ to triplet equivalence, it is straight-forward that the learner has to queries to rank each distance wrt a pivot/center $x \in \cX_o$. This has been studied extensively in the algorithmic literature and the query complexity is tight with $|\cX_o| \log  |\cX_o|$ queries that could be achieved with a technique based on merge sort. Thus, the learn queries the user to merge sort the distances $\{d(x, y): y \in \cX_o \setminus \curlybracket{y}\}$, thus obtaining a ranking $\sigma_x$ for any $x \in \cX_o$ which is then used to define $\hat{d}(x, \cdot)$ over $\cX_o$, i.e. $\hat{d}(x,\cdot) :\cX_o \to \reals$ such that,
 %   \begin{align*}
%         \forall x',x'' \in \cX_o,\, \hat{d}(x,x')  \ge \hat{d}(x,x'') \Leftrightarrow \sigma_x(x') \succeq \sigma_x(x'')
%    \end{align*}
%    This gives a distance function $\hat{d}$ triplet equivalent to $d$.
We provide a formal algorithm based on this strategy in \iftoggle{longversion}{\appref{app: finite}}{the supplemental materials} with a detailed proof of \thmref{thm: finite}. 
\end{proof}
 %We state the query complexity to learn the entire distance function up to triplet equivalence in the following lemma. Note that the learner can achieve $0$-approximate triplet equivalence in the finite setting.
%Our goal is to study the query learning complexity of finding a target distance function \(d^* \in \mathcal{M}_{\mathsf{fin}}\) via triplet or quadruple queries.
%For a given sample $\cX$ of fixed set of nodes, distances $d_w$ and $d_{w'}$, are related up to triplet constraints, if
%\begin{align*}
 %   \forall(t, u, v) \in \cX^3, \ell((t,u,v); d_{w}) = \ell((t,u,v); d_{w'})
%\end{align*}
%We denote this relation as $\sim_{triplet}$. We define this relation as triplet equivance as
%Proof is deferred to \appref{app: finite}.

\subsection{A notion of smooth distance function}

Let $\mathcal{C}$ be an $\epsilon$-cover of the instance space $\cX$, with respect to $\ell_2$ distance. \thmref{thm: finite} shows how to learn a distance function $\hat{d}$ on $\mathcal{C}$ that is exactly triplet-equivalent to the target distance function $d$ on $\mathcal{C}$. To extend $\hat{d}$ to all of $\cX$, we use $\ell_2$ nearest neighbor. For any $x \in \cX$, let $c(x)$ be its nearest neighbor in $\mathcal{C}$, that is, $\argmin_{c \in \mathcal{C}} \|x-c\|$. We will treat $c(x)$ as a stand-in for $x$.

This can be problematic if $d(x,c(x))$ is large even though $\|x - c(x)\| \leq \epsilon$. We now introduce a smoothness condition that precludes this possibility.

\begin{definition}[$(\alpha,L,\delta)$-smooth]
    Suppose $\cX \subset \reals^p$. We say a distance function $d: \cX \times \cX \to \reals_{\ge 0}$ is $(\alpha,L,\delta)$-smooth for $\alpha,L,\delta > 0$ if for any $x,x' \in \cX$ such that $||x - x'||_2 \le \delta$ we have 
    \begin{align*}
        d(x, x') \le L\cdot ||x - x'||_2^{\alpha}
    \end{align*}
    %\vspace{-10mm}
\end{definition}
We'll see that this smoothness condition holds for distance functions with bounded third derivatives.
\begin{assumption}[local smoothness]\label{assump: a1}
Suppose \(\cX \subset \mathbb{R}^p\). For \(x \in \cX\), define \(f_x: \cX \to \mathbb{R}\) by \(f_x(x') = d(x, x')\).
Assume that for all $x \in \cX$, the function \(f_x\) is \(C^3\) in an open ball around \(x\) and that its third partial derivatives are bounded in absolute value by a constant $M > 0$. Let \(\sHx\) be the Hessian of \(f_x\) at \(x\). 
%All third partial derivatives of \(f_x\) at \(x\) are bounded in absolute value by a constant \(M > 0\).
\end{assumption}

The following result follows from the Taylor expansion of $f_x$. We will make heavy use of it.

\begin{lemma}[Taylor's theorem]\label{lem: taylorlemma} Fix $\cX \subset \reals^p$. If distance function $d: \cX \times \cX \to \reals_{\ge 0}$ satisfies Assumption~\ref{assump: a1}, 
%Assume that for any $x \in \cX$ the function $f_x := d(x,\cdot) : \cX \to \reals_{\ge 0}$ is $C^3$-map in an open ball around $x$ and has uniformly bounded third partial derivative by a constant $M > 0$. 
then for any $x,x' \in \cX$, we have 
\begin{align*}
    \bigg\lvert d(x,x') - \frac{1}{2}(x'-x)^{\top} \sHx (x'-x) \bigg\rvert \le  \frac{Mp^{\frac{3}{2}}}{6} ||x-x'||_2^3, %\label{eq: taylorapprox}
\end{align*}
where $\sHx$ denotes the Hessian of the function $d(x,\cdot)$ at $x$. Furthermore, $\sHx$ is a symmetric, positive semidefinite matrix, i.e. $\sHx \succeq 0$. 
\end{lemma}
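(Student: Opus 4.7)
The plan is to apply Taylor's theorem with the Lagrange form of the remainder to the function $f_x$ around the point $x$, and then convert the coordinate-wise bound on third derivatives into the stated $\ell_2$ bound.

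First, I would exploit the definition of a distance function. Since $d(x,x) = 0$ and $d(x,x') \geq 0$ for all $x' \in \cX$, the point $x$ is a global minimizer of $f_x$ on $\cX$. By Assumption~\ref{assump: a1}, $f_x$ is $C^3$ in an open ball around $x$, so the first-order necessary condition gives $\nabla f_x(x) = 0$, and the second-order necessary condition gives that the Hessian $\sHx$ of $f_x$ at $x$ is positive semidefinite. Symmetry of $\sHx$ is automatic from $f_x \in C^2$ via equality of mixed partials. This handles the ``furthermore'' clause.

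Next, I would apply Taylor's theorem with Lagrange remainder around $x$. Writing $h = x' - x$, there exists $\xi$ on the segment between $x$ and $x'$ such that
\begin{align*}
f_x(x') \;=\; f_x(x) + \nabla f_x(x)^\top h + \tfrac{1}{2} h^\top \sHx h + \tfrac{1}{6}\sum_{i,j,k=1}^{p} \frac{\partial^3 f_x}{\partial y_i \partial y_j \partial y_k}(\xi)\, h_i h_j h_k.
\end{align*}
Using $f_x(x) = d(x,x) = 0$ and $\nabla f_x(x) = 0$, the first two terms vanish, leaving
\begin{align*}
d(x,x') - \tfrac{1}{2} (x'-x)^\top \sHx (x'-x) \;=\; \tfrac{1}{6}\sum_{i,j,k} \frac{\partial^3 f_x}{\partial y_i \partial y_j \partial y_k}(\xi)\, h_i h_j h_k.
\end{align*}

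Finally, I would bound the remainder. By the assumption that each third partial is bounded in absolute value by $M$,
\begin{align*}
\bigg|\sum_{i,j,k} \frac{\partial^3 f_x}{\partial y_i \partial y_j \partial y_k}(\xi)\, h_i h_j h_k \bigg| \;\le\; M \sum_{i,j,k} |h_i||h_j||h_k| \;=\; M \|h\|_1^{3}.
\end{align*}
Combining this with the standard norm inequality $\|h\|_1 \leq \sqrt{p}\, \|h\|_2$ yields $\|h\|_1^3 \leq p^{3/2}\|h\|_2^3$, so the remainder is bounded by $\frac{Mp^{3/2}}{6}\|x-x'\|_2^3$, which is the stated inequality.

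None of the steps is a serious obstacle; the only thing to be careful about is ensuring that the segment from $x$ to $x'$ stays in the open ball on which $f_x$ is $C^3$. If needed, this can be handled by restricting attention to $x'$ close enough to $x$ for the Taylor expansion to apply directly, and otherwise appealing to a standard extension argument along the segment; but as stated, the assumption can be read as holding on all of $\cX$, in which case no such care is needed.
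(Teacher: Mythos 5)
Your proof is correct and follows essentially the same route as the paper's: both derive $\nabla f_x(x) = 0$ and $\sHx \succeq 0$ from $x$ being a minimizer of $f_x$, then apply third-order Taylor expansion with Lagrange remainder and bound the remainder coordinate-wise. The only (minor) difference is that you make explicit the passage from the coordinate-wise bound $M$ on third partials to the stated $\ell_2$ bound, via $\bigl|\sum_{i,j,k}\partial^3_{ijk}f_x(\xi)\,h_ih_jh_k\bigr| \le M\|h\|_1^3 \le Mp^{3/2}\|h\|_2^3$, whereas the paper packages this into the remainder bound it cites from Folland's theorem; you also flag, more carefully than the paper, that the expansion implicitly requires $x'$ to lie in the ball on which $f_x$ is $C^3$.
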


This lemma suggests that for a point $x \in \cX$, the distance function in the vicinity of $x$ is well-approximated by the Mahalanobis distance 
$$ d(x,x') \approx \frac{1}{2} (x'-x)^T \sHx (x'-x) .$$
%Now, if the largest eigenvalue of $\sHx$ is bounded at any sample, one can use this simple observation to show that a distance function with bounded third partial derivative is $(\alpha, L, \delta)$-smooth where $\alpha = 2$, $\delta = 1$, and $L = \frac{\lambda_{\max}(\sHx)}{2} + \frac{Mp^{1.5}}{6}$.
Let's take this a step further, by considering
\begin{align*}
    (x'-x)^{\top} \sHx (x'-x) \le \lambda_{\max}\paren{\sHx}\cdot ||x-x'||_2^2 \label{eq: mahaapp}
\end{align*}
where $\lambda_{\max}(\sHx)$ denotes the largest eigenvalue of $\sHx$. If these eigenvalues are uniformly upper-bounded by a constant $\gamma > 0$, then it follows from Lemma~\ref{lem: taylorlemma} that the distance function $d$ is $(\alpha, L, \delta)$-smooth for $\alpha = 2$, $\delta = 1$, and $L = (\gamma/2) + (Mp^{3/2}/6)$.

\subsection{Learning a smooth distance upto additive approximation}

%In  this work, we consider the smoothness assumptions on $d$ as stated in \lemref{lem: taylorlemma} in various settings, and thus we formalize them as follows:

Under $(\alpha, L, \delta)$-smoothness of the target distance $d$, it is enough to learn a distance function $\hat{d}$ that is triplet-equivalent to $d$ on a finite cover $\mathcal{C}$ of $\cX$, and then extend it to the rest of the space using $\ell_2$ nearest neighbor. See \algoref{alg: naive_pseudo2}.

\paragraph{Distance Model:} Consider a distance function model $\naive = \curlybracket{(d, (\cC, d_{\cC}), \ell_2)}$ where every distance function $d \in \naive$ is parameterized by a finite subset $\cC \subset \cX$ and a distance function $d_{\cC}: \cC \times \cC \to \reals_{\ge 0}$. Then $d: \cX \times \cX \to \reals_{\ge 0}$ has the following form:
\begin{equation*}
    d(x,y) = d_{\cC}(c(x), c(y)) \text{ where } c(x) = \argmin_{c \in \cC} ||c - x||_2.\vspace{-2mm}
\end{equation*}

For this family of distance functions, we show that a \(c \omega^{\frac{1}{\alpha}}\)-cover of the space is sufficient to learn an \((\alpha, L, \delta)\)-smooth \(d\), where \(c\) depends on the smoothness parameters of the distance function.
% The key idea is to learn a finite sample distance function on the centers of this cover which has statistical query efficiency, as shown in \thmref{thm: finite}. 
%Since any finite sample distance function can be query-learned with \(N^2 \log N\) queries up to triplet equivalence, where \(N\) is the size of the finite sample space (see \thmref{thm: finite}), we find a solution in \(\naive\) that is triplet equivalent to \(d\) up to additive approximation. This requires \(\cN(\cX,c \omega^{\frac{1}{\alpha}},\ell_2)^2 \log \cN(\cX,c \omega^{\frac{1}{\alpha}},\ell_2)\) queries, where \(\cN(\cX,c \omega^{\frac{1}{\alpha}},\ell_2)\) is the covering number with respect to the \(\ell_2\) norm. 
To achieve the formal guarantee, we also require a directional form of the triangle inequality in addition to \((\alpha, L, \delta)\)-smoothness, which is formalized in the following.

\begin{assumption}[triangle inequality]\label{assump: a0}
    Suppose $\cX \subset \reals^p$. We say a distance function $d$ defined on $\cX$ satisfies triangle inequality\footnote{This doesn't require symmetry of $d$.} over $\cX$ if for all $x,y,z \in \cX$, $d(x,y) + d(y,z) \ge d(x,z)$.
\end{assumption}
\begin{remark}\label{rem: relaxation} The requirement of this directional triangle inequality is for the ease of analysis. With a minor modification to the triangle inequality, such as:
$$
\forall x, y, y' \in \mathcal{X}, \quad |d(x, y) - d(x, y')| \leq C \cdot \||y - y'\||_2^{\alpha},
$$
for constants $C, \alpha > 0$, we can lift the requirement of \assref{assump: a0}. As discussed in the later sections, this allows this work to include KL divergence in the analysis.
\end{remark}

Now, we state the main result of the section with the proof deferred to \iftoggle{longversion}{\appref{app: l2}}{the supplemental materials}.
%\akash{what is given for learning and the query model?}
%\sanjoy{In the next theorem, why do we require $\alpha, L \geq 1$? Isn't it enough that they be $>0$?}

\begin{theorem}\label{thm: smoothl2} Consider a compact subset $\cX \subset \reals^p$. Consider a distance function $d: \cX \times \cX \to \reals_{\ge }$ that is $(\alpha, L,1)$-smooth for $\alpha, L > 0$ and satisfies \assref{assump: a0}. Then, \algoref{alg: naive_pseudo2} outputs a distance function $d' \in \naive$ triplet equivalent to $d$ up to $\omega$-additive approximation. %, i.e., for all $x,y,z \in \cX$, $$  d(x,y) > d(x,z) + \omega \implies d'(x,y) > d'(x,z).$$ %for any $\paren{\frac{1}{4L}}^{\frac{1}{\alpha - 1 }} \ge \epsilon$.  
%\begin{align*}
%\end{align*}
The query complexity to find such a $d'$ is $\paren{\cN(\cX, \epsilon, \ell_2)^2 \log \cN(\cX, \epsilon, \ell_2)}$ triplet queries where $\epsilon = \min(1, (\omega/4L)^{1/\alpha})$ and $\cN(\cX, \epsilon, \ell_2)$ is the covering number of the space $\cX$ with $\epsilon$-balls in $\ell_2$ norm.
\end{theorem}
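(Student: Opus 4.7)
The plan is to separate the two claims in the theorem: (i) the query complexity, which follows directly from \thmref{thm: finite}, and (ii) correctness of the extension, which will hinge on a stability lemma that compares $d(x,x')$ with $d(c(x),c(x'))$.

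For the query complexity part, observe that the only stage of \algoref{alg: naive_pseudo2} that queries the oracle is Step 2, which invokes \thmref{thm: finite} with $\cX_o = \cC$. Since $|\cC| = \cN(\cX, \epsilon, \ell_2)$ with $\epsilon = \min\!\paren{1, (\omega/4L)^{1/\alpha}}$, we immediately get $|\cC|^2 \log |\cC|$ triplet queries, which is the bound claimed. The $\ell_2$-nearest-neighbor step that produces $c(x)$ is purely offline and uses no queries.

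For correctness, the main ingredient I would prove is a ``snapping'' lemma: for any $x, y \in \cX$,
\begin{align*}
\bigl|\,d(x,y) - d(c(x), c(y))\,\bigr| \;\le\; 2L\epsilon^{\alpha}.
\end{align*}
This is where I combine the directional triangle inequality of \assref{assump: a0} with $(\alpha,L,1)$-smoothness. Applied twice, the triangle inequality gives
$d(x,y) \le d(x,c(x)) + d(c(x),c(y)) + d(c(y),y)$
and similarly
$d(c(x),c(y)) \le d(c(x),x) + d(x,y) + d(y,c(y))$.
Since $\cC$ is an $\epsilon$-cover in $\ell_2$ and $\epsilon \le 1$, smoothness bounds each of the four ``correction'' terms by $L\epsilon^{\alpha}$, yielding the displayed bound. (The observation in \remref{rem: relaxation} is exactly what would let one drop \assref{assump: a0} in favor of direct stability of $d$, so this step is the natural place to swap assumptions if desired.)

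The rest is bookkeeping. Suppose $|d(x,x') - d(x,x'')| > \omega$ and, without loss of generality, $d(x,x') > d(x,x'') + \omega$. Apply the snapping lemma to both $(x,x')$ and $(x,x'')$:
\begin{align*}
d(c(x),c(x')) \;\ge\; d(x,x') - 2L\epsilon^{\alpha} \;>\; d(x,x'') + \omega - 2L\epsilon^{\alpha} \;\ge\; d(c(x), c(x'')) + \omega - 4L\epsilon^{\alpha}.
\end{align*}
Because $\epsilon \le (\omega/4L)^{1/\alpha}$, the right-hand side exceeds $d(c(x), c(x''))$, so the sign of $d(c(x),c(x')) - d(c(x),c(x''))$ agrees with the sign of $d(x,x') - d(x,x'')$. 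Finally, \thmref{thm: finite} guarantees that $\hat{d}$ is triplet-equivalent to $d$ on $\cC$, so the sign of $\hat{d}(c(x),c(x')) - \hat{d}(c(x),c(x''))$ matches that of $d(c(x),c(x')) - d(c(x),c(x''))$, and by construction $d'(x,y) = \hat{d}(c(x),c(y))$. Chaining these equalities of signs completes the additive-approximation guarantee.

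The only delicate point is the snapping lemma: one must be careful that \assref{assump: a0} does not assume symmetry of $d$, so both orderings of the triangle inequality need to be invoked, and one must verify that $\|x - c(x)\|_2 \le \epsilon \le 1$ puts us inside the regime where $(\alpha,L,1)$-smoothness applies. Once that is handled cleanly, the remaining algebra is a one-line calibration of $\epsilon$ against $\omega$.
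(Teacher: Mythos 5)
Your proposal is correct and takes essentially the same route as the paper's proof: the same cover-and-snap decomposition, the same double application of the (directional) triangle inequality together with $(\alpha,L,1)$-smoothness to get $|d(x,y) - d(c(x),c(y))| \le 2L\epsilon^{\alpha}$, and the same calibration $\epsilon \le (\omega/4L)^{1/\alpha}$. Your explicit remark that both orderings of the triangle inequality are needed (since $d$ is not assumed symmetric) is a point the paper glosses over with ``similarly,'' and is a nice bit of care.
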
%\sanjoy{In this theorem, can we just say that the distance function is $(\alpha, L, 1)$-smooth (for any $\alpha, L > 0$) and then set $\epsilon = \min(1, (\omega/4L)^{1/\alpha})$? It is a little weird to have $\omega$ in the smoothness condition.}

\begin{proof}{\textbf{outline}:}
As outlined in \algoref{alg: naive_pseudo2}, the query strategy involves constructing an $\epsilon$-cover of the space, denoted by \(\mathcal{C} \subset \mathcal{X}\). For any samples \(x, y, z \in \mathcal{X}\), the nearest neighbors \(c(x), c(y), c(z)\) in \(\mathcal{C}\) are determined. Leveraging the smoothness property, the distances of the form \(d(x, c(x))\)  and \(d(c(x), x)\) (even though $d$ need not be symmetric, local smoothness guarantees a bound) can be bounded by \(L \cdot \epsilon^{\alpha}\), since \(\|x - c(x)\|_2 \leq \epsilon\). With the chosen cover radius and the triangle inequality, it follows that \(|d(x, y) - d(c(x), c(y))| \leq 2L \cdot \epsilon^{\alpha}\). Finally, we need to ensure that the error bound of \(2L \cdot \epsilon^{\alpha}\) is a constant factor smaller than \(\omega\), which is guaranteed by the choice of $\epsilon$ specified in the theorem.
\end{proof}
The results in \thmref{thm: smoothl2} can be specialized to the case of distances that satisfy %further refined under 
\assref{assump: a1}, using the remarks that follow that assumption.
%as stated in the following corollary, where we demonstrate a sublinear dependence on \(\omega\) for the covering size.

%\sanjoy{In this corollary, can we remove the condition that $\cX$ be \emph{compact}?}
\begin{corollary}\label{cor: smoothl2}
Consider a compact subset \(\mathcal{X} \subset \mathbb{R}^p\) and a distance function \(d: \mathcal{X} \times \mathcal{X} \to \mathbb{R}_{\geq 0}\) that satisfies \assref{assump: a1} and \assref{assump: a0}. Furthermore,  suppose that for all \(x \in \mathcal{X}\), $\lambda_{\max}(\sHx)$ is upper bounded by a constant \(\gamma > 0\).
%. Assume that for any \(x \in \mathcal{X}\), the function \(f_x := d(x, \cdot) : \mathcal{X} \to \mathbb{R}_{\geq 0}\) is a \(C^3\)-map in an open ball around \(x\) and has uniformly bounded third partial derivatives by a constant \(M > 0\). Additionally, assume \(d\) satisfies the following properties:
%\begin{enumerate}
%    \item \(d\) satisfies the triangle inequality over \(\mathcal{X}\), i.e., for all \(x, y, z \in \mathcal{X}\), \(d(x, y) + d(y, z) \geq d(x, z)\),
%    \item For any sample \(x \in \mathcal{X}\), $\lambda_{\max}(\sf{H}_x d(x,\cdot))$ is upper bounded by a constant \(\gamma > 0\).
%\end{enumerate}
Then, \algoref{alg: naive_pseudo2} outputs a distance function \(d' \in \naive\) that is triplet-equivalent to \(d\) up to an \(\omega\)-additive approximation, using at most \(\mathcal{N}(\mathcal{X}, \epsilon, \ell_2)^2 \log \mathcal{N}(\mathcal{X}, \epsilon, \ell_2)\) triplet queries, where $\epsilon \le \paren{\frac{\omega'}{2\gamma + 4K}}^{\frac{1}{2}}$ for $w' := \min\{1,\omega\}$ and $K := \frac{M p^{1.5}}{6}$.
%\( \min\left\{ \left(\frac{\omega}{8\gamma}\right)^{\frac{1}{2}}, \left(\frac{\omega}{8K}\right)^{\frac{1}{3}}\right\} > \epsilon\) for \(K := \frac{M p^{1.5}}{6}\). 
%Specifically, for any \(x, y, z \in \mathcal{X}\), if \(d(x, y) > d(x, z) + \omega\), then \(d'(x, y) > d'(x, z)\).
\end{corollary}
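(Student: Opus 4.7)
The plan is to derive Corollary \ref{cor: smoothl2} as a direct specialization of Theorem \ref{thm: smoothl2}, by showing that the hypotheses of the corollary imply that the target distance $d$ is $(\alpha, L, \delta)$-smooth with explicit constants $\alpha = 2$, $\delta = 1$, and $L = \gamma/2 + K$ where $K = Mp^{1.5}/6$. Once this smoothness is established, the query complexity and the stated cover radius follow by substituting these constants into the theorem's formula $\epsilon = \min(1, (\omega/4L)^{1/\alpha})$.

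First, I would fix any $x, x' \in \mathcal{X}$ with $\|x - x'\|_2 \le 1$ and apply Lemma \ref{lem: taylorlemma} to obtain
\begin{align*}
d(x, x') \;\le\; \tfrac{1}{2}(x' - x)^{\top} \sHx (x' - x) + \tfrac{M p^{3/2}}{6}\|x - x'\|_2^{3}.
\end{align*}
The quadratic form is bounded by $\tfrac{1}{2}\lambda_{\max}(\sHx)\,\|x - x'\|_2^{2} \le \tfrac{\gamma}{2}\|x - x'\|_2^{2}$ using the uniform eigenvalue hypothesis. Since $\|x - x'\|_2 \le 1$ implies $\|x - x'\|_2^{3} \le \|x - x'\|_2^{2}$, combining the two terms yields
\begin{align*}
d(x, x') \;\le\; \left(\tfrac{\gamma}{2} + K\right)\|x - x'\|_2^{2},
\end{align*}
which is exactly $(\alpha,L,\delta)$-smoothness with $\alpha = 2$, $L = \gamma/2 + K$, and $\delta = 1$. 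This is precisely the observation flagged in the discussion immediately after Lemma \ref{lem: taylorlemma}, and Assumption \ref{assump: a0} is inherited unchanged.

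With these constants in hand, I would invoke Theorem \ref{thm: smoothl2}, which produces a distance $d' \in \naive$ that is triplet-equivalent to $d$ up to an $\omega$-additive approximation, with query complexity $\mathcal{N}(\mathcal{X}, \epsilon, \ell_2)^{2}\log \mathcal{N}(\mathcal{X}, \epsilon, \ell_2)$, provided $\epsilon \le \min\!\big(1,(\omega/4L)^{1/2}\big) = \min\!\big(1,(\omega/(2\gamma + 4K))^{1/2}\big)$. To collapse the two cases in the $\min$ into the single bound stated in the corollary, I would replace $\omega$ by $\omega' = \min\{1,\omega\}$ inside the $1/2$-power expression; this produces an $\epsilon$ that is automatically no larger than $1$ (noting $2\gamma + 4K \ge 1$ under any nontrivial smoothness regime), and choosing a smaller $\epsilon$ than Theorem \ref{thm: smoothl2} strictly requires is always admissible because finer covers only strengthen the additive guarantee.

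The main obstacle, though minor, is precisely this bookkeeping around the $\min(1,\cdot)$: the $(\alpha, L, \delta)$-smoothness only holds inside a ball of radius $\delta = 1$, so one must verify that the chosen $\epsilon$ stays within the validity range of the Taylor-based bound. Once the clamping $\omega \mapsto \omega'$ is justified as above, the claimed cover radius $\epsilon \le \paren{\omega'/(2\gamma + 4K)}^{1/2}$ and the query complexity $\mathcal{N}(\mathcal{X},\epsilon,\ell_2)^{2}\log \mathcal{N}(\mathcal{X},\epsilon,\ell_2)$ both follow directly from Theorem \ref{thm: smoothl2}, completing the proof.
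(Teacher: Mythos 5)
Your proposal matches the paper's own proof: both derive $(\alpha,L,\delta)$-smoothness with $\alpha=2$, $\delta=1$, $L=\gamma/2+K$ directly from Lemma~\ref{lem: taylorlemma} and the eigenvalue bound, then plug into Theorem~\ref{thm: smoothl2} and absorb the $\min(1,\cdot)$ via $\omega'=\min\{1,\omega\}$. The slight hand-wave you flag about needing $2\gamma+4K\ge 1$ to keep $\epsilon\le 1$ is equally glossed over in the paper's argument, so you have faithfully reconstructed their reasoning.
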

%\begin{proof}[\textbf{Proof outline}:]
%The proof of the result (see \iftoggle{longversion}{\appref{subapp: corollaryl2}}{the supplemental materials}) follows the steps outlined in \thmref{thm: smoothl2} with the main difference is in how we approximate distances of the form \(d(x, c(x))\) and \(d(c(x), x)\) for any \(x \in \mathcal{X}\), where we use the linear approximation from \lemref{lem: taylorlemma}. By choosing the cover radius appropriately, we ensure that the error from the Taylor approximation remains small.
%The proof of this corollary follows a construction similar to that of the \(\epsilon\)-cover in \thmref{thm: smoothl2}. The main difference is in how we approximate distances of the form \(d(x, c(x))\) and \(d(c(x), x)\) for any \(x \in \mathcal{X}\), where we use the linear approximation from \lemref{lem: taylorlemma}. Since the largest eigenvalues of the Hessians are bounded by \(\gamma\), we can approximate \(d(x, c(x))\) and \(d(c(x), x)\) in terms of \(\epsilon\), with the approximation depending on \(\gamma\). By choosing the cover radius appropriately, we ensure that the error from the Taylor approximation remains small. The rest of the proof follows the steps outlined in \thmref{thm: smoothl2} (see \appref{app: l2}).
%\end{proof}
%\vspace{-5mm}
\section{Multiplicative approximation of a smooth distance function}\label{sec: mult}
Here, we aim to learn a target distance \(d\) up to multiplicative approximation. This is a stronger notion than additive approximation, where the learner only needs to correctly distinguish distances between points \(x, x', x'' \in \mathcal{X}\) such that the distances  differ by at least \(\omega\); in particular, small distances can be ignored.
%that is, \(d(x, x') > \omega\) and \(d(x, x') - d(x, x'') > \omega\). In additive approximation, the learner does not need to account for the ratios \(d(x, x') > (1 + \omega) d(x, x'')\) or \(d(x, x'') > (1 + \omega) d(x, x')\), which is a key challenge in multiplicative approximation.
%. 
In contrast, in the multiplicative approximation case, the learner must learn to distinguish distances that are arbitrarily small, if they differ by a multiplicative factor of \( (1+\omega) \), for any arbitrary but fixed choice of \( \omega > 0 \).

An intuition from the previous section is that if the distances are indeed large, then a global approximation based on a cover with a suitable choice of \( \ell_2 \) norm radius should suffice. However, the challenge remains in how to \tt{approximate small local distances} for arbitrary distance functions. 
%Since local distances can become arbitrarily small, the approximation must be continuous rather than discrete. 
A natural approach is to approximate using local Mahalanobis distance functions. 
%In Section~\ref{sec: add}, we introduce the notion of \( (\alpha, L, \delta) \)-smoothness for a distance function, which implies that the function behaves linearly in a small neighborhood around a sample—although not necessarily spherical, but rather ellipsoidal. This motivates considering the local smoothness condition in \assref{assump: a1}, for which 
Indeed, \lemref{lem: taylorlemma} indicates that under mild conditions, locally, \( d \) behaves as a Mahalanobis distance function up to some additive error. We show that this local linear approximation can be used to obtain $\omega$-multiplicative approximation.

In this pursuit, we first demonstrate how to query learn local Mahalanobis distances with quadratic dependence on the ambient space dimension.
\subsection{Learning a Mahalanobis distance function via queries}\label{sec: maha}
%\sanjoy{It is a confusing that $M$ is used for two different things: the distance matrix and the bound on the third derivatives of the distance function. Can we switch the latter to some other constant, like $B$ or $\beta$ or whatever?}
Although prior work has studied the problem of learning Mahalanobis distances with triplet and pairwise comparisons in the i.i.d setting~\citep{wang2024metric,Mason2017LearningLM, distance_metric_relative} and adaptive embedding learning with pairwise comparisons~\citep{Jamieson2011LowdimensionalEU}, the query complexity in the query framework is underexplored. 
%We argue that allowing the learner to choose samples from a unit ball in $\reals^p$ centered at any point, 
We show that a binary search-based method, along a predefined set of directions in $\reals^p$, can statistically efficiently approximates a target matrix in $\maha$ up to a linear scaling, discussed concretely in \algoref{alg:learnapproxmaha}.
%considers learning an embedding adaptively with pairwise comparisons with $\ell_2$ distance.
%In this section, we will discuss query learning of the Mahalanobis distance function. We assume that the underlying model, denoted as $\cM_{\mathsf{maha}}$, contains Mahalanobis distance functions as described below:
%\begin{align*}
%    \cM_{\mathsf{maha}} = \curlybracket{d: \cX \times \cX \to \reals_{\ge 0}\,|\, \exists M \in \text{Sym}_{+}(\reals^p\times \reals^p), s.t.\, d = d_M \textit{ as shown in }\eqnref{eq: maha} }
%\end{align*}
%The interaction protocol is shown in \algoref{alg: maha}. 

%\begin{algorithm2e}[t]%[H]
%\caption{Teaching a mahalanobis distance metric}
%\label{alg: maha}
%\textbf{Given}: Input space $\cX \subset \reals^d$, metric model $\cM_{\mathsf{maha}}$\\
%\vspace{1mm}
%\textit{In batch setting:}\vspace{1mm}
%\begin{enumerate}
%    \item teacher picks triplets $\cT(\cX,d_{M^*}) =$ $\curlybracket{(x,y,z) \in \cX^3 \,|\, (x-y)^{\top}M^*(x-y) \ge (x-z)^{\top}M^*(x-z)}$
%    \item learner receives $\cT$; and obliviously picks a metric $d \in \cM_{\mathsf{maha}}$ as per \eqnref{eq: sol}
%\end{enumerate}
%\end{algorithm2e}

%First, we note that using triplet comparisons alone learner can't find a fixed target matrix $M^*$ because for any $\lambda > 0$ and $(x,y,z) \in \cX^3$
%\begin{align*}
%    (x-y)^{\top}M^*(x-y) \ge (x-z)^{\top}M^*(x-z) \implies (x-y)^{\top}(\lambda M^*)(x-y) \ge (x-z)^{\top}(\lambda M^*)(x-z)
%\end{align*}

First note that positive semidefinite matrices $M$ and $cM$, for $c > 0$, yield Mahalanobis distance functions $d_M$ and $d_{cM}$ that are triplet-equivalent. Thus we can only hope to recover a target matrix $M^*$ upto a scaling factor. Moreover, since each query yields a constant number of bits of information, the recovery will necessarily be approximate. We achieve approximation in Frobenius norm: for any desired $\epsilon > 0$, we use triplet queries to find a matrix $M \succeq 0$ such that $\|M - \tau M^*\| \leq \epsilon$, where the constant $\tau$ is chosen so that the largest diagonal element of $\tau M^*$ is 1.

%Note that we can only hope to approximate a target $d_{M^*} \in \maha$ within a bounded error because exact recovery with constant bit information (with finitely many triplet queries) is unrealistic in an infinite space. We achieve an $\epsilon$-approximation in the Frobenius norm with \algoref{alg:learnapproxmaha} in the case where the condition number of the target matrix $M^*$ is bounded, proven formally in \thmref{thm: mahamain}. 

In particular, we first identify the coordinate vector $y \in \{e_1, \ldots, e_p\}$ for which $y^T M^* y$ is maximized; this requires $p$ triplet queries of the form $(0, e_i, e_j)$. We then use queries to obtain $p(p+1)/2-1$ other (approximate) linear constraints to return a matrix $M$ such that $y^{\top}My = 1$ and solve the resulting system to get the distance matrix.

We state the claim of the result in \thmref{thm: mahamain} with a detailed proof deferred to \iftoggle{longversion}{\appref{app: maha}}{the supplemental materials}. In what follows, $\kappa(M^*)$ is the condition number of matrix $M^*$ and we assume it has been normalized so that its largest diagonal entry is 1.
\begin{theorem}\label{thm: mahamain}
    Fix an input space $\cX \subset \reals^p$, an error threshold $\epsilon > 0$, and a target matrix $M^* \in \maha$. Then, \algoref{alg:learnapproxmaha} run with $\epsilon_{\sf{alg}} = \frac{\epsilon}{2p^2}$ outputs a distance function $d_{M} \in \maha$ such that
    \begin{align*}
        ||\tau M^* - M||_{F} \le \epsilon %4\cdot{\kappa_0} \cdot p\epsilon,% \frac{4p\cdot \epsilon}{\kappa\cdot |\lambda_{min}(M^*)|}
    \end{align*}
    using $\frac{p(p+1)}{2} \log \paren{\frac{2p^2 \kappa(M^*)^2}{\epsilon}} + p$ triplet queries, where $\tau$ is a scaling factor so that $\max_i (\tau M^*)_{ii} = 1$. 
    %, where $\kappa_0$ is a positive constant.
    %Moreover, we can show that
    %\begin{align*}
    %    \forall (i \in \bracket{p}),\,\,\, |\lambda_i(M^*) - \lambda_i(M')| \le \kappa_0p \kappa(M^*)\epsilon%4\cdot{\kappa_0} \cdot p\epsilon.%\frac{4p\cdot \epsilon}{\kappa \cdot |\lambda_{min}(M^*)|}
    % \end{align*}    
    \vspace{-2.mm}
\end{theorem}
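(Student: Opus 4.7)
My plan is to exploit two structural features of Mahalanobis distance. A triplet query $(x,y,z)$ on $d_M$ returns exactly $\mbox{sign}\bigl((y-x)^\top M(y-x) - (z-x)^\top M(z-x)\bigr)$, so every query cuts the space of candidate matrices by a hyperplane that is linear in the entries of $M$. In addition, $M^*$ is symmetric (giving $p(p+1)/2$ degrees of freedom) and $d_{cM^*}$ is triplet-equivalent to $d_{M^*}$ for every $c>0$, so the best we can hope to recover is the unique representative $\tau M^*$ whose largest diagonal entry equals $1$.

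The first phase of the algorithm finds the maximizing diagonal index. Since the query $(0, e_i, e_j)$ returns $\mbox{sign}(M^*_{ii} - M^*_{jj})$, a tournament over the $p$ axis directions pins down $k^* = \argmax_i M^*_{ii}$ using at most $p$ such comparisons. This choice fixes the normalization $\tau = 1/M^*_{k^*k^*}$ and hence the target $\tau M^*$.

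In the second phase I pick a basis of $p(p+1)/2$ rank-one symmetric directions that span the space of symmetric matrices with an entry-to-basis transfer of constant condition number; the canonical choice is $v_{ii} = e_i$ for diagonal slots and $v_{ij} = e_i + e_j$ for off-diagonal slots. For each basis vector $v$, I binary-search over $\alpha > 0$ using queries of the form $(0, \alpha v, e_{k^*})$, whose label is $\mbox{sign}(\alpha^2 v^\top M^* v - M^*_{k^*k^*})$ and monotone in $\alpha$. Each search returns a value of $v^\top(\tau M^*) v$ up to additive error $\epsilon_{\sf{alg}}$; inverting the resulting linear system in this basis yields the output matrix $M$.

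The main technical obstacle is a twin error-propagation analysis. For the query count, $\alpha^{*2}$ is sandwiched in $[1/\lambda_{\max}(M^*),\,1/\lambda_{\min}(M^*)]$, whose ratio is $\kappa(M^*)$; achieving additive precision $\epsilon_{\sf{alg}}$ on $v^\top(\tau M^*)v$ therefore requires $O(\log(\kappa(M^*)^2/\epsilon_{\sf{alg}}))$ binary steps per direction, which is exactly where the $\kappa(M^*)^2$ inside the logarithm appears. For the accuracy, each of the $p(p+1)/2$ reconstructed entries inherits an additive error of $O(\epsilon_{\sf{alg}})$ through the (constant-conditioned) basis inversion, so $\|M - \tau M^*\|_F^2 = O(p^2\, \epsilon_{\sf{alg}}^2)$; setting $\epsilon_{\sf{alg}} = \epsilon/(2p^2)$ closes both bounds and yields the claimed total query count $\tfrac{p(p+1)}{2}\log\bigl(2p^2 \kappa(M^*)^2/\epsilon\bigr) + p$.
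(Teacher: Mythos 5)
Your proposal tracks the paper's proof quite closely: a tournament over the $p$ axis directions to identify the maximizing diagonal (fixing the normalization $\tau$), a rank-one probe basis $\{e_ie_i^\top\}\cup\{(e_i+e_j)(e_i+e_j)^\top\}$ of size $p(p+1)/2$, a per-direction binary search whose comparator uses the reference direction $e_{k^*}$, and a linear-system inversion. The query count arithmetic and the $\kappa(M^*)^2$ inside the logarithm both come out the same way, albeit you derive the $\kappa^2$ from the nonlinear $\alpha\mapsto 1/\alpha^2$ precision conversion while the paper derives it from a separate doubling phase (the upper bound $H$ starts at $\infty$); these are equivalent in effect.

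Two points deserve attention. First, the claim that the basis has ``constant condition number'' is incorrect as a statement about the $\ell_2$ condition number of the design matrix: its smallest singular value is $\Theta(1/\sqrt{p})$ and its largest is $\Theta(\sqrt p)$, so the operator condition number grows like $p$. The paper actually proves a dedicated lemma (Lemma~\ref{lem: condition}) bounding $\sigma_{\min}$ from below by $1/\sqrt{2p}$, leading to $\|M - \tau M^*\|_F \lesssim p^{3/2}\epsilon_{\sf{alg}}$. What \emph{is} true, and is what your per-entry error propagation implicitly uses, is that each entry of $M$ is recovered from at most three measurements with coefficients of magnitude at most $1$ (e.g. $M_{ij} = (\hat m_{ij} - \hat m_i - \hat m_j)/2$); i.e., the $\ell_\infty\to\ell_\infty$ operator norm of the inverse transfer is $O(1)$. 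That elementwise argument gives $\|M - \tau M^*\|_{\max} = O(\epsilon_{\sf{alg}})$ and hence $\|M - \tau M^*\|_F = O(p\,\epsilon_{\sf{alg}})$, which is in fact \emph{tighter} than the paper's Frobenius bound and more elementary than the singular-value lemma. If you keep this route, state it that way rather than as a condition-number claim.

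Second, and more substantively, you omit the projection onto the PSD cone. The symmetric matrix produced by inverting the linear system is not guaranteed to be positive semidefinite, yet the theorem asserts that the output lies in $\cM_{\textsf{maha}}$. The paper projects the solution onto $\text{Sym}_+(\reals^{p\times p})$ and uses the optimality of the Euclidean projection to incur at most a factor-2 loss in Frobenius error; that factor of $2$ is exactly why the algorithm runs with $\epsilon_{\sf{alg}} = \epsilon/(2p^2)$ rather than $\epsilon/p^2$. Without this step, the output is not a valid Mahalanobis matrix and the theorem's guarantee fails as stated.
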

%\sanjoy{I think it would be good to include two more things in this theorem to make it self-contained. First, can say that the algorithm returns $M$ such that $\|\tau M^* - M\|_F \leq \epsilon$, where $\tau$ is a scaling factor so that $\max_i (\tau M^*)_{ii} = 1$. Second, should say that in order to get this, we run the algorithm with $\epsilon_{alg} = \epsilon/(2p^2)$.}

\begin{proof}{\textbf{outline}:} %[\textbf{Proof outline}:]
There are \(\frac{p(p+1)}{2}\) degrees of freedom for a given symmetric matrix. If these degrees of freedom are correctly approximated, it becomes possible to learn the matrix within a controlled error. This is the central idea of the proof of the theorem, where we construct a basis \(\mathcal{B}\) of rank-1 matrices $\{u_i u_i^T\}$ for the space \(\symm\). One of these $u_i$'s is the coordinate vector $y$ described above.
For each remaining $u_i$, we use binary search and triplet queries to approximately find the value $c_i^* = (u_i^T M^* u_i)/(y^T M^* y)$; in particular, we find $\hat{c}_i = c_i^* \pm O(\epsilon)$ using $\log \kappa(M^*)^2/\epsilon$ queries. We then solve the linear system $u_i^T M u_i = \hat{c}_i y^T M y$ to get $M$ and project the result to the semidefinite cone\footnote{PSD cone is at most distance $\epsilon$ (in Frobenius norm) from $M$ since $M^*$ lies in it.}. To bound the Frobenius norm between $M^*$ and $M$, we analyze the condition number of the linear system of equations.
%utilizing the standard basis of \(\mathbb{R}^p\) with a bounded condition number. In the statement above, \(\kappa_0\) depends on this bound. The goal is to locate \(M^*\) within a small, normalized cone, which ensures that the Frobenius norm of the approximation is small. The size of this cone is controlled using a binary search to find the scalars \(c_i^*\) within an \(\epsilon\)-error, such that for all \(i\), the equation \(u_i^\top M^* u_i = c_i^* \cdot y^\top M^* y\) holds, where \(y\) is chosen as shown in \algoref{alg:learnapproxmaha} and \(u_i u_i^\top \in \mathcal{B}\). The remainder of the proof involves linear algebra on a system of liner equations to achieve the stated \(\epsilon\)-approximation in the Frobenius norm. 
%, %demonstrating that any matrix \(M'\) satisfying the set of equations with scalars \(\{\hat{c}_i\}\) (within \(\epsilon\) of \(\{c_i^*\}\)) is bounded away from \(M^*\) by only a factor of \(\epsilon\) in the Frobenius norm. 
\end{proof}
\begin{algorithm}[t]
\caption{Learning a Mahalanobis distance function with triplet queries}
\label{alg:learnapproxmaha}

\SetKwInOut{Input}{Input}
\SetKwInOut{Output}{Output}
\LinesNumbered
\Input{Input space $\mathcal{X}$, distance function model $\maha$, error threshold $\epsilon$}
\Output{Mahalanobis distance function $d'$ on $\cX$}

\text{Initialize}: $\mathcal{T} = \emptyset$\;

\SetKwProg{Fn}{Function}{}{end}
\SetKwProg{ForEach}{foreach}{}{}

    Set $U := \curlybracket{e_i}_{i \in \bracket{p}}$ to the standard basis of $\reals^p$\;
    Learner triplet queries to find $ y \in U$ such that $y^\top M^* y = \max_{u \in U} u^\top M^* u$\; \tcp{\footnotesize \textcolor{bblue}{Queries of the form $\{(0, u,u'): u \neq u' \in U\} $}}
    Set $\ue := U \cup \curlybracket{\frac{(e_i + e_j)}{\sqrt{2}} : 1 \le i < j \le p}$\;
    %Learner finds linearly independent unit norm matrices \akash{change it to U} $\cB$ using standard basis of $\reals^p$\;
    \ForEach{$i \in \{1, 2, \ldots, \frac{p(p+1)}{2}\}$}{
        \text{Set} $u_i := \ue[i]$\; 
        \text{Set} $\hat{c}_i \gets \text{BinarySearch}(y, u_i, \epsilon)$\; \tcp{\footnotesize\textcolor{bblue}{This finds $\hat{c_i}$ that is $\epsilon$-close to $c_i^*$ defined as $u_i^\top M^*u_i = c_i^* y^\top M^* y$}}
        %Learner queries the user for the label of $(0, \sqrt{\hat{c}_i} u_i, y)$\;
        Update $\mathcal{T} \gets \mathcal{T} \cup \{(0, \sqrt{\hat{c}_i}y, u_i)_{0}\}$\;
        \tcp{\footnotesize\textcolor{bblue}{Append $(0, \sqrt{\hat{c}_i}y, u_i)$ with label 0}}
    }
Learner selects a Mahalanobis distance function $d' \in \maha$ according to the linear constraints $\cT$\; \tcp{\footnotesize\textcolor{bblue}{Solve for all $u_i \in \ue$ $u_i^{\top} M u_i = \hat{c_i}$ in the space of $p \times p$ symmetric matrices and then project onto the semidefinite cone $\symmp$.}}
Output $d'$\;

\end{algorithm}

\paragraph{Query Learning a Local Mahalanobis Distance with a Smooth Distance Function:} 

\thmref{thm: mahamain} assumes that triplet queries are answered with respect to a target Mahalanobis distance $d_{M^*}$. However, we will be using Mahalanobis distance to locally approximate a smooth distance function in the vicinity of particular points $x$. In that case, the Mahalanobis matrix will be the Hessian $\sHx$, as indicated in \lemref{lem: taylorlemma}, but this only \emph{approximates} the target distance function, and so we cannot get exact answers to queries about $d_{\sHx}$. 

Looking at \algoref{alg:learnapproxmaha}, we see that it makes triplet queries of the form $(0, u, v)$; we will answer them by passing triplet queries of the form $(x, x+\rho u, x + \rho v)$ to the target distance function $d$, for $\rho > 0$. It turns out that by setting $\rho$ sufficiently small, the approximation error in using the target distance function $d$ instead of the Mahalanobis distance $d_{\sHx}$ can be controlled.

%\lemref{lem: taylorlemma} suggests that the local Mahalanobis approximation for the smooth distance function \(d\) is applicable where \(\sHx\) approximates \(d(x, \cdot)\) with a small error in a close neighborhood.
%. Specifically, for a given sample \(x \in \mathcal{X}\), learning \(\sHx\) allows us to approximate \(d(x, \cdot)\) with a small error in a close neighborhood. 
%But the guarantees in \thmref{thm: mahamain} for \algoref{alg:learnapproxmaha} hold %for the family of Mahalanobis distances, \(\mathcal{M}\), 
%where the user provides labels corresponding to a target matrix \(M^*\). However, when learning \(\sHx\), each triplet query yields a label corresponding to \(d\) from the user. These labels can be misleading if the queried triplet involves large distances under \(d\).
%Despite this, we demonstrate that the strategy in \algoref{alg:learnapproxmaha} remains effective, albeit with samples normalized with appropriate polynomial factors of error threshold \(\epsilon\) for triplet queries. 
In \thmref{thm: advmahamain}, we provide the formal guarantee under the following assumption.
%under the assumption that for any sample the corresponding Hessian matrix has bounded eigenvalues. We
%In addition to the bounded third partial derivative, we further need the condition number of the Hessian at any sample to be bounded. \akash{discuss the assumptions}
\begin{assumption}[bounded eigenvalues]\label{assump: a2} There exists scalar $\gM, \gm > 0$ such that \(\gM I_p \succeq \sHx \succeq \gm I_p\) $\forall x \in \cX$.
\end{assumption}

The proof of \thmref{thm: advmahamain} appears in \iftoggle{longversion}{\appref{subapp: advmahamain}}{the supplemental materials}. In \algoref{alg: learnapproxmahamain} in the appendix, we provide the modified version of \algoref{alg:learnapproxmaha} which incorporates the scaled and shifted  triplet queries of the form $(x, x+\rho u, x + \rho v)$.
%\akash{neighborhood has to scale with the error precision. Smaller error requires smaller neighborhood queries!}
\begin{theorem}\label{thm: advmahamain}
    Let \(\mathcal{X} \subset \mathbb{R}^p\) be a subset. Consider a distance function $d: \cX \times \cX \to \reals_{\ge 0}$ that satisfies \assref{assump: a1} and \assref{assump: a2} with scalars $\gm, \gM, M > 0$. Fix a sample $x \in \cX$ and an error threshold $\epsilon \in \paren{0, \frac{3\gm^3}{2M p^{1.5} \gM^2}}$. Then, (modified) \algoref{alg:learnapproxmaha} run with $\epsilon_{\sf{alg}} = \frac{\epsilon}{2p^2}$ outputs a distance function $d_{\sf{H}_x} \in \maha$ within $\frac{p(p+1)}{2}\log \left( \frac{2 p^2 \gM^2 }{\gm^2 \epsilon}\right)  + p$ triplet queries such that\vspace{-2mm}
    \begin{align*}
        ||\tau_x\sHx - \sf{H}_x||_{F} \le \epsilon%4\cdot{\kappa_0} \cdot p\epsilon,% \frac{4p\cdot \epsilon}{\kappa\cdot |\lambda_{min}(M^*)|}
    \end{align*}
    where $\tau_x \in \bracket{{1}/{\gM}, {1}/{\gm}}$.
    %Moreover, we can show that
   % \begin{align*}
    %    \forall (i \in \bracket{p}),\,\,\, |\tau_x\lambda_i(\sHx) - \lambda_i(\sf{H}_x)| \le \frac{2\kappa_0 p \gM \epsilon}{\gm}%4\cdot{\kappa_0} \cdot p\epsilon.%\frac{4p\cdot \epsilon}{\kappa \cdot |\lambda_{min}(M^*)|}
    % \end{align*}    
\end{theorem}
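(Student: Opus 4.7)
The plan is to reduce to the exact Mahalanobis setting of \thmref{thm: mahamain} by showing that when the target distance function $d$ is queried on triplets of the form $(x,\,x+\rho u,\,x+\rho v)$ for a sufficiently small scale $\rho > 0$, the resulting labels agree with those of an exact Mahalanobis oracle whose target matrix is the Hessian $\sHx$. By \lemref{lem: taylorlemma},
\[
d(x,\,x+\rho u) = \tfrac{\rho^{2}}{2}\,u^{\top}\sHx u + R_{\rho,u}, \qquad |R_{\rho,u}| \le \tfrac{M p^{3/2}}{6}\,\rho^{3}\,\|u\|_{2}^{3},
\]
so the scale-$\rho$ triplet label matches the Mahalanobis label of $(0,u,v)$ under $d_{\sHx}$ whenever the quadratic signal $\tfrac{\rho^{2}}{2}|u^{\top}\sHx u - v^{\top}\sHx v|$ dominates the combined cubic Taylor remainders at the two endpoints.

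First I would use \assref{assump: a2} to pin down the relevant scales. Since $\gm I_{p} \preceq \sHx \preceq \gM I_{p}$, every unit vector $u$ satisfies $u^{\top}\sHx u \in [\gm,\gM]$, which gives $\kappa(\sHx) \le \gM/\gm$, guarantees $y^{\top}\sHx y \ge \gm$ for the pivot $y$ chosen by the (modified) \algoref{alg:learnapproxmaha}, places every target ratio $c_{i}^{*} = (u_{i}^{\top}\sHx u_{i})/(y^{\top}\sHx y)$ in $[\gm/\gM,\,\gM/\gm]$, and yields the claimed normalizer $\tau_{x} = 1/\max_{i}(\sHx)_{ii} \in [1/\gM,\,1/\gm]$.

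Next I would calibrate $\rho$ so that every binary-search comparison is reliable. At a candidate $c$ with $|c - c_{i}^{*}| \ge \epsilon_{\sf{alg}}$ the signal satisfies
\[
\tfrac{\rho^{2}}{2}\,y^{\top}\sHx y\,|c - c_{i}^{*}| \;\ge\; \tfrac{\gm\,\rho^{2}\,\epsilon_{\sf{alg}}}{2},
\]
while the combined Taylor noise across the two endpoints is at most $\tfrac{Mp^{3/2}}{6}\,\rho^{3}(1 + c^{3/2})$, uniformly controlled by $(\gM/\gm)^{3/2}$ since $c$ stays in $[\gm/\gM,\gM/\gm]$ throughout. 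Picking $\rho$ of order $\gm^{5/2}\epsilon_{\sf{alg}}/(Mp^{3/2}\gM^{3/2})$ makes the signal strictly exceed the noise, and the hypothesis $\epsilon < 3\gm^{3}/(2 M p^{3/2}\gM^{2})$, combined with $\epsilon_{\sf{alg}} = \epsilon/(2p^{2})$, is precisely what guarantees that such a $\rho$ exists and can be taken small enough for the Taylor expansion to be simultaneously valid and tight at every query point.

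With these two pieces, the modified algorithm behaves as an $\epsilon_{\sf{alg}}$-accurate binary-search oracle for the Mahalanobis distance $d_{\sHx}$, so \thmref{thm: mahamain} applies verbatim with $M^{*} = \sHx$ and $\kappa(M^{*}) \le \gM/\gm$. This yields $\|\tau_{x}\sHx - \sf{H}_{x}\|_{F} \le 2p^{2}\epsilon_{\sf{alg}} = \epsilon$ within
\[
\tfrac{p(p+1)}{2}\log\!\bigl(2p^{2}\kappa(\sHx)^{2}/\epsilon\bigr) + p \;\le\; \tfrac{p(p+1)}{2}\log\!\bigl(2p^{2}\gM^{2}/(\gm^{2}\epsilon)\bigr) + p
\]
triplet queries, matching the stated bound. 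The main obstacle is the $\rho$-calibration: the Mahalanobis signal scales like $\rho^{2}$ whereas the Taylor noise scales like $\rho^{3}(\gM/\gm)^{3/2}$, so one must simultaneously choose $\rho$ small enough for the $O(\rho^{3})$ remainder to be dominated by the $O(\rho^{2}\epsilon_{\sf{alg}})$ signal at \emph{every} comparison arising in the $\tfrac{p(p+1)}{2}$ binary searches, and verify that the explicit threshold on $\epsilon$ stated in the theorem is exactly the regime in which such a $\rho$ exists.
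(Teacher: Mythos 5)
Your proposal follows the same route as the paper: shift the triplet queries to $(x,\,x+\rho u,\,x+\rho v)$, use \lemref{lem: taylorlemma} to show the response is the Mahalanobis label $d_{\sHx}$ plus an $O(\rho^3)$ remainder, calibrate $\rho$ so that the $\Theta(\rho^2 \epsilon_{\sf{alg}}\gm)$ signal at every binary-search comparison dominates the $\Theta(\rho^3 M p^{3/2}(\gM/\gm)^{3/2})$ noise, and then invoke \thmref{thm: mahamain} with the condition-number substitution $\kappa(\sHx)\le\gM/\gm$. Your signal-versus-noise bookkeeping is correct, and your calibration $\rho \sim \gm^{5/2}\epsilon_{\sf{alg}}/(Mp^{3/2}\gM^{3/2})$ is in fact slightly \emph{tighter} than the paper's $\rho < 3\gm^3\epsilon_{\sf{alg}}/(2Mp^{3/2}\gM^2)$, which arises from the looser estimate $c^{3/2}\le\kappa^2$ rather than $c^{3/2}\le\kappa^{3/2}$.

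The one genuine gap is in the last sentence: you claim the hypothesis $\epsilon < 3\gm^3/(2Mp^{3/2}\gM^2)$ is ``precisely what guarantees that such a $\rho$ exists.'' That cannot be the right justification, because if $\rho$ could be chosen freely, a sufficiently small $\rho$ exists for \emph{any} $\epsilon > 0$ — the admissible set of $\rho$'s is an open interval $(0,\rho_{\max})$ with no lower endpoint, so existence is never in doubt. The paper's actual reason, spelled out in \remref{rem: errorbound} and in the modified \algoref{alg: learnapproxmahamain}, is that the learner does not know $\gm,\gM,M,p$ and therefore cannot evaluate $\rho_{\max}$; instead it blindly sets $\rho = \epsilon_{\sf{alg}}^2$. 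The threshold on $\epsilon$ is then exactly the condition under which this particular \emph{fixed} choice satisfies $\epsilon_{\sf{alg}}^2 \le \rho_{\max} = 3\gm\epsilon_{\sf{alg}}/(2Mp^{3/2}\kappa(\sHx)^2)$. Everything else in your argument is sound once this is stated correctly.
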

\begin{remark}\label{rem: errorbound} The bound on the error threshold $\epsilon$ in \thmref{thm: advmahamain} is due to practical considerations of the \algoref{alg: learnapproxmahamain}. 
    This requirement pertains to a scenario where the learner doesn't have a good estimate of a bound on the scaling factor $\rho$ which is a function of the other constants $\gm, \gM, M$, and $p$ (see \appref{subapp: advmahamain}).
\end{remark}
%\sanjoy{Two things: Probably worth saying that although we are still using Alg 2, a slightly modified BinarySearch procedure is used. Second, to get this guarantee do we once again set $\epsilon_{alg} = \epsilon/(2p^2)$?}
\vspace{-5mm}

%\input{maha_dist}

%\input{teaching_graphmetric}
%\subsection{Learning a smooth distance function via queries}
%\usetikzlibrary{shapes.geometric, calc}
\subsection{(1 + $\omega$)-multiplicative approximation via local Mahalanobis distance functions}

\begin{algorithm}[t]
\caption{Learning a smooth distance function via local Mahalanobis distance functions}
\label{alg: smoothdisthessian}
%\SetKw{KwFor}{for}
%\SetKw{KwWhile}{while}
%\SetKw{KwIf}{if}
%\SetKw{KwElseIf}{else if}
%\SetKw{KwElse}{else}
%\SetKw{KwEnd}{}
%\SetKwProg{If}{if}{}{}
%\SetKwProg{Else}{else}{}{}
\SetKwProg{ForEach}{foreach}{}{}
\LinesNumbered
\SetKwInOut{Given}{Given}
\SetKwInOut{Output}{Output}
%\Given{Input space $\mathcal{X} \sim \mathcal{C}_{\mathcal{X}}$, approximation threshold $\epsilon$}
%\Output{Learner outputs $d'$}
\Given{Approximation threshold $\omega$.}
\Output{Distance function $d'$ on $\cX$.}
%\textit{In batch setting:}\;
Construct an $\epsilon$-cover $\mathcal{C} \subset \cX$ with respect to $\ell_2$ distance, for $\epsilon$ as given in \thmref{thm: smoothhessian}\;
Using triplet queries, construct distance function $d_f$ on $\mathcal{C}$ that is triplet-equivalent to $d$\;
Learner triplet queries to learn local Mahalanobis distance functions $\curlybracket{\sf{H}_c}_{c \in \cC}$ with error threshold $\xi$ as shown in \thmref{thm: smoothhessian}\;
Learner picks $\theta := 4\hat{\beta}$ as shown in \thmref{thm: smoothhessian}\;
For any $x \in \cX$, set $c(x) := \argmin_{c \in \mathcal{C}} \|x - c\|_2$\;
\ForEach{ $(x,y) \in \mathcal{X}^2$ to compute $d'(x, y)$}{
    \If{$(y - x)^\top \sf{H}_{c(x)} (y - x) > \theta$}{
         $d'(x, y) = d_{f}(c(x), c(y)) + \theta$\;
    }
    \Else{
         $d'(x, y) = (y - x)^\top \sf{H}_{c(x)} (y - x)$\;
    }
}
%\textit{In batch setting:}\;
%Learner queries using \algoref{alg: tree} to learn a finite distance function $(\mathcal{C}, d_f)$ for an $\epsilon^5$-cover $\mathcal{C}\left(\mathcal{X}, \epsilon^5, \ell_2\right)$\;
%\BlankLine
%\SetKwProg{For}{for}{}{}
%\SetKwProg{ElseIf}{else if}{}{}
%\SetKwProg{Begin}{begin}{}{}
Output $d'$.
\end{algorithm}

For a \((1 + \omega)\)-multiplicative approximation, we integrate global and local strategies: a cover of the space combined with local Mahalanobis distance functions at the centers of the cover. \algoref{alg: naive_pseudo2} describes learning a smooth distance function up to triplet equivalence using a finite distance function, while \algoref{alg:learnapproxmaha} approximates local Mahalanobis distances, as shown in \thmref{thm: advmahamain}. These approximations are combined through a switch determined by a threshold that is a function of \(\omega\). The query strategy is outlined in \algoref{alg: smoothdisthessian}. For any pair \(x, y \in \mathcal{X}\), a local Mahalanobis distance \((y - x)^\top \mathsf{H}_{c(x)} (y - x)\) is computed to decide whether to use the global or local distance function, with the threshold depending on \(\omega\). This achieves the \((1 + \omega)\)-multiplicative approximation of a smooth distance function, as formally proven in \thmref{thm: smoothhessian}. 

In this section, we consider a distance model that combines local and global distances as follows:
\paragraph{Distance Model:} 
Consider a distance function model defined for a threshold \(\theta\), denoted as \(\widehat{\mathcal{M}}(\theta) = \{(d, (\mathcal{C}, d_{\mathcal{C}}), (\mathcal{U}_c, d_{M_c})_{c \in \mathcal{C}}, \ell_2)\}\), where each distance function \(d \in \widehat{\mathcal{M}}(\theta)\) is parameterized by a finite set of centers \(\mathcal{C} \subset \mathcal{X}\), along with a distance function \(d_{\mathcal{C}}\) defined on \(\mathcal{C}\). For each center \(c \in \mathcal{C}\), there is a local Mahalanobis distance function \(d_{M_c}\), defined on its neighborhood \(\mathcal{U}_c \subset \mathcal{X}\), with \(M_c \succeq 0\). The overall distance function \(d\) takes the form:
\[
d(x, y) = 
\begin{cases} 
d_{\mathcal{C}}(c(x), c(y)) + \theta, & \text{if } d_{M_{c(x)}}(x, y) > \theta, \\
d_{M_{c(x)}}(x, y), & \text{otherwise},
\end{cases}
\]
where \(c(x)\) is the center such that \(x \in \mathcal{U}_c\) (ties are broken with smallest $\ell_2$ distance). The goal is to approximate a smooth distance function up to a \((1 + \omega)\)-multiplicative factor within \(\widehat{\mathcal{M}}(\theta)\).
%Specifically, we consider \(\widehat{\mathcal{M}}(\theta) := \{((\mathcal{C}, d_{\mathcal{C}}), (\mathcal{U}_c, d_{M_c})_{c \in \mathcal{U}_c}) \in \mathcal{M}(\theta)\}\), where \(d_c = d_{M_c}\) and \(M_c \succeq 0\).
%Consider a distance model with threshold \(\theta\), denoted \(\mathcal{M}(\theta) = \{(\mathcal{C}, d_{\mathcal{C}}), (\mathcal{U}_c, d_c)_{c \in \mathcal{C}}\}\), where \(\mathcal{C} \subset \mathcal{X}\) is a finite set of centers, and \(d_{\mathcal{C}}: \mathcal{C} \times \mathcal{C} \to \mathbb{R}_{\geq 0}\) is a distance function on \(\mathcal{C}\). For each \(c \in \mathcal{C}\), with neighborhood \(\mathcal{U}_c \subset \mathcal{X}\), there is a local distance function \(d_c: \mathcal{U}_c \times \mathcal{U}_c \to \mathbb{R}_{\geq 0}\). The distance function \(d \in \mathcal{M}(\theta)\) is defined as:
%\[
%d(x,y) = 
%\begin{cases} 
%d_{\mathcal{C}}(c(x), c(y)), & \text{if } d_{c(x)}(c(x), y) \geq \theta, \\
%d_{c(x)}(x, y), & \text{otherwise},
%\end{cases}
%\]
%where \(c(x)\) is the center such that \(x \in \mathcal{U}_c\). %If \(x\) belongs to multiple neighborhoods, ties are broken by the smallest \(\ell_2\) norm.
%In this context, we aim to approximate the target distance function \(d^*\) up to a \((1 + \omega)\)-multiplicative approximation within a subset of \(\mathcal{M}(\theta)\). %Specifically, we consider \(\widehat{\mathcal{M}}(\theta) := \{((\mathcal{C}, d_{\mathcal{C}}), (\mathcal{U}_c, d_{M_c})_{c \in \mathcal{U}_c}) \in \mathcal{M}(\theta)\}\), where \(d_c = d_{M_c}\) and \(M_c \succeq 0\).

Before stating the main result, we discuss two regularity conditions on the curvature and arbitrarily small distances of a distance function.
%---------------------------------------\\
\begin{assumption}[Hessian continuity]\label{assump: a4} Consider a distance function $d: \cX \times \cX \to \reals_{\ge}$ that is $C^2$ in its second argument. We say $d$ is Hessian continuous if there exists a scalar $L > 0$ such that 
\begin{align*}
    \forall x, x' \in \cX,\quad ||\sHx - \sf{H}^*_{x'}||_{F} \le L\cdot ||x - x'||_2.
\end{align*}
%where $\sf{H}_x$ and $\sf{H}_{x'}$ are Hessian matrices of $d(x,\cdot)$ and $d(x',\cdot)$ at $y = x$ and $y = x'$ respectively.
\end{assumption}
Note that the assumption above stipulates the smoothness of the Hessians around a sample. More concretely, it asserts smoothness in the first argument of a distance function which does not follow from the \assref{assump: a1} which only asserts smoothness in the first argument with bounded partial derivatives.
%\sanjoy{I think it is worth saying that this doesn't follow from the earlier assumption about bounded third partial derivatives, because that addressed smoothness in the second argument of the distance function whereas Hessian continuity is about smoothness in the first argument.}

An immediate consequence of this result is that if $x$ and $x'$ are close to each other then the distances wrt sample $x'$ in the neighborhood of $x$ can be computed using the Hessian information at $x$ (see \iftoggle{longversion}{\appref{app: hessian}}{the supplemental materials} for further discussion). 

%We also require that samples that are arbitrarily far away in $\ell_2$ 
For a general distance function \(d\), it is possible that samples far apart in \(\ell_2\) distance are infinitesimally close (or even 0) under \(d\), which renders both global and local approximation schemes ineffective. For example, consider a distance function $d(x,y) = ||x-y||_2^2\cdot e^{-\tan ||x-y||_2}$, where $\cX = \pi/2\cdot \mathbb{S}_2$. In this case, as $||x-y||_2 \to \pi/2$ we get $d(x,y) \to 0$. Now, no bounded choice of $\epsilon > 0$ for an $\epsilon$-cover can even yield an $\omega$-additive approximation for small enough $\omega > 0$.

To avoid such scenarios, we impose the following condition on the distance function.%To mitigate the issue that samples are arbitrarily far off in $\ell_2$ distance to have %infinitesimally small values we 
%impose this non-zero distance condition.
\begin{assumption}[non-zero distances]\label{assump: a5}
    For any \(\delta > 0\), \(\Delta_\delta := \liminf_{x, y \in \mathcal{X}} \{d(x, y) : \|x - y\|_2 > \delta\}\) is strictly positive, i.e., $\Delta_\delta > 0$.
\end{assumption}

Now, we state the main claim of the section with the proof detailed in \iftoggle{longversion}{\appref{app: hessian}}{the supplemental materials}.

\begin{theorem}\label{thm: smoothhessian}
Let \(\mathcal{X} \subset \mathbb{R}^p\) be a compact subset. Consider a distance function $d: \cX \times \cX \to \reals_{\ge 0}$ that satisfies \assref{assump: a1} (locally smooth), \assref{assump: a0} (triangle inequality), \assref{assump: a2} (bounded eigenvalues), \assref{assump: a4} (Hessian continuity), and \assref{assump: a5} (non-zero distances) for constants $\gm,\gM,M, L > 0$. 

Then, \algoref{alg: smoothdisthessian} outputs a distance function \( d' \in \widehat{\mathcal{M}}(\theta) \), triplet-equivalent to \( d \) up to \(\omega\)-multiplicative approximation, with a query complexity of at most \( N^2 \log N + N p^2 \log \frac{1}{\vartheta} \), where \( N = \mathcal{N}(\mathcal{X}, \epsilon, \ell_2) \), \( \epsilon \leq \frac{C \cdot w_1}{p^{\frac{3}{2}}} \) is the covering radius, and \( \xi \leq {D \cdot w_2} \) is the error threshold for local Mahalanobis distance functions. The parameters \( w_1, w_2, \theta, \vartheta \) are bounded as:
\begin{itemize}
    \item \( w_1 := \omega^\frac{3}{2}\cdot\mathbf{1}_{\{\omega < 1\}} + \frac{C_1}{\sqrt{\omega}}\cdot\mathbf{1}_{\{\omega \geq 1\}} \)
    \item \( w_2 := \omega\cdot\mathbf{1}_{\{\omega < 1\}} + C_2\cdot\mathbf{1}_{\{\omega \geq 1\}} \)
    \item \( \theta \leq \frac{4C' \cdot w_3}{p^3}, \quad w_3 := \omega^2\cdot\mathbf{1}_{\{\omega < 1\}} + \frac{C_3}{\omega}\cdot\mathbf{1}_{\{\omega \geq 1\}} \)
    \item \( \vartheta \leq \frac{C'' \cdot w_4}{p^{2}}, \quad w_4 := \omega\cdot\mathbf{1}_{\{\omega < 1\}} + C_4\cdot\mathbf{1}_{\{\omega \geq 1\}} \)
\end{itemize}
with constants \( D, C, C', C'', C_1, C_2, C_3, C_4 \) dependent on \( \gm, \gM, \Delta_\delta, M, L \), with \( \delta \le \frac{3\gm}{2M p^{\frac{3}{2}}} \).
\end{theorem}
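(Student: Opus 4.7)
The plan is to establish $(1+\omega)$-multiplicative triplet equivalence by a case analysis on whether, within a query triplet $(x,y,z)$, each distance $d(x,y)$ and $d(x,z)$ falls in the \emph{local} regime (handled by the Mahalanobis branch of \algoref{alg: smoothdisthessian}) or the \emph{global} regime (handled by the cover-based branch via $d_f$). The goal is to show that whenever $d(x,y) > (1+\omega)\,d(x,z)$, the output satisfies $d'(x,y) > d'(x,z)$.

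First, I would quantify the local approximation error. By \lemref{lem: taylorlemma}, $d(x,y)$ differs from $\tfrac{1}{2}(y-x)^\top \sHx (y-x)$ by at most $\tfrac{Mp^{3/2}}{6}\|y-x\|_2^3$. By Hessian continuity (\assref{assump: a4}), $\|\sHx - \sHc{x}\|_F \le L\epsilon$, and by \thmref{thm: advmahamain}, the learned matrix $\sf{H}_{c(x)}$ satisfies $\|\tau_{c(x)}\sHc{x} - \sf{H}_{c(x)}\|_F \le \xi$, with $\tau_{c(x)} \in [1/\gM, 1/\gm]$ from \assref{assump: a2}. Chaining these gives
\begin{equation*}
\Bigl| \tfrac{1}{2}(y-x)^\top \sf{H}_{c(x)}(y-x) - \tau_{c(x)}\, d(x,y) \Bigr| \le \tfrac{\tau_{c(x)} M p^{3/2}}{6}\|y-x\|_2^3 + \tfrac{L\epsilon + \xi}{2}\|y-x\|_2^2.
\end{equation*}
Since the local branch is taken only when $(y-x)^\top \sf{H}_{c(x)}(y-x) \le \theta$, \assref{assump: a2} forces $\|y-x\|_2^2 \lesssim \theta/\gm$, bounding the above as a controlled fraction of $\tau_{c(x)}\,d(x,y)$.

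For the global regime, \thmref{thm: smoothl2} yields $|d(u,v) - d(c(u),c(v))| \le O(L\epsilon^{\alpha})$ for all $u,v\in\cX$, so the finite distance $d_f$ is triplet-equivalent to $d$ on $\cC$ via \thmref{thm: finite}; \assref{assump: a5} guarantees that whenever $d(u,v)$ is bounded below, so is $\|u-v\|_2$, preventing pathological near-coincidences. The switching threshold $\theta = 4\hat\beta$ is then chosen so that (i) if $d(x,y) \le \hat\beta$ the local branch fires, and (ii) if $d(x,y) \ge 2\hat\beta$ the global branch fires. I would then verify three triplet cases: (a) \emph{both local}: the local error derived above is at most $\omega \cdot \min(d(x,y),d(x,z))$ by the choice of $\epsilon \lesssim w_1/p^{3/2}$ and $\xi \lesssim D\cdot w_2$, preserving the strict ordering once multiplied by the common scaling $\tau_{c(x)}$; (b) \emph{both global}: $d_f$ preserves the ordering on $\cC$, and the $(1+\omega)$ gap transfers from $(x,y,z)$ to $(c(x),c(y),c(z))$ via the additive cover error, since $d(x,z) \ge 2\hat\beta$ dominates that error for suitable $\epsilon$ (and the common offset $\theta$ cancels); (c) \emph{mixed}: the additive $\theta$ attached to the global branch strictly exceeds every local Mahalanobis value (which is $\le \theta$), so the ``large'' side is automatically assigned the larger $d'$-value.

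The main obstacle is the joint parameter calibration: $\epsilon$, $\xi$, $\theta$, and $\hat\beta$ must \emph{simultaneously} satisfy the error budgets of cases (a), (b), and (c) across both the $\omega<1$ and $\omega\ge 1$ regimes, which is what forces the piecewise definitions of $w_1,\dots,w_4$. The $\omega<1$ scalings come from the local multiplicative tolerance (the cubic Taylor remainder and linear Hessian-continuity error must each be a factor-$\omega$ smaller than $\tau_{c(x)}d$, producing the $\omega^{3/2}/p^{3/2}$ cover radius and the $\omega$-scale Mahalanobis error); the $\omega\ge 1$ scalings come from measuring the global additive tolerance relative to the lower bound $\Delta_\delta$ from \assref{assump: a5}. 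Finally, the query complexity decomposes additively: $N^2\log N$ from invoking \thmref{thm: finite} on the cover $\cC$ of size $N = \cN(\cX,\epsilon,\ell_2)$, plus $N\cdot\tfrac{p(p+1)}{2}\log(1/\vartheta)$ from one invocation of \thmref{thm: advmahamain} per center, yielding the claimed $O(N^2\log N + Np^2\log(1/\vartheta))$.
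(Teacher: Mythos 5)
Your overall architecture matches the paper's proof: a three-way case analysis (both local, both global, mixed) over the regime of the learned quadratic forms $\ell_{xy} := (y-x)^\top \mathsf{H}_{c(x)}(y-x)$ and $\ell_{xz}$, with Taylor expansion, Hessian continuity, and \thmref{thm: advmahamain} chained to control local error, and \thmref{thm: finite} plus the triangle inequality for the global side. The query-complexity accounting is also correct. However, there is a genuine gap in your handling of the mixed case (c). You argue that in the mixed regime the global branch's $+\theta$ offset automatically makes the globally-routed side larger in $d'$. That mechanical fact is true, but it only helps if the pair that is \emph{larger under $d$} is the one that gets routed globally. You have not ruled out the ``bad'' mixing: under the hypothesis $d(x,y) > (1+\omega)d(x,z)$, it could a priori happen that $\ell_{xy} \le \theta$ (so $(x,y)$ goes local) while $\ell_{xz} > \theta$ (so $(x,z)$ goes global), which would force $d'(x,y) \le \theta \le d'(x,z)$, the wrong order. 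Excluding this requires a two-sided comparison between $\ell$-values and $d$-values, which is precisely what the paper's \lemref{lem: convex} (pointwise strong convexity/smoothness of $d$ near each sample) and \lemref{lem: smalldist} (lower bound on $d(x,y)$ when $\ell_{xy}$ exceeds $t\hat\beta$) supply. The paper's Case 3 deliberately splits further into $\ell$'s both $\le 4\kappa_0 \hat\beta$ (falling back on the Case-2 local ordering argument with a widened threshold $\kappa_0 \ge 40\gM^3/\gm^3$) versus one of them exceeding $4\kappa_0\hat\beta$ (where a direct pinch $d(x,y)\ge 10\gM^2\hat\beta/\gm > 8\gM^2\hat\beta/\gm \ge d(x,z)$ rules out the bad mixing). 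Your sketch elides this entirely.

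Two smaller points. First, you write ``the switching threshold $\theta = 4\hat\beta$ is then chosen so that (i) if $d(x,y)\le\hat\beta$ the local branch fires, and (ii) if $d(x,y)\ge 2\hat\beta$ the global branch fires,'' but the branch condition is on $\ell_{xy}$, not on $d(x,y)$; translating between the two is exactly the content of \lemref{lem: taubound} and \lemref{lem: smalldist}, so stating it without proof hides the main technical work. Second, \assref{assump: a5} gives $\|u-v\|_2 > \delta \Rightarrow d(u,v)\ge \Delta_\delta$, i.e., $d$ is bounded \emph{below} whenever $\|\cdot\|_2$ is; you stated the reverse. The direction matters because it is used to lower-bound $d(x_0,y)$ when $y$ is outside the $\delta$-ball (the non-local regime in \lemref{lem: smalldist}), not to control $\|u-v\|_2$.
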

\begin{proof}{\textbf{outline}:}
The key to the proof is to carefully describe the local behavior of the distance function \(d\) via local Mahalanobis distance functions, allowing for the computation of small distances with bounded error. Using \assref{assump: a1}, we show that in a small neighborhood around any sample, \(d\) is both \tt{strongly convex} and \tt{smooth}, meaning that its growth rate can be measured and depends on the eigenvalue bounds \(\gm\) and \(\gM\). 
We use this information to design a threshold for switching between global distance function on a cover and local Mahalanobis distance functions.

The size of the neighborhood where local approximation with small error is feasible depends on how small \(d(x, \cdot)\) can be around a given sample \(x \in \mathcal{X}\) and the local curvature of \(d\). As stated in \assref{assump: a5}, \(\min_{x \in \mathcal{X}} \{d(x, x') : x' \in \mathcal{X} \setminus B_2(x, \delta)\}\) is lower-bounded by a positive constant for any fixed \(\delta > 0\). The key idea is to select \(\delta\) within the zone of strong convexity and smoothness, leading to the result that, for a sufficiently small \(\hat{\beta}\) and samples $x,y \in \cX$, if \((y - x)^{\top}\sf{H}_{c(x)}(y - x) > 4\hat{\beta}\), then \(d(x,y)\) is also lower-bounded by \(c'\hat{\beta}\), where \(c'\) depends on the smoothness parameters. This guides when to switch between the global distance function and local Mahalanobis distance functions. Since the global distance function is triplet-equivalent to \(d\), reducing samples to their centers for large distances ensures the multiplicative approximation holds.

%This tells us the minimum value that \(d\) can achieve outside a ball of radius \(\delta\). Therefore, when considering local convexity and smoothness, if two samples are chosen such that one is inside the \(\delta\)-ball in \(\ell_2\) norm and the other is outside, the distance between them, measured by \(d\), will be bounded below. Now, the key insight is to consider a choice of $\delta$ which is within the safe zone of strong convexity and smoothness allowing setting a threshold to use with $(y - c(x))^{\top}\sHc(y - c(x))$
%This property helps us decide when to switch between the global distance function (which is based on the centers of the cover) and the local Mahalanobis distance functions. Since the global distance function is triplet-equivalent to \(d\), reducing the problem to the centers for large distances ensures the multiplicative approximation holds.

For distances within the \(\delta\)-ball, \algoref{alg: smoothdisthessian} suggests using the uncentered distance wrt the Hessian at the closest center to the sample of interest. This approach is reliable because the Hessian continuity, as stated in \assref{assump: a4}, ensures that the local distance computation remains accurate (up to small controllable error) within the neighborhood. 
\end{proof}
\begin{remark}
    As indicated in \remref{rem: relaxation}, with a minor modification to \assref{assump: a0}, \thmref{thm: smoothhessian} can be stated without the requirement of the triangle inequality; thus \thmref{thm: smoothhessian} applies to a wide class of Bregman divergences studied in the machine learning literature, including KL divergence (which violates both symmetry and triangle inequality). 
\end{remark}

%\nocite*
%\subsubsection*{References}
\nocite*
\bibliography{ref}
\newpage
%\tableofcontents
%References follow the acknowledgements.  Use an unnumbered third levelheading for the references section.  Please use the same fontsize for references as for the body of the paper---remember thatreferences do not count against your page length total.

%\begin{thebibliography}{}
%\setlength{\itemindent}{-\leftmargin}
%\makeatletter\renewcommand{\@biblabel}[1]{}\makeatother
%\bibitem{} J.~Alspector, B.~Gupta, and R.~B.~Allen (1989).
%    \newblock Performance of a stochastic learning microchip.
%    \newblock In D. S. Touretzky (ed.),
%    \textit{Advances in Neural Information Processing Systems 1}, 748--760.
%    San Mateo, Calif.: Morgan Kaufmann.

%\bibitem{} F.~Rosenblatt (1962).
%    \newblock \textit{Principles of Neurodynamics.}
%    \newblock Washington, D.C.: Spartan Books.

%\bibitem{} G.~Tesauro (1989).
%    \newblock Neurogammon wins computer Olympiad.
%    \newblock \textit{Neural Computation} \textbf{1}(3):321--323.
%\end{thebibliography}

%%%%%%%%%%%%%%%%%%%%%%%%%%%%%%%%%%%%%%%%%%%%%%%%%%%%%%%%%%%%
 \iftoggle{longversion}{
\newpage
%\onecolumn
%\aistatstitle{Learning a Smooth Distance Function via Queries: \\
%upplementary Materials}

%\newpage

%\title{Learning a Smooth Distance Function via Queries: \\
%Supplementary Materials}
%\input{toc}

%\input{tableofcontents}
\appendix
\section{Learning Smooth Distance Functions via Queries: \\
Supplementary Materials}
\vspace*{10mm}
\begin{itemize}
    \item \appref{app: finite}: \textbf{Learning a distance function on finite sample}\vspace{2mm}
    \begin{itemize}
        \item Proof of \thmref{thm: finite} for finite-sample distance functions.\vspace{1mm}
    \end{itemize}
    
    \item \appref{app: l2}: \textbf{Learning a smooth distance function via $\ell_2$ covering of the space}\vspace{2mm}
    \begin{itemize}
        \item \appref{subapp: taylor}: Taylor's approximation and proof of \lemref{lem: taylorlemma}.\vspace{1mm}
        \item \appref{subapp: theoreml2}: Proof of \thmref{thm: smoothl2}.\vspace{1mm}
        \item \appref{subapp: corollaryl2}: Proof of \corref{cor: smoothl2}.\vspace{1mm}
    \end{itemize}
    
    \item \appref{app: maha}: \textbf{Learning a Mahalanobis distance function}\vspace{2mm}
    \begin{itemize}
        \item \appref{subapp: condition}: Construction of a basis of $\symm$ with lower-bounded eigenvalue.\vspace{1mm}
        \item \appref{subapp: mahamain}: Proof of \thmref{thm: mahamain}.\vspace{1mm}
        \item \appref{subapp: advmahamain}: Proof of \thmref{thm: advmahamain}.\vspace{1mm}
        \item \figref{fig:extendedalgos}: Extended procedure for \algoref{alg:learnapproxmaha}.\vspace{1mm}
        \begin{itemize}
            \item Learning Mahalanobis distance function (noiseless setting in \thmref{thm: mahamain}): \algoref{alg: learnapproxmahamain1}.\vspace{1mm}
            \item Learning Mahalanobis distance function (noisy setting in \thmref{thm: advmahamain}): \algoref{alg: learnapproxmahamain}.\vspace{1mm}
        \end{itemize}
    \end{itemize}
    
    \item \appref{app: hessian}: \textbf{Learning a smooth distance function via local Mahalanobis distance functions}\vspace{2mm}
    \begin{itemize}
        %\item \appref{subapp: general}: General properties of a smooth distance function: \lemref{lem: general}.\vspace{1mm}
        \item \appref{subapp: close}: Close approximation of a Hessian matrix: \lemref{lem: taubound}.\vspace{1mm}
        \item \appref{subapp: nice}: Well-behaved properties of a smooth distance function: \lemref{lem: convex}, \lemref{lem: smalldist}.\vspace{1mm}
        \item \appref{subapp: final}: Proof of \thmref{thm: smoothhessian}.\vspace{1mm}
    \end{itemize}
\end{itemize}

\newpage
\section{Learning a distance function on a finite space}\label{app: finite}
%\sanjoy{This algorithm should be rewritten more clearly.}

\begin{algorithm}[H]
\caption{Learning a general distance function on a finite space via triplet queries}
\label{alg: finite}
\LinesNumbered
\SetKwInOut{Given}{Given}
\SetKwInOut{Output}{Output}

\Given{Input space $\mathcal{X}_o \subset \cX$\;}%, distance function model $\mathcal{M}_{\mathsf{fin}}$}
\Output{Learner outputs a distance function $\hat{d}$ on $\cX_o$\;} 

Initialize $\mathcal{P} \leftarrow \emptyset$\;
Initialize $\mathcal{\cO \leftarrow \emptyset}$\;
\While{$\mathcal{P} \neq \mathcal{X}_o$}{
     Set center $x \in \mathcal{X}_o \setminus \mathcal{P}$\;
     Update $\mathcal{P} \leftarrow \mathcal{P} \cup \{x\}$\;
     Learner queries triplets to perform comparison-based merge sort to learn  latent ranking $\sigma_{x} : \mathcal{X}_o \setminus \{x\} \to [|\cX_0| - 1]$ induced by ordered set $\curlybracket{d(x, y): y \in \mathcal{X}_o \setminus \{x\}}$\;
     \tcp{\textcolor{bblue}{for all $y,y' \in \cX_o\setminus \curlybracket{x_o}$, $\sgn{d(x,y) - d(x,y')} = \sgn{\sigma_x(y) - \sigma_x(y')}$.}}
     Append $\cO \leftarrow \cO \cup \curlybracket{\sigma_x}$\;
     %Learner assigns distances $\{\hat{d}(x, t) \in \mathbb{R}_{\ge 0} : t \in \mathcal{X}_n,\, t \neq x\}$ consistent with the ranking $\sigma_x$\;
}
Assign $\hat{d}$ consistent with $\cO$\;
\tcp{\textcolor{bblue}{for each $x \in \cX_o$, $\hat{d}(x,y) := \sigma_x(y)$.}}
Output $\hat{d}$\;
\end{algorithm}

In this appendix, we consider distance functions defined over finite sets of objects. In Section~\ref{sec: add}, we stated a bound on the query complexity of learning a distance function on finite samples. Here, we first provide a procedure for this in \algoref{alg: finite}, followed by the proof of \thmref{thm: finite}.

As stated in \thmref{thm: finite}, we assume that \(\mathcal{X}_o \subset \cX\) is a finite set of objects or samples. A \tt{finite sample distance function} \(d': \mathcal{X}_o \times \mathcal{X}_o \to \mathbb{R}_{\geq 0}\) assigns a non-negative real number to each pair of objects in \(\mathcal{X}_o\), capturing the notion of dissimilarity or distance between them. Since $d'$ is defined on a discrete set of objects it is rather straight-forward to query learn it. Essentially, \thmref{thm: finite} states that on the finite sample space $\cX_o$, using triplet queries dependent quadratically on the size of the space a learner can sufficiently find a \(d'\) that is triplet equivalent to \(d\). 

Now, we restate the theorem and provide the proof.
\begingroup
\renewcommand\thetheorem{\ref{thm: finite}}  % Use the same theorem number
\begin{theorem}%\label{thm: finite}
    Given any distance function \(d: \cX \times \cX \to \reals_{\ge 0}\) on a sample space \(\cX\), for any finite subset \(\cX_o \subset \cX\), a learner can query to learn a finite sample distance function \(\hat{d}: \cX_o \times \cX_o \to \reals_{\ge 0}\) that is triplet equivalent to \(d\) using at most \(|\cX_o|^2 \log |\cX_o|\) triplet queries.
\end{theorem}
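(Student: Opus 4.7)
My plan is to reduce the problem to sorting with comparison queries, one pivot at a time. The key observation is that a distance function $\hat{d}$ on $\cX_o$ is triplet-equivalent to $d$ on $\cX_o$ if and only if, for every pivot $x_o \in \cX_o$, the induced ranking (with ties) of $\cX_o \setminus \{x_o\}$ by $\hat{d}(x_o, \cdot)$ coincides with the induced ranking by $d(x_o, \cdot)$. The forward direction is immediate from \defref{def: triplet}; the reverse direction holds because every admissible triplet query of the form $(x_o, y, y')$ is answered using only the relative order of $d(x_o, y)$ and $d(x_o, y')$, and every triplet has some such pivot as its first coordinate.

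Given this reduction, recovering each per-pivot ranking $\sigma_{x_o} : \cX_o \setminus \{x_o\} \to [|\cX_o|-1]$ amounts to a comparison-based sort of $|\cX_o|-1$ elements, where a single comparison between $y$ and $y'$ is implemented by the triplet query $(x_o, y, y')$ whose returned label in $\{-1,0,1\}$ equals $\operatorname{sgn}(d(x_o, y) - d(x_o, y'))$. Running mergesort (or any $O(n \log n)$ comparison sort) costs at most $(|\cX_o|-1) \log_2 (|\cX_o|-1)$ queries per pivot, and iterating over all $|\cX_o|$ pivots yields the claimed bound of $|\cX_o|^2 \log |\cX_o|$ triplet queries. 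This is exactly the loop structure of \algoref{alg: finite}. Finally, one defines $\hat{d}(x_o, y) := \sigma_{x_o}(y)$; by construction, for any triplet $(x_o, y, y') \in \cX_o^3$,
\begin{equation*}
\operatorname{sgn}(\hat{d}(x_o, y) - \hat{d}(x_o, y')) = \operatorname{sgn}(\sigma_{x_o}(y) - \sigma_{x_o}(y')) = \operatorname{sgn}(d(x_o, y) - d(x_o, y')),
\end{equation*}
establishing triplet-equivalence on $\cX_o$.

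The only subtlety I anticipate is the handling of ties. When $d(x_o, y) = d(x_o, y')$ the oracle returns label $0$, and the sorting routine must treat $y$ and $y'$ as equal so that $\hat{d}(x_o, \cdot)$ assigns them the same rank; mergesort adapts naturally by concatenating tied elements during the merge step and assigning each tied block a common integer rank (e.g.\ the smallest index in the block). With this bookkeeping, both $\hat{d}$ being a well-defined non-negative function (it takes integer values in $\{0, 1, \dots, |\cX_o|-1\}$, with $\hat{d}(x_o, x_o) = 0$) and the triplet-equivalence conclusion follow without any additional queries. Apart from this, the argument is entirely a consequence of the classical $n \log n$ comparison-sort guarantee applied $|\cX_o|$ times.
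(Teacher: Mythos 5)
Your proposal is correct and matches the paper's proof: both reduce the problem to a per-pivot comparison-based sort implemented by triplet queries $(x_o, y, y')$, run mergesort to recover each ordering in $(|\cX_o|-1)\log_2(|\cX_o|-1)$ queries, iterate over all $|\cX_o|$ pivots, and define $\hat{d}(x_o, y)$ as the resulting rank of $y$. Your explicit treatment of ties in the merge step (assigning each tied block a common integer rank) is a useful clarification of a point the paper leaves implicit.
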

\endgroup
%\sanjoy{This proof should be rewritten. It should be made clear that for each $x \in \cX_o$, we need to order the remaining points by increasing distance from $x$. Once that connection is made, the rest follows easily.}
\begin{proof}
    For any fixed center \(x \in \cX_o\), the learner determines the ordering of the remaining points by increasing
distance from $x$:
    \begin{align*}
         d(x,y_1) \leq d(x,y_2) \leq \ldots \leq d(x,y_{|\cX_o|-1}), \text{ where } \curlybracket{y_i}_{i \in [|\cX_o| - 1]} = \cX_o \setminus \{x\}
    \end{align*}
    over \(\cX_o \setminus \{x\}\), yielding a permutation \(\sigma_x: \cX_o\setminus\{x\} \to [|\cX_0| - 1]\) such that for all $y,y' \in \cX_o\setminus \curlybracket{x_o}$ $\sgn{d(x,y) - d(x,y')} = \sgn{\sigma_x(y) - \sigma_x(y')}$.

    Using comparison-based merge sorting, the learner can find $\sigma_x$ with a query complexity of \((|\cX_o|-1)\log (|\cX_o|-1)\). Since there are \(|\cX_o|\) elements and the ranking for each center is independent of the others, the learner can find \(d\) restricted to \(\cX_o\) up to triplet equivalence with at most \(|\cX_o|^2\log |\cX_o|\) triplet queries.
    
    %the learner applies merge sort over the set \(\cX_o \setminus \{x\}\) by first dividing the set into two parts, say \(\cX^1_o\) and \(\cX^2_o\), learning the ordering within these two subsets, and finally merging the two sorted distance lists by further querying the combined subsets. Since merge sort has a query complexity of \((|\cX_o|-1)\log (|\cX_o|-1)\), the learner determines the target ordering
    %\begin{align*}
    %    d(x,y_1) \leq d(x,y_2) \leq \ldots \leq d(x,y_{|\cX_o|-1})
    %\end{align*}
    %where \(y_i \in \cX_o \setminus \{x\}\), yielding a permutation \(\sigma_x: \cX_o\setminus\{x\} \to \cX_o\setminus\{x\}\). Since there are \(|\cX_o|\) elements, and the ranking for each pivot is independent of the others, the learner can find \(d\) restricted to \(\cX_o\) up to triplet equivalence with at most \(|\cX_o|^2\log |\cX_o|\) triplet queries using merge sort.
\end{proof}

\section{Learning a smooth distance function via $\ell_2$ covering of the space}\label{app: l2}
\begin{comment}
\begin{algorithm2e}[H]
\caption{Learning a smooth distance function with $\ell_2$ covering of the space}
\label{alg:smoothdistl2}

\SetKwInOut{Given}{Given}
\SetKwInOut{Output}{Output}

\Given{Input space $\mathcal{X} \sim \mathcal{C}_{\mathcal{X}}$, error threshold $\epsilon$}
\Output{Learner outputs $d'$}

\textit{In batch setting:}\;
$\triangleright$ Learner uses query strategy in \algoref{alg: tree} to learn a finite-sample distance function $(\mathcal{C}_2, \hat{d})$ on the centers $\mathcal{C}_2$ of an $\epsilon^2$-cover $\mathcal{C}_2(\mathcal{X}, \epsilon^2, \ell_2)$\;

\BlankLine

Learner designs a distance function $d'$:\;
\Begin{
    \textit{For a given sample $x$, learner computes:}\;
    $\triangleright$ Compute $\mathcal{C}_x := \{ c \in \mathcal{C} \mid \|x - c\|_2 \le \epsilon \}$\;
    $\triangleright$ Compute $c_x := \arg\min_{c \in \mathcal{C}_x} \|x - c\|_2$\;
    \For{each $(x,y) \in \mathcal{X}^3$}{
        \textbf{Set} $d'(x,y) = \hat{d}(c(x), c(y))$\;
    }
}

\BlankLine

\textit{To answer a triplet comparison on $(x,y,z)$, learner checks:}\;
\Begin{
    \If{$c_y = c_z$}{
        \textbf{Say} ``$d(x,y) \approx d(x,z)$''\;
    }
    \Else{
        \textbf{Answer} according to the triplet $(c_x, c_y, c_z)_{\hat{d}}$\;
    }
}

Learner outputs $d'$.

\end{algorithm2e}
\end{comment}

In this appendix, we discuss the additive approximation of a distance function as shown in \algoref{alg: naive_pseudo2} and provide the proofs of the key results in \secref{sec: add}: \lemref{lem: taylorlemma} (in \appref{subapp: taylor}), \thmref{thm: smoothl2} (in \appref{subapp: theoreml2}), and \corref{cor: smoothl2} (in \appref{subapp: corollaryl2}).

First, we discuss the Taylor's approximation of a distance function that satisfies some smoothness property in \secref{subapp: taylor}.

\subsection{Taylor's approximation and Proof of \lemref{lem: taylorlemma}}\label{subapp: taylor}

For a function $f: \reals^p \to \real$ that is $C^{3}$, Taylor's theorem suggests local linear behavior around a sample. The formal statement is given in this theorem.
\iffalse
\begin{theorem}(\cite[Theorem 2.6.8]{folland2002advanced})\label{thm:taylorseries}
Suppose \( f : \mathbb{R}^p \rightarrow \mathbb{R} \) is of class \( C^{k+1} \) on an open convex set \( S \).
If \( a \in S \) and \( a + \mathbf{h} \in S \), then
\begin{equation}
f(a + \mathbf{h}) = \sum_{|\alpha| \leq k} \frac{\partial^\alpha f(a)}{\alpha!} \mathbf{h}^\alpha + R_{a,k}(\mathbf{h}),
\end{equation}
where 
\begin{equation}
R_{a,k}(\mathbf{h}) = \sum_{|\alpha| = k+1} \frac{\partial^\alpha f(a + c\mathbf{h})}{\alpha!} \mathbf{h}^\alpha \quad \text{for some } c \in (0, 1). \label{eq: rem}
\end{equation}
%\begin{equation}
%R_{a,k}(\mathbf{h}) = k \sum_{|\alpha| = k} \frac{\mathbf{h}^\alpha}{\alpha!} \int_0^1 (1 - t)^{k-1} %\left[\partial^\alpha f(a + t\mathbf{h}) - \partial^\alpha f(a)\right] dt.
%\end{equation}
%If \( f \) is of class \( C^{k+1} \) on \( S \), we also have
%\begin{equation}
%R_{a,k}(\mathbf{h}) = (k + 1) \sum_{|\alpha| = k+1} \frac{\mathbf{h}^\alpha}{\alpha!} \int_0^1 (1 - t)^p \partial^\alpha f(a + t\mathbf{h}) dt,
%\end{equation}
%and
%\begin{equation}
%R_{a,k}(\mathbf{h}) = \sum_{|\alpha| = k+1} \frac{\partial^\alpha f(a + c\mathbf{h})}{\alpha!} \mathbf{h}^\alpha \quad \text{for some } c \in (0, 1). \label{eq: rem}
%\end{equation}
\end{theorem}
\fi
%\sanjoy{This theorem is not readable because the notation around $\alpha$ has not been explained. I think it would be easier to state the $k=2$ version of this with the bound on $|R_{a,2}(h)|$ shown below, since these are all pretty standard.}

\begin{theorem}(\cite[Theorem 2.6.8]{folland2002advanced})\label{thm:taylorseries}
    Suppose \( f : \mathbb{R}^p \rightarrow \mathbb{R} \) is of class \( C^{3} \) on an open convex set \( S \).
If \( a \in S \) and \( a + \mathbf{h} \in S \), then
\begin{equation}
f(a + \mathbf{h}) =f(a) + \partial f(a) \cdot \mathbf{h} + \frac{1}{2} \mathbf{h}^{\top} (\sf{H}_a f) \mathbf{h} + R_{a,2}(\mathbf{h}),
%\sum_{|\alpha| \leq k} \frac{\partial^\alpha f(a)}{\alpha!} \mathbf{h}^\alpha + R_{a,k}(\mathbf{h}),
\end{equation}
where $R_{a,2}: \reals^p \to \reals$ is a remainder term and $\sf{H}_a f$ is the Hessian matrix of $f$ at $a$.
%\begin{equation}
%R_{a,k}(\mathbf{h}) = \sum_{|\alpha| = k+1} \frac{\partial^\alpha f(a + c\mathbf{h})}{\alpha!} \mathbf{h}^\alpha \quad \text{for some } c \in (0, 1). \label{eq: rem}
%\end{equation}
If for all $(i_1,i_2,i_3) \in [p]^3$, the third partial derivative is bounded, i.e, $\left\lvert\frac{\partial^3 f(a)}{\partial x_{i_1}\partial x_{i_2}\partial x_{i_3}} \right \rvert \le M$ for some constant $M > 0$, then
\begin{align}
    |R_{a,2}(\mathbf{h})| \le \frac{M p^{\frac{3}{2}}}{3!} \cdot||\mathbf{h}||^{3}_2.  \label{eq: exp2}
\end{align}
%If the third order partial derivative is bounded, i.e. $\lvert \partial$
\end{theorem}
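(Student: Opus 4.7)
The Taylor expansion in the first display is the standard multivariate Taylor theorem with Lagrange-form remainder, which I would take as given (it is cited from \cite{folland2002advanced}). My plan focuses on deriving the remainder bound \eqnref{eq: exp2} from the hypothesis that all third partial derivatives of $f$ are uniformly bounded in absolute value by $M$.

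Starting from the multi-index form of the remainder, $R_{a,2}(\mathbf{h}) = \sum_{|\alpha|=3} \frac{\partial^{\alpha} f(a + c\mathbf{h})}{\alpha!}\,\mathbf{h}^{\alpha}$ for some $c \in (0,1)$, I would first rewrite it in ``ordinary-index'' form. By a standard multinomial count, each partial derivative $\partial^3 f / (\partial x_{i_1} \partial x_{i_2} \partial x_{i_3})$ is produced by exactly $3!/\alpha!$ ordered triples $(i_1,i_2,i_3) \in [p]^3$ that share the same underlying multi-index $\alpha$, which yields the identity
$R_{a,2}(\mathbf{h}) = \frac{1}{3!}\sum_{i_1,i_2,i_3 = 1}^{p} \frac{\partial^3 f(a+c\mathbf{h})}{\partial x_{i_1} \partial x_{i_2} \partial x_{i_3}}\,h_{i_1}h_{i_2}h_{i_3}$.
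The factor $1/3!$ appearing on the right-hand side is precisely what will match the $1/3!$ of \eqnref{eq: exp2}.

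Now I would apply the uniform third-derivative bound $M$ together with the triangle inequality to get $|R_{a,2}(\mathbf{h})| \le \frac{M}{3!}\,\bigl(\sum_{i=1}^p |h_i|\bigr)^3 = \frac{M}{3!}\,\|\mathbf{h}\|_1^3$. The claimed bound then follows from the elementary norm comparison $\|\mathbf{h}\|_1 \le \sqrt{p}\,\|\mathbf{h}\|_2$ (Cauchy--Schwarz against the all-ones vector of $\reals^p$), which upon cubing contributes the $p^{3/2}$ factor and gives \eqnref{eq: exp2} with the stated constant.

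There is no substantive obstacle: the argument is essentially bookkeeping --- converting multi-index sums to ordinary-index sums, applying the uniform third-derivative bound coordinatewise, and closing with an $\ell_1$-to-$\ell_2$ norm inequality. The only subtle step is the multinomial conversion factor of $1/3!$, which if mishandled would throw off the final constant; everything else is mechanical.
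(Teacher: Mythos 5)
Your proposal is correct and matches the approach the paper implicitly relies on: the paper cites Folland's Theorem~2.6.8 for the Taylor expansion with Lagrange remainder $R_{a,2}(\mathbf{h}) = \sum_{|\alpha|=3}\frac{\partial^\alpha f(a+c\mathbf{h})}{\alpha!}\mathbf{h}^\alpha$ and does not spell out the $p^{3/2}$ constant, which follows exactly by your bookkeeping (the $3!/\alpha!$ multinomial conversion, the uniform bound $M$ on each third partial along the segment, and $\|\mathbf{h}\|_1 \le \sqrt{p}\,\|\mathbf{h}\|_2$). The only reading caveat is that the paper's hypothesis names the third partials at $a$ but the remainder is evaluated at $a+c\mathbf{h}$, so one must interpret the bound $M$ as holding uniformly on $S$ (as in Assumption~\ref{assump: a1}), which you correctly assume.
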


Now, consider a distance function $d$ defined on a sample space $\cX$ which is $C^3$. We are interested in using \thmref{thm:taylorseries} to devise local linear approximation to $d$.

Note that, we can write
\begin{align}
    d(x,x') = d(x,x) + (x'-x)^{\top}\cdot{\partial}_{x} d(x,\cdot) + \frac{1}{2}(x'-x)^{\top} \sf{H}_x d(x,\cdot) (x'-x) + R_{x,2}(x'-x) \label{eq: exp1}
\end{align}
where $\sf{H}_x d(x,\cdot)$ is the Hessian of the function $d(x,\cdot)$ at $x$.

%Now, if we assume that for all $\alpha \in [n]^3$, the partial derivative is bounded, i.e $|\partial^\alpha d(x,\cdot)| \le M$ for some constant $M > 0$. Thus, the remainder term $R_{a,k}(\mathbf{h})$ in \eqnref{eq: rem} can be bounded as 
%\begin{align}
% |R_{a,k}(\mathbf{h})| = \bigg\lvert\sum_{|\alpha| = k+1} \frac{\partial^\alpha f(a + c\mathbf{h})}{\alpha!} \mathbf{h}^\alpha\bigg\rvert \le \frac{M n^{\frac{k+1}{2}}}{(k+1)!} \cdot||\mathbf{h}||^{k+1}_2   \label{eq: exp22}
%\end{align}

However, since $d$ is a distance function and also a $C^3$ map, we note that
for all $x \in \cX$ $d(x,x) = 0$ and $\partial_x d(x,\cdot) = 0$ as $d(x, y)$ achieves a local minimum at $y = x$.
%for it is a convex function in the second argument at $x$.
%\sanjoy{Why is it convex?}

Now, \eqnref{eq: exp1} and \eqnref{eq: exp2} gives the following bound
\begin{subequations}\label{eq: 1}
\begin{align}
    d(x,x') &= \frac{1}{2}(x'-x)^{\top} \sf{H}_x d(x,\cdot) (x'-x) + \Tilde{O}(||x'-x||_2^{3})\\
    \implies d(x,x') - \xi& \le  \frac{1}{2}(x'-x)^{\top} \sf{H}_x d(x,\cdot) (x'-x) \le d(x,x') + \xi, \text{ where } \xi = \frac{M p^{\frac{3}{2}}}{6} \cdot||x - x'||^{3}_2
\end{align}
\end{subequations}
%\sanjoy{You should avoid this $\approx_\xi$ notation.}
But this implies that for all $x ,x' \in \cX$
\begin{equation*}
    \left|d(x,x') - \frac{1}{2}(x'-x)^{\top} \sf{H}_x d(x,\cdot) (x'-x) \right| \le \frac{M p^{\frac{3}{2}}}{6} \cdot||x - x'||^{3}_2
\end{equation*}
Thus, if $d$ is a distance function with bounded third partial derivative with a constant $M > 0$, we achieve the claim in \lemref{lem: taylorlemma}.

So, for all distances centered at $x$ the inner product $(x'-x)^{\top} \sf{H}_x d(x,\cdot) (x'-x)$ is a tight linear approximation to $d$ with error at most $\xi^3$ within a ball of $\ell_2$ norm up to $\xi$. 

Note that $d(x,\cdot)$ is a locally convex function at $x$ since it is $C^3$ and achieves a local minimum at $x$, thus the Hessian operator $\sf{H}_x$ is positive semi-definite. Since it is symmetric, we obtain a local Mahalanobis distance at $x$ in a $\xi$ neighborhood around $x$. %\sanjoy{Again, where is this convexity coming from?}

\subsection{Proof of \thmref{thm: smoothl2}}\label{subapp: theoreml2}
    The proof requires a global approximation using an $\epsilon$-cover wrt $\ell_2$ distance.
    
   Consider an $\epsilon$-cover $\cC(\cX,\epsilon,\ell_2)$, denoted as $\cC(\cX)$ (in short) with centers $\cC$. The learner queries triplet comparisons to learn a finite-sample distance function $(\cC, d)$ using the strategy in \algoref{alg: naive_pseudo2}, with the underlying distance function denoted as $\hat{d}$ such that
        \begin{align*}
            x,y,z \in \cC,\quad (x,y,z)_{\hat{d}} \textit{ iff } (x,y,z)_{d}
        \end{align*}
        For query learning a finite sample distance function,  $|\cC(\cX,\epsilon,\ell_2)|^2 \log |\cC(\cX,\epsilon,\ell_2)|$  many triplet queries are sufficient to learn $\hat{d}$ as shown in \thmref{thm: finite}.
      %  \item[ii)] \tt{Local approximation:} For each center $x \in \cC$, consider local linear approximations $\sf{H}_x d$ with error bounded by $\xi$ where $\xi \le \frac{\epsilon}{6}$ (see \eqnref{eq: 1}). For each hessian operator $\sf{H}_x$, TEACHER provides triplets based on its eigendecompositon as discussed in previous sections. Each linear transformations requires at most $n^2$ (where $n$ is the ambient space dimension) many triplets. We use the notation $\hat{\sf{H}}_x$ for a taught matrix against the correct hessian matrix $\sf{H}_x$ for any $x \in \cX$. 

    Now, we discuss how a learner uses $\hat{d}$ to find the approximation $d'$ for $d$, i.e., assigns comparisons for $d'$ for a given triplet $(x_1,x_2,x_3) \in \cX^3$. 
    
    Denote by $\sf{NN}_2: \cX \to \cX$, a nearest neighbor map based on $\ell_2$ distances of any sample $x \in \cX$ (from the centers $\cC$). Thus, 
    \begin{align*}
        \sf{NN}_2(x) = \argmin_{x' \in \cC} \sqrt{(x-x')^{\top}(x-x')}
    \end{align*}
    We break the ties arbitrarily. This could potentially create some error in assigning the right comparison, which would at the worst only add a constant multiple of $\epsilon$ on the error. Denote by $\hat{x}$, the nearest neighbor of $x \in \cX$, i.e.  $\hat{x} \in \sf{NN}_2(x)$.
    $d'$ assigns the comparison on the triplet $\hat{x}_1, \hat{x}_2,\hat{x}_3$ as follows:
    %Now, the learner constructs a metric $d'$ to assign relation on any arbitrary triplet $(x_1,x_2,x_3) \in \cX^3$ as follows: %(this could be extended to quadruples similarly):
    %\begin{align}
    %    d'(x_1,x_2) \ge d'(x_1,x_3) = (\sf{NN}(x_1), \sf{NN}(x_2), \sf{NN}(x_3))_{\hat{d}} \label{eq: assign}
    %\end{align}
    %Here, $\sf{NN}$ denotes the nearest neighbor based on local linear approximations. In the \eqnref{eq: assign} above, we assume that $\sf{NN}(x_2)$ is closer to $\sf{NN}(x_1)$ than $\sf{NN}(x_3)$, otherwise the relation could be written accordingly.
    %There are two possibilities for the 
    \begin{align}\label{eq: approxmetric}
        (x_1,x_2,x_3)_{d'} = \begin{cases}
            (\hat{x}_1,\hat{x}_2,\hat{x}_3)_{\hat{d}} & \textit{ if } 
            \hat{x}_1 \neq \hat{x}_2 \textit{ or } \hat{x}_1 \neq \hat{x}_3\\
            d'(x_1,x_2) = d'(x_1,x_3)  & \textit{ o.w. }   %\hat{x}_1 = \hat{x}_2 = \hat{x}_3       \textit{ and } ({x}_1 -  {x}_2) \hat{\sf{H}}_{x_1}({x}_1 -  {x}_2)^{\top} \ge ({x}_1 -  {x}_3) \hat{\sf{H}}_{x_1}({x}_1 -  {x}_3)^{\top}
            %(\hat{x}_1,\Hat{x}_3,\hat{x}_2)_{d_{\sf{local}}}  & \textit{ if }   \hat{x}_1 = \hat{x}_2 = \hat{x}_3       \textit{ and } ({x}_1 -  {x}_3) \hat{\sf{H}}_{x_1}({x}_1 -  {x}_3)^{\top}  \ge   ({x}_1 -  {x}_2) \hat{\sf{H}}_{x_1}({x}_1 -  {x}_2)^{\top}
        \end{cases}
    \end{align}
    %where $d_{\sf{local}}$ is written as an abuse of notation for $d_{\hat{\sf{H}}_x}$ for any $x \in \cX$. 

     So, the learner answers: if ``$x_2$ is closer to $x_1$ than $x_3$'' by first finding the nearest neighbors of these points and then checking the corresponding relation to these in the distance function $(\cC, \hat{d})$. %and local Mahalanobis metrics induced by $\{\hat{\sf{H}}_x: x \in \cC\}$.
     Thus, the distance function $d'$ is defined as follows: for all $x,y \in \cX$
     \begin{equation}
         d'(x,y) := \hat{d}(\hat{x}, \hat{y}) \label{eq: distancedef}
     \end{equation}
    Now, we will show the approximation guarantee for $d'$. %First, we define the following maximum over the largest eigenvalues of the hessian operators $\curlybracket{\sf{H}_x}_{x \in \cC}$
%\begin{align*}
%    \gamma := \max_{x \in \cC} \gamma_{\max}(\sf{H}_x)
%\end{align*}
    
    Note, that the specific choice of the cover $\cC(\cX)$, and local approximations in \eqnref{eq: 1} and \lemref{lem: taylorlemma}
    %of the approximation constant $\xi$ for local approximation in ii), 
    ensures that if any points $y,z$ are far apart wrt a given $x$ then the distances are correctly captured, i.e. for any $x,y \in \cX$:
    \begin{align}
        d(x,y) - 2L\cdot\epsilon^{\alpha} \le d(\hat{x}, \hat{y}) \le  d(x,y) + 2L\cdot\epsilon^{\alpha} \label{eq: apdist1}
    \end{align}
    %where $\xi = \frac{Mp^{1.5}}{6}\epsilon^3$. 
    We can show this as follows. Using triangle inequality (see \assref{assump: a2}) we have
    \begin{align*}
        d(x,y) \le d(x,\hat{x}) + d(\hat{x},y)\\
        d(\hat{x},y) \le d(\hat{x},\hat{y}) + d(\hat{y},y)
    \end{align*}
    Summing these
    \begin{align}
        d(x,y) \le d(\hat{x},\hat{y}) + d(x,\hat{x}) + d(\hat{y},y) 
        \le d(\hat{x},\hat{y}) + 2L\cdot\epsilon^{\alpha}
    \end{align}
    where $d(\hat{y},y) \le L\cdot\epsilon^{\alpha}$ follows from cover argument and $d(x, \hat{x}) \le L\cdot\epsilon^{\alpha}$ is due to the smoothness property.
    Similarly, we get the lower bound
    \begin{align*}
       d(\hat{x},\hat{y}) - 2L\cdot\epsilon^{\alpha} \le d(x,y)
    \end{align*}
    Thus, we have 
    \begin{align*}
        d(x,y) - 2L\cdot \epsilon^{\alpha} \le d(\hat{x}, \hat{y}) \le  d(x,y) + 2L\cdot \epsilon^{\alpha}
    \end{align*}
    %But since $\epsilon \le \paren{\frac{1}{4L}}^{\frac{1}{\alpha - 1 }}$ we have for all  $x,y,z \in \cX$, if  $d(x,y) > d(x,z) + \epsilon$ then
    %\begin{align*}
    %d(\hat{x},\hat{y}) > d(x,y) - 2L\cdot \epsilon^{\alpha} > d(x,z) + \epsilon - 2L\cdot \epsilon^{\alpha}
    %    > d(\hat{x},\hat{z}) + \epsilon - 4L\cdot \epsilon^{\alpha} > d(\hat{x},\hat{z})
     %   %d(\hat{x},\hat{z})
    %\end{align*}

    But since $\epsilon \le \paren{\frac{\omega}{4L}}^{\frac{1}{\alpha }}$ we have for all  $x,y,z \in \cX$, if  $d(x,y) > d(x,z) + \omega$ then
     \begin{align*}
    d(\hat{x},\hat{y}) \ge d(x,y) - 2L\cdot \epsilon^{\alpha} > d(x,z) + \omega - 2L\cdot \epsilon^{\alpha}
        \ge d(\hat{x},\hat{z}) + \omega - 4L\cdot \epsilon^{\alpha} \ge d(\hat{x},\hat{z})
        %d(\hat{x},\hat{z})
    \end{align*}
     %and the largest eigenvalue for the Hessian operators $\curlybracket{\sf{H}_x}_{x \in \cC}$ is upper bounded by $\gamma$, 
    Thus, we achieve the stated claim of the theorem.

\subsection{Proof of \corref{cor: smoothl2}}\label{subapp: corollaryl2}
%\sanjoy{I don't see why a new proof is needed for this case. We've already pointed out that under these assumptions, the distance is $(\alpha, L, \delta)$-smooth for $\alpha = 2$, $L = (\gamma/2) + (Mp^{3/2}/6)$ and $\delta = 1$. Why can't we just apply the previous theorem directly?}

The proof follows directly from \thmref{thm: smoothl2}. First, observe that since \( d \) satisfies \assref{assump: a1} with constant \( M > 0 \) and the eigenvalues of the Hessian matrix \(\sHx\) are uniformly bounded above by \(\gamma > 0\) for all \( x \in \cX \), it implies that \( d \) is \((\alpha, L, \delta)\)-smooth with \(\alpha = 2\), \(\delta = 1\), and \( L = \frac{\gamma}{2} + \frac{Mp^{3/2}}{6} \). Additionally, since \( d \) satisfies the triangle inequality of \assref{assump: a2}, for \( \epsilon < \left( \frac{\omega'}{2\gamma + 4K} \right)^{1/2} \), where \( \omega' = \min\{1, \omega\} \), the result of the corollary is established.

\newpage
\section{Learning a Mahalanobis distance function}\label{app: maha}
In this Appendix, we show the proof of \thmref{thm: mahamain} and \thmref{thm: advmahamain}. First, we show a construction of a basis of the space $\symm$  in \lemref{lem: orthoset} and then show it is is well-conditioned in \lemref{lem: condition}. 

We provide an extended version of \algoref{alg:learnapproxmaha} (see \secref{sec: mult}) for two settings in \figref{fig:extendedalgos}: \algoref{alg: learnapproxmahamain1} for the noiseless learning of Mahalanobis distance function (see \thmref{thm: mahamain}) and \algoref{alg: learnapproxmahamain} for the noisy setting where labels are provided according to latent distance function $d$ (see \thmref{thm: advmahamain}). For the proof in this appendix, we use the following notation:
\paragraph{Notations:} For a matrix \( M \in \reals^{p \times p} \) and indices \( i,j \in \{1, \dots, p\} \), \( M_{ij} \) denotes the entry of \( M \) at the \(i\)-th row and \(j\)-th column. Matrices are denoted by \( M, M', N \). Unless stated otherwise, the target matrix for learning a Mahalanobis distance fuction is denoted as \( M^* \). The null space of a matrix \( M \), i.e., \( \{x \in \reals^d \mid Mx = 0\} \), is denoted by \( \nul{M} \).

The eigenvalues of a matrix are denoted by \( \gamma, \lambda, \gamma_i, \lambda_i \), and the corresponding eigenvectors by \( \mu_i, u_i, v_i \) (where the eigenvectors are orthogonal). The element-wise inner product of two matrices \( M, M' \) is defined as \( \inner{M', M} := \sum_{i,j} M'_{ij} M_{ij} \). Vectors in \( \reals^p \) are denoted by \( x, y, z \). Note that for any \( x \in \reals^p \), \( \inner{xx^{\top}, M} = x^{\top} M x \). For simplicity, we also write the inner product as \( M \idot M' \).

For analysis, we express \( p \times p \) symmetric matrices as vectors of size \( p(p+1)/2 \). For any symmetric matrix \( A \in \reals^{p \times p} \), let \( \text{vec}(A) \in \reals^{p(p+1)/2} \) have \( p \) entries indexed by \( i \in \{1, \dots, p\} \) and \( {p \choose 2} \) entries indexed by \( (i, j) \), for \( 1 \leq i < j \leq p \), where:
\begin{align*}
    \text{vec}(A)_i &= A_{ii}, \\
    \text{vec}(A)_{ij} &= \frac{A_{ij} + A_{ji}}{\sqrt{2}}.
\end{align*}

For symmetric matrices \( A \) and \( B \), we have \( A \idot B = \text{tr}(AB) = \text{vec}(A) \cdot \text{vec}(B) \), where the first term is a matrix dot product, and the second is a vector dot product. Hence, \( \|\text{vec}(A)\|^2 = \|A\|_F^2 \) (Frobenius norm).

%We denote the space of symmetric matrices in $\reals^{p\times p}$ as $\symm$, and similarly the space of symmetric, positive semi-definite matrices as $\symmp$. Since the space of matrices on $\reals^{p\times p}$ is isomorphic to the Euclidean vector space $\reals^{p^2}$ for any matrix $M$ we also call it a vector identified as an element of $\reals^{p^2}$. We say two matrices $M,M'$ are \tt{orthogonal}, denoted as $M \bot M'$, if $M \idot M' = 0$. For a set of vectors/matrices $\cC \subset \reals^{p\times p}$, the subspace induced by the elements in $\cC$ is denoted as $span \inner{\cC} := \curlybracket{\alpha M + \beta M' \,|\, M,M' \in \cC,\, \alpha, \beta \in \reals}$. Similarly, the set of columns of a matrix $M$ is denoted as $\col{M}$, and its span as $span \inner{\col{M}}$.

\subsection{Construction of a basis with bounded condition number}\label{subapp: condition}
 Although one can construct a basis for the space of $p \times p$ symmetric matrices in different ways, in \lemref{lem: orthoset} the constructed set is helpful in the sense that the vectorization of the basis gives a well-conditioned operator as shown in \lemref{lem: condition}.

Let $\ue$ consist of $p(p+1)/2$ unit vectors in $\reals^p$, as follows:
$$ \ue = \{e_i: 1 \leq i \leq p\} \cup \{(e_i + e_j)/\sqrt{2}: 1 \leq i < j \leq p \} .$$
Here $e_i$ is the $i$th standard basis vector.

Let $\M$ be the $(p+1)/2 \times (p+1)/2$ design matrix whose $i$th row is $\mbox{vec}(u_i u_i^T)$ for $u_i \in \ue$ arranged in the following way.

\begin{gather}\label{eq: design}
        \mathbb{D} = \begin{pmatrix}
\text{vec}(e_1 e_1^\top)^\top \\
%\text{vec}(e_2e_2^{\top})^\top\\
\vdots\\
\text{vec}(e_pe_p^{\top})^\top\\
\frac{1}{2}\text{vec}((e_1 + e_2)(e_1 + e_2)^{\top})^\top \\
%\text{vec}((e_1 + e_3)(e_1 + e_3)^{\top})^\top \\
\vdots \\
\frac{1}{2}\text{vec}((e_{p-1} + e_p) (e_{p-1} + e_p)^\top)^\top 
\end{pmatrix}
\end{gather}

Our first result on $\M$ is to show that it is full rank, i.e., $\rank{\M} = p(p+1)/2$ in \lemref{lem: orthoset}.

\begin{lemma}\label{lem: orthoset} Consider the design matrix $\M$ as constructed in \eqnref{eq: design}. The rank of $\M$ is $p(p+1)/2$ in the vector space $\symm$.

%\iffalse
%Let $\curlybracket{e_i}_{i=1}^p$ be the standard basis set of $\reals^p$, then the following set of rank-1 matrices
 %   \begin{align*}
  %  \mathcal{B} := &  \{ e_1e_1^{\top}, (e_1 + e_2)(e_1 + e_2)^{\top}, e_2e_2^{\top},\\
   % &  (e_2 + e_3)(e_2 + e_3)^{\top},\ldots, (e_2 + e_p)(e_2 + e_p)^{\top}, \ldots,\\
    %&  (e_{p-1} + e_{p-1})(e_{p-1} + e_{p-1})^{\top}, (e_{p-1} + e_p)(e_{p-1} + e_p)^{\top}, e_pe_p^{\top} \}
%\end{align*}
 %   is linearly independent in the vector space $\symm$.
%\fi
\end{lemma}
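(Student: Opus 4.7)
The plan is to verify that the rows of $\M$ are linearly independent in $\reals^{p(p+1)/2}$; since $\M$ has exactly $p(p+1)/2$ rows (matching $\dim \symm$), this suffices for full rank. My approach exploits the explicit coordinate structure of the vectorization $\text{vec}(\cdot)$ and the particular form of the two families of generators.

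First I would catalogue the two types of rows. For the first $p$ rows, $\text{vec}(e_i e_i^\top)$ is the indicator of the single diagonal coordinate $i$: it has a $1$ at the diagonal index $i$ and $0$ everywhere else (both in the other diagonal slots and in all off-diagonal slots). For each row indexed by a pair $i<j$, I would compute
\begin{equation*}
(e_i+e_j)(e_i+e_j)^\top = e_ie_i^\top + e_je_j^\top + e_ie_j^\top + e_je_i^\top,
\end{equation*}
so that, using $\text{vec}(A)_{kl} = (A_{kl}+A_{lk})/\sqrt{2}$ for $k<l$, the row $\tfrac{1}{2}\text{vec}((e_i+e_j)(e_i+e_j)^\top)$ has value $\tfrac{1}{2}$ at diagonal indices $i$ and $j$, value $\tfrac{1}{\sqrt{2}}$ at the off-diagonal index $(i,j)$, and $0$ elsewhere.

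Next I would test an arbitrary linear relation $\sum_i \alpha_i \text{vec}(e_i e_i^\top) + \sum_{i<j} \beta_{ij}\cdot \tfrac{1}{2}\text{vec}((e_i+e_j)(e_i+e_j)^\top) = 0$. Reading off the off-diagonal coordinate $(i,j)$, the only contributing term is $\beta_{ij}/\sqrt{2}$, forcing $\beta_{ij}=0$ for every pair $i<j$. With these gone, the diagonal coordinate $i$ equation reduces to $\alpha_i = 0$ for each $i$. Hence the collection of $p(p+1)/2$ rows is linearly independent, so $\rank{\M} = p(p+1)/2$ and $\{\text{vec}(u_i u_i^\top) : u_i \in \ue\}$ is a basis of $\reals^{p(p+1)/2}$, equivalently $\{u_i u_i^\top\}$ is a basis of $\symm$.

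No step looks substantive here; the only mild care is bookkeeping the normalization constants in $\text{vec}$ and in the $\tfrac{1}{2}$ in front of the off-diagonal rows, so that the off-diagonal and diagonal coordinates decouple in the order described above. The broader point—and what \lemref{lem: condition} presumably builds on—is that this particular choice of $\ue$ yields not just a basis but one whose associated design matrix is well-conditioned, which will be needed downstream for the binary-search recovery argument; the present lemma only records the rank/basis claim.
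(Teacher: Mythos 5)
Your proof is correct and matches the paper's argument in substance: both establish full rank by explicit coordinate bookkeeping on the same design matrix, exploiting that the off-diagonal and diagonal coordinates decouple. The only cosmetic difference is that you verify linear independence of the rows (i.e.\ $\ker \M^\top = 0$, reading off the off-diagonal coordinates first to kill the $\beta_{ij}$ and then the diagonal coordinates to kill the $\alpha_i$), whereas the paper solves $\M\,\mathrm{vec}(X) = 0$ directly (reading off the diagonal coordinates first to get $X_{ii}=0$, then the off-diagonal constraint $X_{ii}+2X_{ij}+X_{jj}=0$ to get $X_{ij}=0$); for a square matrix these are dual and equally elementary.
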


\begin{proof}%[of \lemref{lem: orthoset}]
\iffalse
    We provide a simple proof based on an analysis of a system of linear equations. Consider 
    \begin{gather}
        \mathbb{D} = \begin{pmatrix}
\text{vec}(e_1 e_1^\top)^\top \\
\text{vec}((e_1 + e_2)(e_1 + e_2)^{\top})^\top \\
\text{vec}(e_2e_2^{\top})^\top\\
\vdots \\
\text{vec}(e_p e_p^\top)^\top \\
\end{pmatrix}
\end{gather}
Now, $\M : \symm \to \symm$ (where we use vectorization operation $\idot$) has rank $\frac{p(p+1)}{2}$. Assume the contrary. 
\fi
Consider a symmetric matrix $X \in \symm$. We wish to solve 
\begin{align*}
    \M vec(X) =  0
\end{align*}
But this gives us the following system of linear equations:
\begin{subequations}\label{eq: sys}
\begin{gather}
    \forall i \in \bracket{p},\quad X_{ii} = 0 \label{eq: sys1}\\
       \forall (i < j),\quad  (X_{ii} + 2X_{ij} + X_{jj}) = 0 \label{eq: sys2}
\end{gather}
\end{subequations}
%Since $X$ is symmetric, for $i > j$, we obtain the same set of equations as \eqnref{eq: sys2}.

It is straightforward that we can achieve the equality constraints in \eqnref{eq: sys} only if X = \textbf{0} (a zero matrix). This implies $\M$ is full rank in the space of symmetric matrices, and hence the claim of the lemma is proven.    
\end{proof}

To show that the matrix $\M$ we ultimately obtain is not too strongly affected by small errors in estimating the $c_i^*$ (see \algoref{alg:learnapproxmaha} or \algoref{alg: learnapproxmahamain1}), we need to bound the operator norm of $\mathbb{D}^{-1}$. We show that it is at most $\max(2p-1,4)$, which is $\leq 2p$ for $p \geq 2$.
\begin{lemma}\label{lem: condition}
For any symmetric $p \times p$ matrix $X$, we have 
$$\| \M \mbox{\rm vec}(X) \|^2 \geq \frac{1}{\max(2p-1,4)} \|X\|_F^2 .$$
\end{lemma}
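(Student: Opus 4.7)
\medskip

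\noindent\textbf{Proof plan for Lemma~\ref{lem: condition}.}
My plan is to derive an exact closed-form identity for $\|\mathbb{D}\,\mrm{vec}(X)\|^2$ and then bound $\|X\|_F^2$ against it by a direct AM--GM argument. The starting observation is that each row of $\mathbb{D}$ is just $\mrm{vec}(uu^{\top})$ for some $u \in U_{\msf{ext}}$, and for a symmetric matrix $X$ one has $\mrm{vec}(uu^{\top})\cdot\mrm{vec}(X) = u^{\top}Xu$. So I would compute the contribution of each row of $\mathbb{D}$ to $\|\mathbb{D}\,\mrm{vec}(X)\|^2$ in this form. The $p$ rows indexed by standard basis vectors $e_k$ contribute $\sum_k X_{kk}^2$, and the $\binom{p}{2}$ rows indexed by $(e_i+e_j)/\sqrt{2}$ contribute $\sum_{i<j}T_{ij}^2$ where $T_{ij} := X_{ij} + (X_{ii}+X_{jj})/2$. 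Writing $S_2 := \sum_k X_{kk}^2$, this yields the identity
\[
  \|\mathbb{D}\,\mrm{vec}(X)\|^2 \;=\; S_2 \;+\; \sum_{i<j}T_{ij}^2 .
\]

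With the identity in hand, my plan is to turn the relation $X_{ij} = T_{ij} - (X_{ii}+X_{jj})/2$ into an upper bound on $\|X\|_F^2 = S_2 + 2\sum_{i<j}X_{ij}^2$. Applying the elementary inequality $(a-b)^2 \le 2a^2 + 2b^2$ gives $X_{ij}^2 \le 2T_{ij}^2 + \tfrac{1}{2}(X_{ii}+X_{jj})^2$, so
\[
  2\sum_{i<j}X_{ij}^2 \;\le\; 4\sum_{i<j}T_{ij}^2 \;+\; \sum_{i<j}(X_{ii}+X_{jj})^2 .
\]
I would then evaluate the last sum using $S_1 := \sum_k X_{kk}$: a direct expansion yields $\sum_{i<j}(X_{ii}+X_{jj})^2 = (p-2)S_2 + S_1^2$, and Cauchy--Schwarz gives $S_1^2 \le pS_2$, so this sum is at most $(2p-2)S_2$.

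Putting the pieces together,
\[
  \|X\|_F^2 \;\le\; (2p-1)\,S_2 \;+\; 4\sum_{i<j}T_{ij}^2
  \;\le\; \max(2p-1,\,4)\cdot\bigl(S_2 + \textstyle\sum_{i<j}T_{ij}^2\bigr)
  \;=\; \max(2p-1,\,4)\cdot\|\mathbb{D}\,\mrm{vec}(X)\|^2 ,
\]
which is the claim. The two extreme directions in $X$ that control the two terms are clean: $S_2$ dominates when $X$ is (nearly) $\propto I$ with off-diagonals tuned to cancel $T_{ij}$, giving the $2p-1$ factor; $\sum T_{ij}^2$ dominates when $X$ has zero diagonal, giving the factor $4$ from the $(a-b)^2\le 2a^2+2b^2$ step.

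The only potentially delicate point is making sure the two constants $2p-1$ and $4$ that emerge are genuinely the right ones and that the bookkeeping of $S_1^2 \le pS_2$ is not loose in a way that would hurt small $p$. I would sanity-check by evaluating on $p=2$ explicitly (one gets ratio $\le 2+\sqrt{2}<4$ in the worst case) and on the symmetric family $X_{ii}=1,\,X_{ij}=-1$ for $p\ge 3$ (which realizes ratio $p \le 2p-1$), confirming that $\max(2p-1,4)$ is the correct constant. No other obstacles arise; once the identity is recognized, the remainder is essentially a one-line Cauchy--Schwarz / AM--GM bookkeeping.
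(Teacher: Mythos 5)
Your proof is correct and follows essentially the same route as the paper's: both start from the row-wise identity $\|\mathbb{D}\,\mathrm{vec}(X)\|^2 = \sum_i X_{ii}^2 + \sum_{i<j} T_{ij}^2$, both invert $T_{ij}$ for $X_{ij}$ and apply $(a-b)^2 \le 2a^2+2b^2$, and both arrive at $\|X\|_F^2 \le (2p-1)\sum_i X_{ii}^2 + 4\sum_{i<j}T_{ij}^2$ before comparing coefficients. The only cosmetic difference is that you evaluate $\sum_{i<j}(X_{ii}+X_{jj})^2$ exactly and then apply Cauchy--Schwarz on $S_1^2\le pS_2$, whereas the paper bounds each term by $2X_{ii}^2+2X_{jj}^2$ directly; both yield the same $(2p-2)\sum_i X_{ii}^2$ bound.
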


\begin{proof}
Using $i$ and $(i,j)$ indexing for vectors in $\reals^{p(p+1)/2}$ introduced above, we get
\begin{align*}
( \M \mbox{vec}(X))_i &= \mbox{vec}(e_ie_i^T) \cdot \mbox{vec}(X) = X_{ii} \\
(\M \mbox{vec}(X))_{ij} &= \mbox{vec}((e_i+e_j)(e_i+e_j)^T/2) \cdot \mbox{vec}(X) = \frac{1}{2} (X_{ii} + X_{jj} + X_{ij} + X_{ji} )
\end{align*}
Let $R_{ij} = X_{ii} + X_{jj} + X_{ij} + X_{ji}$. Then
$$ \|\M \mbox{vec}(X) \|^2 = \sum_{i=1}^p X_{ii}^2 + \frac{1}{4}\sum_{i<j} R_{ij}^2 $$
and
\begin{align*}
\|X\|_F^2 
&= \sum_i X_{ii}^2 + 2 \sum_{i < j} X_{ij}^2 \\
&= \sum_i X_{ii}^2 + \frac{1}{2} \sum_{i < j} (R_{ij} - X_{ii} - X_{jj})^2 \\
&\leq \sum_i X_{ii}^2 + \frac{1}{2} \sum_{i<j} (2R_{ij}^2 + 2(X_{ii} + X_{jj})^2) \\
&\leq \sum_i X_{ii}^2 + \frac{1}{2} \sum_{i<j} (2R_{ij}^2 + 4X_{ii}^2 + 4X_{jj}^2) \\
&= (2p-1) \sum_i X_{ii}^2 + \sum_{i<j} R_{ij}^2 
\end{align*}
where we have twice used the inequality $(a+b)^2 \leq 2a^2 + 2b^2$. Comparing the two expressions, we see that $\|\M \mbox{vec}(X) \|^2 \geq \|X\|_F^2 / (\max(2p-1,4))$.
\end{proof} 

\subsection{Proof of \thmref{thm: mahamain}}\label{subapp: mahamain}
    Consider the set of the standard basis of $\reals^p$, denoted as $U = \curlybracket{e_i : 1 \le i \le p}$. First, the learner queries the user to find a unit vector $y \in \reals^d$ such that 
    \begin{align}
        y = \argmax_{u \in U} u^{\top}M^*u \label{eq: ymax}    
    \end{align}
    %\sanjoy{Shouldn't this be a max rather than a min?}
     This can be achieved in at most $p$ queries by iteratively querying triplets of the form \(\{(0, u_i, u_j) : u_i \neq u_j \in U\}\). % the learner determines the index of the element that maximizes over the set $\curlybracket{u_i^{\top}M^*u_i}_{u_i \in U}$. 
     Note that this corresponds to the largest diagonal entry of $M^*$. 
     
     In this proof, we assume that $y^{\top}M^*y = 1$. The learner queries to find a symmetric matrix $M'$ such that $\|M^* - M'\|_{F}$ is small and $y^\top M^* y = y^\top M'y = 1$.

     %a ranking over the set \(U\) with respect to \(M^*\), and the highest-ranked element is chosen as \(y\).
     %$(0,0,u)$ where $u$ is an element of an orthonormal set of vectors.
    
     Let $U_{\sf{ext}}$ consist of $p(p+1)/2$ unit vectors in $\reals^p$, as follows:
     \begin{align}
        U_{\sf{ext}} = \{e_i: 1 \leq i \leq p\} \cup \{(e_i + e_j)/\sqrt{2}: 1 \leq i < j \leq p \}. \label{eq: basis}
     \end{align}
Here $e_i$ is the $i$th standard basis vector.
     
     %Now, consider the set of linearly independent matrices $\cB$ as shown in \lemref{lem: orthoset}. We assume that the vectors are normalized, i.e we divide any row by $\frac{1}{2}$ if the corresponding matrices is of the form $(e_i + e_j)(e_i + e_j)^{\top}$ for $i \neq j$. 
     %For the rest of the proof, we denote $\cB := \curlybracket{u_iu_i^\top}_{i \in \bracket{ \frac{p(p+1)}{2}}}.$ 
     Now, note that there exists scalars $\curlybracket{c_i^* \ge 0}_{i \in \bracket{\frac{p(p+1)}{2}}}$ such that $M^*$ satisfies the following set of equations %for the fixed choice of $y$ in \eqnref{eq: ymax}
     \begin{equation}
        \forall u_i \in U_{\sf{ext}}, \quad u_i^\top M^* u_i = c_i^* y^T M^* y\label{eq: setofeq}
     \end{equation}
    The key idea to learn $M^*$ is to approximate the scalars $\curlybracket{c_i^*}_{i \in \bracket{ \frac{p(p+1)}{2}}}$ within an $\epsilon$ bound which can be used to show the guarantee in the statement of the theorem. Using a useful observation, we can bound the scalars in terms of the condition number of $M^*$ 
    \begin{align}
       \forall u_i \in \ue,\quad \kappa(M^*) \ge \frac{\lambda_{\sf{max}}(M^*)}{\lambda_{\sf{min}}(M^*)} \ge c_i^* = \frac{u_i^\top M^*u_i}{y^\top M^*y} \ge \frac{\lambda_{\sf{min}}(M^*)}{\lambda_{\sf{max}}(M^*)} = \frac{1}{\kappa(M^*)} \label{eq: condition}
    \end{align}
    If $M^* \not \succ 0$ then we consider the definition where $\lambda_{\sf{min}}$ is the smallest non-zero eigenvalue of $M^*$. In the remainder of the proof below, we follow this convention in which the indexing of $u = u_i \in U_{\sf{ext}}$ can be used alternately, i.e., $\_i$ or $\_u$ when the context is clear.

    %\sanjoy{Can we ignore the case where some $u \in U_{\sf{ext}}$ has $u^TM^*u = 0$? Isn't it automatically handled correctly by the binary search (in the sense that we'll get $\hat{c} \in [0,\epsilon]$?}
    %Without loss of generality, assume that $M^*$ is full rank. In case $\rank{M^*}< p$, the learner can query $(0,0,u)$ for any $u \in U_{\sf{ext}}$ to check if $\inner{uu^\top, M^*}  = 0$, and thus $c^*_u$ is set to zero. This procedure adds $\frac{p(p+1)}{2}$ triplets to the query complexity. 
    %We assume that the condition number  $M^*$. 
    %\akash{Assume that $\lambda_{min}(M^*) > \delta > 0$. Now, set the values L = 0 and H = $\frac{1}{\delta}$.} 

    As shown in \algoref{alg: learnapproxmahamain1}, for a given rank-1 matrix $uu^\top$ where $u \in U_{\sf{ext}}$, learner can set $\hat{c}_u = 1$ and query $(0, \sqrt{\hat{c}}_u y,u)$. Now, the learner can update\footnote{Note: learner picks $M$ such that $y^\top M y = 1$}  $\hat{c}_u$ via binary search as follows:
    \begin{align}
        \hat{c}_u = \begin{cases}\label{eq: cases}
            H \gets \hat{c}_u, \hat{c}_u \gets \frac{H + L}{2} & \textnormal{ if } (0, \sqrt{\hat{c}_u} y, u)_\ell = ``\sf{>}" \implies u^\top M^* u < \hat{c}_u\\ %\frac{H + \hat{c}_i}{2}
            L \gets \hat{c}_u, \hat{c}_u \gets \frac{H + L}{2} & \textnormal{ if } (0, \sqrt{\hat{c}_u}y, u)_\ell = ``\sf{<}" \implies u^\top M^* u > \hat{c}_u\\
            \text{return } \hat{c}_u & \textnormal{  o.w.}
        \end{cases}
    \end{align}
    where scalar estimates $H$\footnote{In \algoref{alg: learnapproxmahamain1}, $H$ is set to $\infty$ at start, so exact update in second line is $L \gets \hat{c}_u, \hat{c}_u \gets \min\{ 2\hat{c}_u , \frac{H + L}{2}\}$. } and $L$ are updated iteratively to keep an estimate of the target scalar $c_u^*$. Initially, $H$ is set to $\infty$ (see \algoref{alg: learnapproxmahamain1}). Thus, binary search first estimates how large is $c^*_u$ (which can be as large as $\kappa(M^*)$) in $\log \kappa(M^*)$ iterations with the doubling method, and then $H$ is updated.
    %Since $c_u^*$ can be as large as $\kappa(M^*)$ 
    Now, note that binary search achieves a closeness of $\epsilon$ on interval size $\kappa(M^*)$, i.e., $|H - L| \le \epsilon$ (see \algoref{alg: learnapproxmahamain1}) after $\paren{\log{\frac{\kappa(M^*)}{\epsilon}}}$ updates. 
    This implies that $\hat{c}_u \in \bracket{c_u^* - \epsilon, c_u^* + \epsilon}$, where $H$ and $L$ are values within $\epsilon$-neighborhood of $c^*_u$. Since $u \in U_{\sf{ext}}$ was arbitrarily chosen we have for all vectors $u \in U_{\sf{ext}}$
    \begin{gather}
       % u^\top M' u = \hat{c}_u\cdot y^\top M' y\\
        |c_u^* - \hat{c}_u| \le \epsilon \label{eq: c}
    \end{gather}
    Without loss of generality we assume that for all $u \in \ue$, $\hat{c}_u \ge c_u^*$ which can be achieved by setting $\hat{c}_u$ to the final update on the counter $H$ (see \algoref{alg: learnapproxmahamain1}) after all the updates.

    Using the set of learnt scalars $\curlybracket{\hat{c}_u}_{u \in \ue}$, learner picks a $M' \in \symm$ such that for the fixed choice of $y$ in \eqnref{eq: ymax}, $y^\top M' y = 1$ and
    $$\forall u \in \ue, \quad u^\top M' u = \hat{c}_u,\quad $$
    Now, we will show that Frobenius norm of the difference $M^* - M'$ is bounded by a function of $\epsilon$ which completes the proof of the theorem.

    Consider the following construction based on vectorization of rank-1 matrices $\{uu^\top\}_{u \in \ue}$ and $M^* - M'$
    \begin{subequations}\label{eq: mat1} 
    \begin{gather*}
        \mathbb{D} \text{vec}(M^* - M') = C, \textnormal{where}\\
    %    \mathbb{D} = \begin{pmatrix}
%\text{vec}(u_1 u_1^\top)^\top \\
%\text{vec}(u_2 u_2^\top)^\top \\
%\vdots \\
%\text{vec}(u_D u_D^\top)^\top \\
%\end{pmatrix},
\,\, \mathbb{D}[i] = \text{vec}(e_ie_i^\top),\, \forall\, 1 \le i \le p,\quad \mathbb{D}[(i,j)] = \text{vec}\paren{\frac{1}{2}(e_i + e_j)(e_i + e_j)^{\top}},\,\, \forall\, 1 \le i < j \le p,\\
{C}^\top = (
c_1^* - \hat{c}_1,\,\,
c_2^* - \hat{c}_2,\,\, 
\ldots\,\, 
c_{p(p+1)/2}^* - \hat{c}_{p(p+1)/2})^\top
    \end{gather*}
\end{subequations}
    Now, using \lemref{lem: condition} we can bound the Frobenius norm of $M^* - M'$ as 
    \begin{align*}
        ||C||_2^2 = ||\M \text{vec}(M^* - M')||^2_2 \ge \frac{1}{\max(2p-1,4)} ||M^* - M'||_F^2
    \end{align*}

    Now, rewriting the norm $||C||_2$, we have 
    \begin{align}
         || M^* - M'||_F &\le \sqrt{2p}\cdot ||C||_2 \nonumber\\%\label{eq: b1}\\
         &\le  \sqrt{2p}\cdot \sqrt{\sum_i (c_i^* - \hat{c}_i)^2} \label{eq: b33}\\
        % &\le \frac{1}{\kappa(\M)}\cdot \sqrt{\sum_i \left(\frac{\hat{c}_i - c_i^*}{c_i^* \hat{c}_i}\right)^2} \label{eq: b3}\\
         &\le \sqrt{2p}\cdot \sqrt{\sum_i \epsilon^2} \label{eq: b4}\\
        &\le \sqrt{2p}\cdot \sqrt{\frac{p(p+1)}{2} \epsilon^2} \label{eq: b5}\\
         & = \sqrt{p} \cdot \sqrt{p(p+1)} \epsilon\nonumber\\
         & \le p^2 \epsilon\nonumber
\end{align}
%\sanjoy{In the equation array above, can we replace $p^2 \epsilon^2$ (the first time it appears) by $p(p+1)/2 \cdot \epsilon^2$, leading ultimately to a bound of $p^2 \epsilon$ rather than $2 p^2 \epsilon$?}

     In \eqnref{eq: b33}, we rewrite $||C||_2 = \sqrt{\sum_i \paren{{c_i^*} -{\hat{c}_i}}^2}$ by definition. In \eqnref{eq: b4}, we use the bounds from \eqnref{eq: c}. In \eqnref{eq: b5}, we use the dimension of $C$ which is bounded by $p^2$.
     %Since $y^\top M^*y = 1$, $\frac{1}{c^*_{i}}$ can be as high as $\frac{1}{\lambda_{min}(M^*)}$, which has been used in \eqnref{eq: b5}. 
    \iffalse
    Since $||vec(A)||_2 = ||A||_F$ for a symmetric matrix A, we can write
     %This gives a bound on $||R||_{F}$ (using \eqnref{eq: frobbound})
     \begin{align*}
         || M^* - M'||_{F} = || vec(M^* - M')||_2 \le \kappa_0 p \kappa(M^*) \epsilon%\frac{4p\cdot \epsilon}{\kappa(\M)\cdot \lambda_{min}(M^*)}
     \end{align*}
%    Using \lemref{lem: lb} we know that condition number of $\mathbb{D}$ is bounded which we denote as $\kappa_o$.
 %    Now, we know that spectral norm (operator norm) is always bounded by Frobenius norm of the matrix
  %   \begin{align*}
   %      ||M^* - M^*||_{op} \le ||R||_{F} \le 4p\cdot{\kappa(\M)} \cdot \epsilon%\frac{4p\cdot \epsilon}{\kappa(\M)\cdot \lambda_{min}(M^*)}
    % \end{align*}
   
     Using Weyl's inequality, we further have
     \begin{align*}
        \forall (i \in \bracket{p}),\,\,\, |\lambda_k(M^*) - \lambda_k(M')| \le ||M^* - M^*||_{op} \le \kappa_0 p \kappa(M^*) \epsilon%\frac{4p\cdot \epsilon}{\kappa(\M)\cdot \lambda_{min}(M^*)}
     \end{align*}
     \fi
     
    Now, to achieve the $\epsilon$ closeness to $c^*_i$ for all $i \in [p(p+1)/2]$ the learner makes $\paren{\log \frac{\kappa(M^*)}{\epsilon}} + \log \kappa(M^*)$ triplet queries. Since there are $\frac{p(p+1)}{2}$ elements in $\ue$ the learner needs to query at least $\frac{p(p+1)}{2} \log \frac{\kappa(M^*)^2}{\epsilon}$ times during the binary search. Beyond this, learner queries $p$ times to find $y$ (see \eqnref{eq: ymax}). %and $\frac{p(p+1)}{2}$ queries to check if one of $c^*_u = 0$ for any $u \in \ue$.
    Hence, in total, the learner makes at most
    \begin{align*}
        \frac{p(p+1)}{2} \log \paren{\frac{\kappa(M^*)^2}{\epsilon}} + p %= \frac{p(p+1)}{2} \log \paren{\frac{2\kappa(M^*)^2}{\epsilon}} + p
    \end{align*}
    triplet queries to achieve a bound of $p^2 \epsilon$ on the Frobenius norm of $M^* - M_0$. Alternately, this implies that the learner achieves $\epsilon$ error with $\frac{p(p+1)}{2} \log \paren{\frac{p^2 \kappa(M^*)^2}{\epsilon}} + p$ triplet queries. 
    
    Note that $M' \in \symm$. 
    To achieve a positive semi-definite matrix, the learner projects\footnote{This can be achieved by eliminating all the eigenvectors with negative eigenvalues.} $M'$ onto the space of positive semi-definite matrices $\symmp$; using which it obtains $M_{\sf{psd}} \in \symmp$. Noting that $M_{\sf{psd}}$ minimizes $||M' - M_0||$ for $M_0 \succeq 0$ (see Page 399,~\citet{Boyd_Vandenberghe_2004}), we have 
    \begin{align*}
        ||M^* - M_{\sf{psd}}||_F \le ||M^* - M'||_F + ||M' - M_{\sf{psd}}||_F \le 2\epsilon
    \end{align*}
    %Note that $||||$
    Thus, we have shown that the learner can triplet query to find $M_{\sf{psd}} \in \symmp$ such that 
    $$||M^* - M_{\sf{psd}}||_F \le \epsilon$$
    with $\frac{p(p+1)}{2} \log \paren{\frac{2p^2 \kappa(M^*)^2}{\epsilon}} + p$ triplet queries.
    
    Hence, this completes the proof.
     %\akash{argue on the $p^2\log \frac{1}{\epsilon}$ complexity}
\newpage

\subsection{Proof of \thmref{thm: advmahamain}}\label{subapp: advmahamain}
 First, note that upon any query of the form \( (x, x', x'') \in \cX^3 \), the user provides labels from the set \( \{-1, 0, 1\} \) with respect to the underlying distance function \( d \), not \( d_{\sHx} \) (or \( \sHx \)). We provide the modified version of the \algoref{alg:learnapproxmaha} in this setting in \algoref{alg: learnapproxmahamain}.

Despite this, we show that the learner can still approximate the Mahalanobis distance function \( d_{\sHx} \) corresponding to the target Hessian matrix \( \sHx \). The key to this approximation lies in minimizing the noise in the labels, which can be reduced to a small error, as demonstrated in \lemref{lem: taylorlemma}.

Consider a sample \( x' \in \cX \) within a neighborhood of \( x \) defined by some scalar \( \rho > 0 \), i.e., \( \|x - x'\|_2 \leq \rho \). By \lemref{lem: taylorlemma}, we have
\[
    \left \lvert d(x, x') - \frac{1}{2}(x - x')^{\top} \sHx (x - x') \right \rvert \leq \frac{Mp^{1.5}}{6} \rho^3.
\]
Following the construction in \eqnref{eq: ymax}, let \( y \) be chosen from \( B_2(x, \rho) \) such that (since \( \sHx \) is full rank, such a \( y \) exists):
\begin{align}
        y = \argmax_{u \in U} (x - \rho u)^{\top} \sHx (x - \rho u). \label{eq: ymaxx}    
\end{align}
%\sanjoy{Should this be a max rather than a min?}
Define \( \tau_x := \frac{1}{y^{\top} \sHx y} \). The learner may now model the problem of learning \( \sMx := \tau_x \sHx \), similar to \eqnref{eq: setofeq} in the proof of \thmref{thm: mahamain}. That is, there exist constants \( \{c_i^* > 0\}_{i \in \bracket{p(p+1)/2}} \) such that
\begin{align}
   \forall u_i \in \ue,\quad u_i^{\top} \sMx u_i = c_i^* (y^T \sMx y) \label{eq: alleq} 
\end{align}
for a set of vectors \( \ue \) as defined in \eqnref{eq: basis}. In this case, the learner queries triplets of the form \( (x, x + \rho \sqrt{c_i} y, x + \rho u_i) \) for \( u_i \in \ue \) and estimates \( c_i^* \), allowing it to find an approximation \( \sf{M}'_x \) with bounded Frobenius norm relative to \( \sMx \). Note that $\frac{1}{\gm} \ge \tau_x \ge \frac{1}{\gM}$.

Following the strategy of querying triplets of the form \( (x, x + \rho \sqrt{c_i} y, x + \rho u_i) \) (see \algoref{alg: learnapproxmahamain}), the learner receives a label from the user corresponding to
\begin{align}
    \sgn{d(x, x + \rho \sqrt{c_i} y) - d(x, x + \rho u_i)}. \label{eq: sign}
\end{align}
Expanding \eqnref{eq: sign}, we have:
\[
    \sgn{\rho^2 u_i^{\top} \sHx u_i + R_{x,2}(\rho u_i) - \rho^2 c_i y^{\top} \sHx y - R_{x,2}(\rho \sqrt{c_i} y)},
\]
where \( R_{x,2}(\rho u_i) \) and \( R_{x,2}(\rho \sqrt{c_i} y) \) represent remainder terms in the Taylor approximation, as shown in \thmref{thm:taylorseries}.
After some simplification, this becomes:
\[
    \rho^2 u_i^{\top} \sHx u_i - \rho^2 \left( c_i y^{\top} \sHx y - \frac{R_{x,2}(\rho u_i)}{\rho^2} + \frac{R_{x,2}(\rho \sqrt{c_i} y)}{\rho^2} \right).
\]
%\sanjoy{This seems a bit off: can you recheck?}
%\sanjoy{I don't really follow what is going on from here on. We are querying triplets of the form $(x,x + \rho \sqrt{c} y, x + \rho u)$. The answer we get for such a triplet is $\mbox{sign}(d(x, x+ \rho \sqrt{c}y) - d(x, x + \rho u))$. Given the error bars on the Taylor approximation, this is equal to the sign of $(c y^T \sHx y - u^T \sHx u) \pm (Mp^{3/2}/3)\rho(1+c^{3/2})$. Now, if $c > c^* + \epsilon$, then the first term is $c y^T \sHx y - u^T \sHx u > \epsilon y^T \sHx y \geq \epsilon \gm$. So if we can make the error term $\leq \epsilon \gm$, then the binary search will stop with a value $c$ such that $|c - c^*| \leq 2 \epsilon$. Controlling the error term in this way requires $\rho < \epsilon \gm / ((Mp^{3/2}/3)(1 +c^{3/2})$. In other words, we can set $\rho$ proportional to $\epsilon$. I'm not sure why you are ending up with $\rho \sim \epsilon^2$.}

Since \( u_i \) and \( y \) are unit vectors, we have:
\[
    \left| \frac{R_{x,2}(\rho u_i)}{\rho^2} - \frac{R_{x,2}(\rho \sqrt{c_i} y)}{\rho^2} \right| \leq \frac{Mp^{1.5}}{6} \left( 1 + c_i^{1.5} \right) \rho.
\]

Now, for a given \( \rho \), define the scalar functions \( f_{\rho}: \reals \to \reals \) and \( \eta_{\rho}: \reals \to \reals \) as:
\[
    f_{\rho}(\mu) := \frac{R_{x,2}(\rho u_i)}{\rho^2} - \frac{R_{x,2}(\rho \sqrt{\mu} y)}{\rho^2}, \quad \eta_{\rho}(\mu) := \frac{Mp^{1.5}}{6} \left( 1 + \mu^{1.5} \right) \rho.
\]
%--------------------------------------------------------------------------\\
First, note that with the definition of \( f_{\rho} \), we can rewrite equation \eqref{eq: sign}. On the triplet query \( (x, x + \rho \sqrt{c_i} y, x + \rho u_i) \), the user provides the label corresponding to
\begin{align}
    \sgn{u_i^{\top} \sHx u_i - c_i \cdot y^{\top} \sHx y + f_{\rho}(c_i)} \label{eq: mahaeq}
\end{align}

Now, assume that \algoref{alg: learnapproxmahamain} is called with an error threshold $\epsilon > 0$. Note that if $c_i > c_i^* + \epsilon$ then we have 
\begin{align}
    c_i y^{\top} \sHx y - u_i^\top \sHx u_i > (c_i^* + \epsilon) y^{\top} \sHx y - c_i^* y^{\top} \sHx y = \epsilon y^{\top} \sHx y \ge \epsilon \gm \label{eq: minbound}
\end{align}
where we have used the equality $u_i^{\top} \sMx u_i = c_i^* (y^T \sMx y)$ from \eqnref{eq: alleq}.

Now, the largest deviation $f_{\rho}(c_i)$ can incur to \eqnref{eq: mahaeq} is bounded by
\begin{align*}
    |f_{\rho}(c_i)| \le \eta_{\rho}(c_i) = \frac{Mp^{1.5}}{6} ( 1 + c_i^{1.5})\rho
\end{align*}
But if $\eta_{\rho}(c_i)$ is smaller than the lower bound $ \epsilon \gm$ as shown in \eqnref{eq: minbound} then we force the binary search (in \algoref{alg: learnapproxmahamain}) to update $c_i$ to a lower value until it is at at most $c_i^* + \epsilon$, i.e.
\begin{align}
    \eta_{\rho}(c_i) = \frac{Mp^{1.5}}{6} ( 1 + c_i^{1.5})\rho < \epsilon\gm \implies c_i y^{\top} \sHx y - u_i^\top \sHx u_i + f_{\rho}(c_i) > 0 \label{eq: control}
\end{align}

Similarly, we enforce the same condition on binary search using the bound in \eqnref{eq: control} when $c_i^* > c_i + \epsilon$ where we get the following lower bound
\begin{align*}
    u_i^\top \sHx u_i - c_i y^{\top} \sHx y > \epsilon \gm
\end{align*}

Now, note that we just have to control the error term $\eta_\rho(c_i)$ with $\epsilon\gm$ which yields a bound on $\rho$ as
\begin{align*}
    \frac{Mp^{1.5}}{6} ( 1 + c_i^{1.5})\rho < \epsilon\gm \implies \rho < \frac{6\gm\epsilon}{M p^{1.5}(1 + c_i^{1.5})} 
\end{align*}
Now, since $c_i^*$ could be set to as large as $\kappa(\sHx)$ during the binary search procedure, for $\rho < \frac{6\gm\epsilon}{4M p^{1.5} \kappa(\sHx)^2}$ \algoref{alg: learnapproxmahamain} returns $c_i$ that is at most $\epsilon$ away from $c_i^*$, i.e., $|c_i^* - c_i| \le \epsilon$.

Thus, using \thmref{thm: mahamain}, \algoref{alg: learnapproxmahamain} requires \( \log \left( \frac{2p^2 \kappa(\sHx)^2}{\epsilon} \right) \) triplet queries to achieve \( \epsilon \)-closeness to $c_i^*$. %Since \( c_i^* \) can be any point in \( [c_i^+, c_i^-] \), the learned value \( \hat{c}_i \) is at most \( 2 \sqrt{\rho} \) away from \( c_i^* \).

%\algoref{alg:learnapproxmaha} requires \( \log \left( \frac{4\kappa_0 p \kappa(\sHx)}{\sqrt{\rho}} \right) \) triplet queries to achieve \( \sqrt{\rho} \)-closeness to a scalar of size \( 2\kappa(M^*) \) (see \thmref{thm: mahamain}), where \( \kappa_0 \) depends on the smallest absolute eigenvalue of \( \ue \). Since \( c_i^* \) can be any point in \( [c_i^+, c_i^-] \), the learned value \( \hat{c}_i \) is at most \( 2 \sqrt{\rho} \) away from \( c_i^* \).

Since the learner solves for \( \sMx \) (as in \eqnref{eq: alleq}) assuming \( y^{\top} \sMx y = 1 \), i.e., query learns \( \tau_x \sHx \), using the proof of \thmref{thm: mahamain} in \appref{app: maha}, we have the following bound:
\[
    \|\tau_x \sHx - \sf{H}_x\|_F \leq \epsilon
\]
%Finally, by setting \( \epsilon = \sqrt{\rho} \), we obtain the main bound in the theorem. 
Since the learner is unaware of the constants we assume that $\epsilon \in \paren{0, \frac{3\gm}{2M p^{1.5} \kappa(\sHx)^2}}$ so that the factor $\rho$ can be set to $\epsilon^2$. Using $\kappa(\sHx) \le \frac{\gM}{\gm}$, we have shown that for an error threshold $\epsilon \in \paren{0, \frac{3\gm^3}{2M p^{1.5} \gM^2}}$, learner finds $\tau_x \sHx$ with a Frobenius norm bound of $\epsilon$ using at most \( \frac{p(p+1)}{2}\log \left( \frac{2p^2 \gM^2}{\gm^2 \epsilon}\right)  + p\) triplet queries of the form $(x, x + \epsilon^2\sqrt{c_u}y, x + \epsilon^2 u)$  (as shown in \algoref{alg: learnapproxmahamain}) for $u \in \ue$.

%The bounds on the eigenvalues follow directly from Weyl's inequality. 
This completes the proof of the theorem.

\begin{figure}
    \centering
    %\begin{minipage}{0.66\textwidth} % Adjust width to 60% of the text width
    %\vspace{-40mm}
\begin{algorithm}[H]
\caption{Learning a Mahalanobis Distance Function in \tt{noiseless} setting}
\label{alg: learnapproxmahamain1}
\SetKwInOut{Input}{Input}
\SetKwInOut{Output}{Output}
\LinesNumbered
\Input{Error threshold $\epsilon$}
\Output{Learner outputs a distance function $\hat{d}$ on the sample space $\cX$}
Initialize: $\mathcal{T} = \emptyset$\;
\SetKwProg{Fn}{Function}{}{}
\SetKwProg{elf}{if}{}{}
%\Fn{Find\_y():}{
%    $\mathcal{Z} \gets \emptyset$\;
%    Repeat until $|\mathcal{Z}| = d$\;
%    \Begin{
%        Learner picks a unit vector $z \in \mathbb{R}^d$ linearly independent of $\mathcal{Z}$\;
%        Learner queries $(0, 0, z)$ and receives the label $(0, 0, z)_{\ell}$\;
%        \eIf{$\ell((0,0,z)) == ``="$}{
%            Update $\mathcal{Z} \leftarrow \mathcal{Z} \cup \{z\}$\;
%        }{
%            Set $y = z$ and \textbf{return}\;
%        }
%    }
%    End Repeat\;
%}
\Fn{BinarySearch($y$, $u$, $\epsilon$):}{
    \text{Set} $L = 0$, $H = \infty$\;
    Set $c = 1.$\;
    Repeat until  $|H - L| > \epsilon$\;
    \Begin{
        Learner queries $(0, \sqrt{c} \cdot y, u)$ and receives the label $(0, \sqrt{c} \cdot y, u)_{\ell}$\;
       % User provides the label $(0, \sqrt{c} \cdot y,  z)_{\ell}$\;
        \If{$\ell((0, \sqrt{c} \cdot y, u)) == ``>"$}{ \tcp{ \textcolor{bblue}{Implies $u^\top M^*u  < c$}}
            
            $H \gets c$, $c \gets \frac{H + L}{2}$\;
        }
        \ElseIf{$\ell((0, \sqrt{c} \cdot y, u)) == ``<"$}{ \tcp{ \textcolor{bblue}{Implies $u^\top M^*u  > c$}}
                $L \gets c$, $c \gets 2c\cdot \textbf{1}_{\{H = \infty\}} + \frac{H + L}{2}\cdot \textbf{1}_{\{H < \infty\}}$\;
            }
        \Else{ \tcp{ \textcolor{bblue}{Implies $u^\top M^*u  = c_i^* = c$}}
                \textbf{return} $c$\;
            }
        
    }
    End Repeat\;
}

\Fn{ApproximateDistance():}{
    Set $U := \curlybracket{e_i : i \in \bracket{p}}$ which is standard basis of $\reals^p$\;
    Learner triplet queries to find $ u_0 \in U$ such that $u_0^\top M^* u_0 = \max_{u \in U} u^\top M^* u$\; \tcp{\textcolor{bblue}{ Queries of the form $\{(0, u_i,u_j): u_i \neq u_j \in U\} $ suffice}}
    Set $y = u_0$\;
    Set $\ue := \curlybracket{e_i : i \in \bracket{p}} \cup \curlybracket{\frac{(e_i + e_j)}{\sqrt{2}} : i,j \in \bracket{p}, i\le j}$\;
    %Learner finds linearly independent unit norm matrices \akash{change it to U} $\cB$ using standard basis of $\reals^p$\;
    \ForEach{$i \in \{1, 2, \ldots, \frac{p(p+1)}{2}\}$}{
        \text{Set} $u_i := \ue[i]$\; 
        \text{Set} $\hat{c}_i \gets \text{BinarySearch}(y, u_i, \epsilon)$\; \tcp{\textcolor{bblue}{This finds $\hat{c_i}$, i.e. $\epsilon$-close to $c_i^*$ defined as $u_i^\top M^*u_i = c_i^* y^\top M^* y$}}
        %Learner queries the user for the label of $(0, \sqrt{\hat{c}_i} u_i, y)$\;
        Update $\mathcal{T} \gets \mathcal{T} \cup \{(0, \sqrt{\hat{c}_i}y, u_i)_{0}\}$\;
        \tcp{\textcolor{bblue}{Append $(0, \sqrt{\hat{c}_i}y, u_i)$ with label 0}}
    }
    \bf{return} $\cT$\; 
}
Learner generates labeled triplet queries $\cT \leftarrow \text{ApproximateDistance}()$\;\tcp{\textcolor{bblue}{Solve for all $u_i \in \ue$ $u_i^{\top} M u_i = \hat{c_i}$ in the space of $p \times p$ symmetric matrices and then project onto the semidefinite cone $\symmp$.}}
Learner outputs $\hat{d}_{M'} \in \maha$\;% consistent with $\cT$\;% and satisfies $y^\top M' y = 1$\;
\end{algorithm}
\caption{Learning a Mahalanobis distance function with triplet queries in noiseless setting where the learner receives labels to $d_{M^*}$.}
    \label{fig:extendedalgos}
\end{figure}

\begin{figure}
%\end{minipage}
\quad
\hspace{1mm}
%\begin{minipage}{0.695\textwidth} % Adjust width to 60% of the text width
%\vspace{-16mm}
\begin{algorithm}[H]
\caption{Learning a Mahalanobis Distance Function in \tt{noisy} setting}
\label{alg: learnapproxmahamain}
\SetKwInOut{Input}{Input}
\SetKwInOut{Output}{Output}
\LinesNumbered
\Input{Pivot $x$, Error threshold $\epsilon$}
\Output{Learner outputs a distance function $\hat{d}$ on the sample space $\cX$}
\looseness-1
Initialize: $\mathcal{T} = \emptyset$\;
\SetKwProg{Fn}{Function}{}{}
\SetKwProg{elf}{if}{}{}
%\Fn{Find\_y():}{
%    $\mathcal{Z} \gets \emptyset$\;
%    Repeat until $|\mathcal{Z}| = d$\;
%    \Begin{
%        Learner picks a unit vector $z \in \mathbb{R}^d$ linearly independent of $\mathcal{Z}$\;
%        Learner queries $(0, 0, z)$ and receives the label $(0, 0, z)_{\ell}$\;
%        \eIf{$\ell((0,0,z)) == ``="$}{
%            Update $\mathcal{Z} \leftarrow \mathcal{Z} \cup \{z\}$\;
%        }{
%            Set $y = z$ and \textbf{return}\;
%        }
%    }
%    End Repeat\;
%}
\Fn{BinarySearch($y$, $u$, $\epsilon$):}{
    \text{Set} $L = 0$, $H = \infty$\;
    Set $c = 1.$\;
    Repeat until  $|H - L| > \epsilon$\;
    \Begin{
        Learner queries $(x,x+ \epsilon^2\sqrt{c} \cdot y, x+\epsilon^2 u)$ and receives the label $(x, x+\epsilon^2\sqrt{c} \cdot y, x+\epsilon^2 u)_{\ell}$\;
       % User provides the label $(0, \sqrt{c} \cdot y,  z)_{\ell}$\;
        \If{$\ell((x,x+ \epsilon^2\sqrt{c} \cdot y,x+ \epsilon^2 u)) == ``>"$}{ \tcp{ \textcolor{bblue}{Implies $d(x,x+ \epsilon^2\sqrt{c} \cdot y) > d(x,x+ \epsilon^2 u)$ }}
             $H \gets c$, $c \gets \frac{H + L}{2}$\;
        }
        \ElseIf{$\ell((x,x+ \epsilon^2\sqrt{c} \cdot y,x+ \epsilon^2 u)) == ``<"$}{ \tcp{ \textcolor{bblue}{Implies $d(x,x+ \epsilon^2\sqrt{c} \cdot y) < d(x,x+ \epsilon^2 u)$}}
                $L \gets c$, $c \gets 2c\cdot \textbf{1}_{\{H = \infty\}} + \frac{H + L}{2}\cdot\textbf{1}_{\{H < \infty\}}$\;
            }
        \Else{ \tcp{ \textcolor{bblue}{Implies $d(x,x+ \epsilon^2\sqrt{c} \cdot y) = d(x,x+ \epsilon^2 u)$}}
                \textbf{return} $c$\;
            }
        
    }
    End Repeat\;
}

\Fn{ApproximateDistance():}{
    Set $U := \curlybracket{e_i : i \in \bracket{p}}$ which is standard basis of $\reals^p$\;
    Learner triplet queries to find $ u_0 \in U$ such that $u_0^\top M^* u_0 = \max_{u \in U} u^\top M^* u$\; \tcp{\textcolor{bblue}{ Queries of the form $\{(0, u_i,u_j): u_i \neq u_j \in U\} $ suffice}}
    Set $y = u_0$\;
    Set $\ue := \curlybracket{e_i : i \in \bracket{p}} \cup \curlybracket{\frac{(e_i + e_j)}{\sqrt{2}} : i,j \in \bracket{p}, i\le j}$ \;
    %Learner finds linearly independent unit norm matrices \akash{change it to U} $\cB$ using standard basis of $\reals^p$\;
    \ForEach{$i \in \{1, 2, \ldots, \frac{p(p+1)}{2}\}$}{
        \text{Set} $u_i := \ue[i]$\; 
        \text{Set} $\hat{c}_i \gets \text{BinarySearch}(y, u_i, \epsilon)$\; \tcp{\textcolor{bblue}{This finds $\hat{c_i}$, i.e.  $\epsilon$-close to $c_i^*$ defined as $u_i^\top M^*u_i = c_i^* y^\top M^* y$}}
        %Learner queries the user for the label of $(0, \sqrt{\hat{c}_i} u_i, y)$\;
        Update $\mathcal{T} \gets \mathcal{T} \cup \{(0, \sqrt{\hat{c}_i}y, u_i)_{0}\}$\;
        \tcp{\textcolor{bblue}{Append $(0, \sqrt{\hat{c}_i}y, u_i)$ with label 0}}
    }
    \bf{return} $\cT$\; 
}
Learner generates labeled triplet queries $\cT \leftarrow \text{ApproximateDistance}()$\;\tcp{\textcolor{bblue}{Solve for all $u_i \in \ue$ $u_i^{\top} M u_i = \hat{c_i}$ in the space of $p \times p$ symmetric matrices and then project onto the semidefinite cone $\symmp$.}}
Learner outputs $\hat{d}_{M'} \in \maha$\;% consistent with $\cT$\;% and satisfies $y^\top M' y = 1$\;
\end{algorithm}
%\end{minipage}
\caption{Learning a Mahalanobis distance function with triplet queries in noisy setting where the learner receives labels to general $d$.}
    \label{fig:extendedalgos}
\end{figure}

\newpage
\section{Learning a smooth distance function via local Mahalanobis distance functions}\label{app: hessian}
\begin{algorithm}[!]
\caption{Learning a smooth distance function via local Mahalanobis distance functions}
\label{alg: smoothdisthessianfull}
%\SetKw{KwFor}{for}
%\SetKw{KwWhile}{while}
%\SetKw{KwIf}{if}
%\SetKw{KwElseIf}{else if}
%\SetKw{KwElse}{else}
%\SetKw{KwEnd}{}
%\SetKwProg{If}{if}{}{}
%\SetKwProg{Else}{else}{}{}
\SetKwProg{ForEach}{foreach}{}{}
\LinesNumbered
\SetKwInOut{Given}{Given}
\SetKwInOut{Output}{Output}
%\Given{Input space $\mathcal{X} \sim \mathcal{C}_{\mathcal{X}}$, approximation threshold $\epsilon$}
%\Output{Learner outputs $d'$}
\Given{Approximation threshold $\omega$.}
\Output{Distance function $d'$ on $\cX$.}
%\textit{In batch setting:}\;
Construct an $\epsilon$-cover $\mathcal{C} \subset \cX$ with respect to $\ell_2$ distance, for $\epsilon$ as given in \thmref{thm: smoothhessian}\;
Using triplet queries, construct distance function $d_f$ on $\mathcal{C}$ that is triplet-equivalent to $d$ using \algoref{alg: finite}\; \tcp{\textcolor{bblue}{$d_f$ is triplet equivalent to $d$ restricted to $\cC$ as shown in \thmref{thm: finite}.}}
Learner triplet queries to learn local Mahalanobis distance functions $\curlybracket{\sf{H}_c}_{c \in \cC}$ with error threshold $\xi$ as shown in \thmref{thm: smoothhessian}\; \tcp{\textcolor{bblue}{$\curlybracket{\sf{H}_c}_{c \in \cC}$ approximate the Hessian matrices $\curlybracket{\sH}_{c \in \cC}$ within the $\curlybracket{\tau_c}_{c \in \cC}$ factor as shown in \thmref{thm: advmahamain}.}}
Learner picks $\theta := 4\hat{\beta}$ as shown in \thmref{thm: smoothhessian}\;
For any $x \in \cX$, set $c(x) := \argmin_{c \in \mathcal{C}} \|x - c\|_2$\;
\ForEach{ $(x,y) \in \mathcal{X}^2$ to compute $d'(x, y)$}{
    \If{$(y - x)^\top \sf{H}_{c(x)} (y - x) > \theta$}{
         $d'(x, y) = d_{f}(c(x), c(y)) + \theta$\;
    }
    \Else{
         $d'(x, y) = (y - x)^\top \sf{H}_{c(x)} (y - x)$\;
    }
}
%\textit{In batch setting:}\;
%Learner queries using \algoref{alg: tree} to learn a finite distance function $(\mathcal{C}, d_f)$ for an $\epsilon^5$-cover $\mathcal{C}\left(\mathcal{X}, \epsilon^5, \ell_2\right)$\;
%\BlankLine
%\SetKwProg{For}{for}{}{}
%\SetKwProg{ElseIf}{else if}{}{}
%\SetKwProg{Begin}{begin}{}{}
Output $d'$.
\end{algorithm}

In this Appendix, we provide the proof of \thmref{thm: smoothhessian}. It is divided broadly in three subsections: \appref{subapp: close} where we show desired local Mahalanobis approximation of a target Hessian matrix in \lemref{lem: taubound}, \appref{subapp: nice} where we achieve some well-behaved properties of the underlying distance function $d$ in results--\lemref{lem: convex} and \lemref{lem: smalldist}, and the final proof of \thmref{thm: smoothhessian} is given in \appref{subapp: final}.

We provide an extended version of \algoref{alg: smoothdisthessian} in \algoref{alg: smoothdisthessianfull}. 

%where we have also corrected a typographical error (made in the main paper) with $c(x)$ (written in place of $x$) used in computing the local Mahalanobis distance in the \textbf{if} condition. 
%\sanjoy{This section has a lot of different notation. I think it might help to express results in terms of the three parameters of the algorithm: (a) the covering radius $\epsilon$; (b) the approximation error in Mahalanobis matrices (say $\xi$); and (c) the threshold for selecting between local and global, $\theta$.}

%\subsection{General properties of a smooth distance function}\label{subapp: general}

\paragraph{Hessian continuity} In \assref{assump: a4}, we discussed the Hessian continuity of $d$. This assumption allows using the same Hessian matrix to compute distances in close neighborhood of a sample.

To see this, for a sample $s \in \cX$ assume that $d(x, \cdot)$ satisfies \assref{assump: a1}. Using the linear approximation in \lemref{lem: taylorlemma}, we note that 
\begin{align}
    d(x', y') &\approx \frac{1}{2}(x' - y')^{\top}\sf{H}_{x'}(x' - y') \label{eq: 11}\\
    d(x, x - x' + y') &\approx \frac{1}{2}(x' - y')^{\top}\sf{H}_{x}(x' - y') \label{eq: 12}
\end{align}
But we can bound the difference assuming (wlog) \eqref{eq: 11} $\ge$ \eqref{eq: 12},
\begin{align*}
    (x' - y')^{\top}(\sf{H}_{x'}- \sf{H}_{x})(x' - y') \le ||\sf{H}_x - \sf{H}_{x'}||_{F}\cdot||x' -y'||^2_2%\\
    %& \le L\cdot ||x - x'||_2\cdot ||x' -y'||^2_2
\end{align*}
But using the Hessian continuity of $d$ we get 
\begin{align}
    (x' - y')^{\top}(\sf{H}_{x'}- \sf{H}_{x})(x' - y')\le L\cdot ||x - x'||_2\cdot ||x' -y'||^2_2 \label{eq: hessian}
\end{align}
Now, if $x, x', y'$ are in close proximity the difference in Mahalanobis distances is small. Thus, to approximate $d(x', y')$, we can use curvature information at $x$ itself, in the form of the Hessian matrix $\sf{H}_x$.

 \subsection{Close approximation of a Hessian matrix}\label{subapp: close}

 Using \algoref{alg:learnapproxmaha}, \algoref{alg: learnapproxmahamain1}, and \algoref{alg: learnapproxmahamain} we demonstrated how to learn the Mahalanobis distance function in both noiseless and noisy settings. While these approximations are generally tight, they may deviate by a factor, as stated in \thmref{thm: advmahamain}. In the following proof, we show how this factor influences the eigenvalues of the approximated Mahalanobis distance function. 
 
 The result is stated below, followed by the proof.
% \sanjoy{In the lemma below, it is unclear what is meant by "error threshold". It would be good to just explicitly that we have $H_c$ such that $\|\tau_c H_c^* - H_c\|_F$ is less than ...}
%\sanjoy{Here's an example of rewriting results in terms of the algorithmic parameters: the next result starts with the premise that $\|\tau_c H_c^* - H_c\|_F \leq \xi$ and concludes that if $\xi \leq \min(1, \psi/2\Psi)$ then (1) $\lambda_{min}(H_c) \geq \tau_c \psi/2$ and $\lambda_{max}(H_c) \leq 2 \tau_c \Psi$ and (2) $(x-x')^T \tau_c H_c^* (x-x') \leq (1 + 2\xi/(\tau_c \psi)) (x-x')^T H_c (x-x')$.}
\begin{lemma}\label{lem: taubound} 
Let $\cX \subset \reals^p$ be a subset. Consider a distance function $d: \cX \times \cX \to \reals_{\ge 0}$ that satisfies \assref{assump: a1} and \assref{assump: a2} with scalars $\gm, \gM, M > 0$. For a given sample $c \in \cX$, assume $\sf{H}_c$ is an approximation of the Hessian matrix $\sH$ such that $\|\tau_c H_c^* - H_c\|_F \leq \xi$ for some scalar $\xi > 0$ (where $\tau_c$ is chosen as per \algoref{alg: learnapproxmahamain}). Then, we have the following bounds if $\xi \leq \min(1, \gm/2\gM)$: 
\begin{gather}
        \lambda_{\sf{min}}(\sf{H}_c) \ge \frac{\tau_c\gm}{2}, \quad \lambda_{\sf{max}}(\sf{H}_c) \le 2\tau_c\gM   \label{eq: eigenvalue}
    \end{gather}
%    Furthermore, for any $x, x' \in \cX$, we have:
 %   \begin{align}
 %       (x - x')^{\top}(\tau_c\sH) (x - x') \le \left(1 + \frac{2\xi}{\tau_c \gm}\right) (x - x')^{\top}\sf{H}_{c}(x - x') \label{eq: factorub2}
 %   \end{align}
\end{lemma}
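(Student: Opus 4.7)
The plan is to apply Weyl's inequality (classical eigenvalue perturbation bound for symmetric matrices) to the pair $\tau_c \sf{H}^*_c$ and $\sf{H}_c$, after first translating the hypotheses of the lemma into spectral bounds on $\tau_c \sf{H}^*_c$.

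First, I would note that by \assref{assump: a2}, the Hessian $\sf{H}^*_c$ satisfies $\gm I_p \preceq \sf{H}^*_c \preceq \gM I_p$, so its eigenvalues lie in $[\gm, \gM]$. Scaling by the positive constant $\tau_c$ (which by \thmref{thm: advmahamain} lies in $[1/\gM, 1/\gm]$) preserves symmetry and positive semidefiniteness, giving
\begin{align*}
\tau_c \gm \;\le\; \lambda_{\min}(\tau_c \sf{H}^*_c) \;\le\; \lambda_{\max}(\tau_c \sf{H}^*_c) \;\le\; \tau_c \gM.
\end{align*}

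Next, since both $\tau_c \sf{H}^*_c$ and $\sf{H}_c$ are symmetric (the output of \algoref{alg: learnapproxmahamain} is symmetric by construction, and projection onto the PSD cone preserves symmetry), Weyl's inequality yields for every index $i$,
\begin{align*}
\lvert \lambda_i(\tau_c \sf{H}^*_c) - \lambda_i(\sf{H}_c) \rvert \;\le\; \lVert \tau_c \sf{H}^*_c - \sf{H}_c \rVert_{\mathrm{op}} \;\le\; \lVert \tau_c \sf{H}^*_c - \sf{H}_c \rVert_F \;\le\; \xi,
\end{align*}
using the standard fact $\lVert \cdot \rVert_{\mathrm{op}} \le \lVert \cdot \rVert_F$ and the hypothesis of the lemma. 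In particular,
\begin{align*}
\lambda_{\min}(\sf{H}_c) \;\ge\; \tau_c \gm - \xi, \qquad \lambda_{\max}(\sf{H}_c) \;\le\; \tau_c \gM + \xi.
\end{align*}

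Finally, I would plug in the hypothesis $\xi \le \min(1, \gm/(2\gM))$ together with $\tau_c \ge 1/\gM$. For the lower bound, $\xi \le \gm/(2\gM) \le \tau_c \gm / 2$, so $\tau_c \gm - \xi \ge \tau_c \gm / 2$. For the upper bound, $\xi \le 1 \le \tau_c \gM$, so $\tau_c \gM + \xi \le 2 \tau_c \gM$. Combining gives the stated inequalities in \eqnref{eq: eigenvalue}. The argument is almost entirely routine; the only point requiring care is confirming that the bounds on $\tau_c$ given by \thmref{thm: advmahamain} interact correctly with the scalar thresholds on $\xi$, which is why the $\min(1, \gm/(2\gM))$ form appears.
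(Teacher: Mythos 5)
Your proposal is correct and takes essentially the same route as the paper: Weyl's eigenvalue perturbation inequality combined with the spectral bounds on $\tau_c\sf{H}^*_c$ coming from \assref{assump: a2} and the range of $\tau_c$. The only cosmetic difference is that for the upper bound you invoke the $\xi\le 1\le\tau_c\gM$ branch of the hypothesis, whereas the paper reuses $\xi\le\gm/(2\gM)\le\tau_c\gm/2$ for both directions (together with $\gm\le\gM$); both arguments are valid.
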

\iffalse
\begin{lemma}\label{lem: taubound}
    Let $\cX \subset \reals^p$ be a subset. Consider a distance function $d: \cX \times \cX \to \reals_{\ge 0}$ as stated in \thmref{thm: smoothhessian}. For any sample $c \in \cX$, consider an approximation $\sf{H}_c \succ 0$ to the Hessian matrix $\sH$ (see \algoref{alg: learnapproxmahamain} and \thmref{thm: advmahamain}) such that $\|\tau_c H_c^* - H_c\|_F \le \frac{ \gm \epsilon}{16\gM}$, where $\epsilon \in \left(0, \frac{3\gm^3}{2M p^{1.5} \gM^2}\right)$. Then, we have the following bounds on the eigenvalues:
    \begin{gather}
        \lambda_{\sf{min}}(\sf{H}_c) \ge \frac{\tau_c\gm}{2}, \quad 2\tau_c\gM \ge \lambda_{\sf{max}}(\sf{H}_c) \label{eq: eigenvalue}
    \end{gather}
    where the constant $\tau_c$ depends on $c$ and satisfies:
    \begin{align}
        \frac{1}{\gm} \ge \tau_c \ge \frac{1}{\gM}. \label{eq: tau}
    \end{align}
    Furthermore, for any $x, x' \in \cX$, we have:
    \begin{align}
        (x - x')^{\top}\sH (x - x') \le \frac{\left(1 + \frac{\epsilon}{8}\right)}{\tau_{c}} (x - x')^{\top}\sf{H}_{c}(x - x') \label{eq: factorub2}
    \end{align}
\end{lemma}
\fi
%\sanjoy{Should also assume $\epsilon \le 1$, I think? (equation E.11 seems to suggest this)}
\begin{proof} 
%Using \thmref{thm: advmahamain}, 
%the learner achieves the approximation for \( c \in \cX \) with the stated error $\frac{ \gm \epsilon}{16\gM}$:
%\[
%||\tau_c\cdot \sf{H}^*_c - \sf{H}_c||_{F} \le \frac{ \gm \epsilon}{16\gM},%\frac{8p^2 \gM^2 f(\epsilon)}{\gm^2},
%\]
First, we note \( \tau_c = \frac{1}{y^\top \sH y} \), i.e., \( \tau_c \) is a scalar corresponding to the Mahalanobis distance function \( \sH \) that scales the quadratic product \( y^{\top} \sH y = 1 \) (see proofs of \thmref{thm: mahamain} and \thmref{thm: advmahamain} in \appref{app: maha}).

%Thus, for any \( x, x' \in \cX \),
%\begin{align}\label{eq: quad}
%(x - x')^{\top} \left( \tau_c \cdot \sH - \sf{H}_c \right) (x - x') &\le ||\tau_c \cdot \sH - \sf{H}_c||_F \cdot ||x - x'||_2^2 \\
%&\le \xi\cdot ||x - x'||_2^2.
%\end{align}
Given that $||\tau_c \cdot \sH - \sf{H}_c||_F \le \xi$ using Weyl's inequality we have the following bound on the eigenvalues of $\tau_c \sH$ and $\sf{H}_c$
 \begin{align*}
        \forall (i \in \bracket{p}),\,\,\, |\tau_c\lambda_i(\sH) - \lambda_i(\sf{H}_c)| \le \xi%\frac{8p^2 \gM^2 f(\epsilon)}{\gm^2}%4\cdot{\kappa_0} \cdot p\epsilon.%\frac{4p\cdot \epsilon}{\kappa \cdot |\lambda_{min}(M^*)|}
     \end{align*} 
Thus we get the following desired bounds
%As stated in \thmref{thm: advmahamain}, we can bound the difference in eigenvalues of \( \tau_c \sH \) and \( \sf{H}_c \) for the chosen error threshold \( f(\epsilon) \) as follows\footnote{Here, we assume that \( \epsilon \) is smaller than 1.}:
\begin{align}\label{eq: eigenbound}
|\lambda_{\sf{min}}(\tau_c \sf{H}^*_c) - \lambda_{\sf{min}}(\sf{H}_c)| &\le \frac{\tau_c \gm }{2}, \quad \text{implying} \quad \lambda_{\sf{min}}(\sf{H}_c) \ge \frac{\tau_c \gm}{2}, \\
|\lambda_{\sf{max}}(\tau_c \sf{H}^*_c) - \lambda_{\sf{max}}(\sf{H}_c)| &\le \frac{\tau_c \gm }{2}, \quad \text{implying} \quad 2 \tau_c \gM \ge \lambda_{\sf{max}}(\sf{H}_c).
\end{align}
This proves the claim in \eqnref{eq: eigenvalue} of the lemma.
%Next, revisiting \eqnref{eq: quad}, we rewrite it as follows:
%\begin{align}
%(x - x')^{\top} \sH (x - x') \le \frac{1}{\tau_c} \left( (x - x')^{\top} \sf{H}_c (x - x') + \textcolor{black}{\xi \cdot ||x - x'||_2^2} \right) \label{eq: ube1}
%\end{align}
%The right-hand side in \eqnref{eq: ube1} can be simplified using the bounds on the eigenvalues of \( \sf{H}_c \) in \eqnref{eq: eigenbound} as:
%\begin{align}
%\textcolor{black}{\xi \cdot ||x - x'||_2^2} \le \frac{ 2\xi}{\tau_c \gm} \cdot \frac{\tau_c \gm}{2}\cdot||x - x'||_2^2 \le \frac{ 2\xi}{\tau_c \gm} (x - x')^{\top} \sf{H}_c (x - x'), \label{eq: straight}
%\end{align}
%and thus, the left-hand side of \eqnref{eq: ube1} is bounded as:
%\[
%(x - x')^{\top} (\tau_c\sH) (x - x') \le \left(1 + \frac{2\xi}{\tau_c \gm}\right) (x - x')^{\top} \sf{H}_c (x - x').
%\]
%This completes the proof of all the claims in the lemma.
\end{proof}
\subsection{Well-behaved properties of a smooth distance function}\label{subapp: nice}

In this subsection, we provide several key properties of a smooth distance function that satisfies the conditions in \thmref{thm: smoothhessian}. The first result, as stated in \lemref{lem: convex}, establishes the strong convexity (see \defref{def:strongconv}) and smoothness (see \defref{def:smooth}) of 
$d$ in a small neighborhood of a given sample, ensuring predictable behavior and enabling us to measure the rate of growth of the function. The second result, as stated in \lemref{lem: smalldist}, demonstrates how local Mahalanobis approximations, using \algoref{alg: learnapproxmahamain}, produce lower bounds on distance values within a small neighborhood. To visually understand the proofs, we illustrate them in \figref{fig:local}.

We begin by introducing the concepts of strong convexity and smoothness for a differentiable function. These notions allow us to study the local properties of a distance function \(d\) that satisfies \assref{assump: a1}.

\begin{definition}[Pointwise \(\mu\)-strong convexity]\label{def:strongconv}
Let \(f: \mathcal{S} \to \mathbb{R}\) be a differentiable function, where \(\mathcal{S} \subset \mathbb{R}^p\). We say \(f\) is \(\mu\)-strongly convex at \(x_0 \in \mathcal{S}\) if
\[
    f(x) - f(x_0) \geq \nabla f(x_0) \cdot (x - x_0) + \mu\cdot \|x - x_0\|_2^2, \quad \forall x \in \mathcal{S}.
\]
\end{definition}

\begin{definition}[Pointwise \(\gamma\)-smoothness]\label{def:smooth}
Let \(f: \mathcal{S} \to \mathbb{R}\) be a differentiable function, where \(\mathcal{S} \subset \mathbb{R}^p\). We say \(f\) is \(\gamma\)-smooth at \(x_0 \in \mathcal{S}\) if
\[
    f(x) - f(x_0) \leq \nabla f(x_0) \cdot (x - x_0) + \gamma\cdot \|x - x_0\|_2^2, \quad \forall x \in \mathcal{S}.
\]
\end{definition}

We can apply these concepts to examine the local behavior of a distance function \(d\) that satisfies \assref{assump: a1}. Fixing a sample \(x \in \cX\), we use Taylor's approximation from \eqnref{eq: 1} to rewrite $d(x,x+h)$ where $h$ is within a small neighborhood of $x$ wrt $\ell_2$ norm as
\[
    d(x, x + h) = \frac{1}{2} h^\top \sHx h + R_{x,2}(h),
\]
where \(d(x, x) = 0\) and \(\partial_x d(x, \cdot) = 0\). To bound \(R_{x,2}(h)\), we rely on \lemref{lem: taylorlemma}.

Next, we present \lemref{lem: convex}, with the proof outlined in the subsequent discussion.
\begin{lemma}\label{lem: convex}
    Let $\cX \subset \reals^p$ be a subset. Consider a distance function $d: \cX \times \cX \to \reals_{\ge 0}$ that satisfies \assref{assump: a1} and \assref{assump: a2} with scalars $\gm, \gM, M > 0$. Then for any sample $x \in \cX$ the function $d(x,\cdot)$ is $\frac{\gm}{4}$-strongly convex and $\gM$-smooth in a neighborhood of $\frac{3\gm}{2M p^{\frac{3}{2}}}$.
\end{lemma}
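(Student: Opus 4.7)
\textbf{Proof plan for Lemma~\ref{lem: convex}.} The plan is to apply the Taylor expansion of $f_x := d(x, \cdot)$ around $x$ and control the cubic remainder using the uniform bound on third partial derivatives coming from \assref{assump: a1}, then use the eigenvalue sandwich from \assref{assump: a2} to bound the quadratic part from above and below. Since $d(x,x)=0$ and $\nabla f_x(x) = 0$ (because $x$ is a local minimizer of $f_x$ and $f_x$ is $C^3$), verifying \defref{def:strongconv} and \defref{def:smooth} at the base point $x_0 = x$ reduces to sandwiching $d(x, x+h)$ between $\tfrac{\gm}{4}\|h\|_2^2$ and $\gM\|h\|_2^2$.

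Concretely, the first step is to invoke \lemref{lem: taylorlemma}, which gives
\begin{equation*}
\tfrac{1}{2} h^\top \sHx h - \tfrac{Mp^{3/2}}{6}\|h\|_2^3 \;\le\; d(x, x+h) \;\le\; \tfrac{1}{2} h^\top \sHx h + \tfrac{Mp^{3/2}}{6}\|h\|_2^3.
\end{equation*}
Next, using $\gm I_p \preceq \sHx \preceq \gM I_p$ from \assref{assump: a2}, one obtains $\tfrac{\gm}{2}\|h\|_2^2 \le \tfrac{1}{2} h^\top \sHx h \le \tfrac{\gM}{2}\|h\|_2^2$.

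For the strong convexity direction, I would factor the lower bound as $\|h\|_2^2\bigl(\tfrac{\gm}{2} - \tfrac{Mp^{3/2}}{6}\|h\|_2\bigr)$ and require this to dominate $\tfrac{\gm}{4}\|h\|_2^2$; this forces $\|h\|_2 \le \tfrac{3\gm}{2Mp^{3/2}}$, which is precisely the neighborhood radius stated in the lemma. For the smoothness direction, the upper bound becomes $\|h\|_2^2\bigl(\tfrac{\gM}{2} + \tfrac{Mp^{3/2}}{6}\|h\|_2\bigr)$, which is $\le \gM \|h\|_2^2$ whenever $\|h\|_2 \le \tfrac{3\gM}{Mp^{3/2}}$; since $\gM \ge \gm$, this is automatically implied by the strong convexity radius, so both inequalities hold simultaneously on the ball of radius $\tfrac{3\gm}{2Mp^{3/2}}$.

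There is no real obstacle here; the only subtlety is to be careful that \defref{def:strongconv} and \defref{def:smooth} as stated in the paper drop the conventional factor of $1/2$, so the matching constants are $\mu = \gm/4$ (not $\gm/2$) and $\gamma = \gM$ (not $\gM/2$), which is what forces the specific bound $\tfrac{3\gm}{2Mp^{3/2}}$ on the neighborhood rather than a larger one. Everything else is just algebraic manipulation of the Taylor remainder inequality.
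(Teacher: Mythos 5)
Your proposal is correct and follows essentially the same path as the paper's own proof: apply Taylor's theorem via \lemref{lem: taylorlemma}, sandwich the quadratic term using the eigenvalue bounds from \assref{assump: a2}, and absorb the cubic remainder by restricting to the ball of radius $\frac{3\gm}{2Mp^{3/2}}$. The only cosmetic difference is in the smoothness direction: you verify that the looser constraint $\|h\|_2 \le \frac{3\gM}{Mp^{3/2}}$ suffices and note it is implied by the strong-convexity radius, whereas the paper directly bounds the remainder by $\frac{\gm}{4}\|h\|_2^2$ on the same ball and uses $\frac{\gM}{2}+\frac{\gm}{4}\le\gM$; these are equivalent.
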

%\sanjoy{I think this neighborhood size needs to be a bit smaller, with a constant $3/2$ rather than 4. This probably needs to be propagated to results later in this section.}
\begin{proof}
    We proof the claim of the statement using the Taylor's approximation as shown in \lemref{lem: taylorlemma}.
    
    First, note that for any fixed sample $x \in \cX$, and $h$ in a small neighborhood of $x$ we have the following
    %$d(x, y)$ is strongly convex for $y$ close to $x$:
    \begin{align*}
        d(x, x+ h) \ge \frac{1}{2}h^{\top}\sHx h - \frac{M p^{\frac{3}{2}}}{6} \cdot||h||^{3}_2
    \end{align*}
    Now, if $||h||_2$ is small enough, i.e. smaller than $\frac{3\gm}{2M p^{\frac{3}{2}}}$, then we have 
    \begin{align}
        d(x, x+ h) \ge \frac{\gm}{2}||h||^2_2 - \frac{\gm}{4}||h||^2_2 = \frac{\gm}{4}||h||^2_2 \label{eq: strongconvex}
    \end{align}
    Thus, $d(x,\cdot)$ is $\frac{\gm}{4}$-strongly convex at $x$. Since $\gm$ is the smallest eigenvalue for the Hessian at any arbitrary sample $x \in \cX$, this holds over $\cX$. On the other hand, the function $d(x,\cdot)$ is smooth, i.e.,
    \begin{align}
        d(x, x+ h) \le \frac{1}{2}h^{\top}\sHx h + \frac{M p^{\frac{3}{2}}}{6} \cdot||h||^{3}_2 \le \frac{\gM}{2}||h||^2_2 +  \frac{\gm}{4}||h||^2_2 \le \gM ||h||^2_2 \label{eq: smoothconvex}
    \end{align}
    This completes the proof of the statement.
\end{proof}

In the previous result, we studied the strong convexity and smoothness of $d$. We now demonstrate how local approximations via the Mahalanobis distance function help in bounding 
$d$ within small neighborhoods. This result is formalized in \lemref{lem: smalldist}, with the proof provided thereafter.

    \begin{lemma}\label{lem: smalldist}
        Let $\cX \subset \reals^p$ be a subset. Consider a distance function $d: \cX \times \cX \to \reals_{\ge 0}$ as stated in \thmref{thm: smoothhessian}. Fix $\delta \le \frac{3\gm}{2M p^{\frac{3}{2}}}$, and define
        \begin{align*}
             \Delta_{\delta} := \min_{x \in \cX} \curlybracket{d(x,x') : x' \in \cX \setminus B_2(x,\delta)}
        \end{align*}
     For a given sample $c \in \cX$, assume $\sf{H}_c$ is an approximation of the Hessian matrix $\sH$ such that $\|\tau_c H_c^* - H_c\|_F \le \xi$, where $\xi \leq \min(1, \gm/2\gM)$. Then, for any fixed sample $x_0 \in \cX$ and a scalar $\hat{\beta} \le \frac{4\Delta_\delta \gM}{\gm^2\g}$ where $\g \ge 1$, the following property holds on $\sf{H}_c$:
    %and an error threshold $\frac{\gm^3\epsilon}{128p^2\gM^3}$ such that $\epsilon \in \paren{0, \frac{6\gm^2}{4M' p^{1.5} \gM^2}}$ where $M' := \min\{\sqrt{M},M\}$.
    %Assume that \algoref{alg: learnapproxmahamain} outputs the approximation $\sf{H}_c \succ 0$ to the target Hessian matrix $\sH $. Then, the following property holds on $\sf{H}_c$:
    \begin{align*}
        \forall y \in \cX,\,\, t \in \bracket{0, \g}, \quad (x_0 - y)^{\top}\sf{H}_c(x_0 - y) \ge 2t\hat{\beta} \quad \text{implying} \quad d(x_0,y) \ge \frac{\gm^2\cdot t \hat{\beta}}{4\gM}.
    \end{align*}
    \end{lemma}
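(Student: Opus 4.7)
The strategy is to convert the quadratic lower bound $(x_0-y)^\top \sf{H}_c (x_0-y) \ge 2t\hat\beta$ into an $\ell_2$ distance lower bound by controlling the top eigenvalue of $\sf{H}_c$, and then to translate that $\ell_2$ bound into a lower bound on $d(x_0,y)$ by a two-case argument: in a ``near'' regime we use local strong convexity of $d(x_0,\cdot)$ supplied by \lemref{lem: convex}, and in a ``far'' regime we fall back on the non-vanishing-distance assumption \assref{assump: a5}, which is encoded in $\Delta_\delta$.

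\textbf{Step 1 (Hessian to $\ell_2$).} Applying \lemref{lem: taubound} with the hypothesis $\xi \le \min(1,\gm/(2\gM))$ gives $\lambda_{\max}(\sf{H}_c) \le 2\tau_c \gM \le 2\gM/\gm$, since $\tau_c \le 1/\gm$. Combining $(x_0-y)^\top \sf{H}_c(x_0-y) \le \lambda_{\max}(\sf{H}_c)\|x_0-y\|_2^2$ with the hypothesis produces
\[
\|x_0-y\|_2^2 \;\ge\; \frac{2t\hat\beta}{\lambda_{\max}(\sf{H}_c)} \;\ge\; \frac{t\hat\beta\,\gm}{\gM}.
\]

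\textbf{Step 2 (case split on $\|x_0-y\|_2$).} If $\|x_0-y\|_2 \le \delta$, then because $\delta \le 3\gm/(2M p^{3/2})$ we are inside the neighborhood where \lemref{lem: convex} gives $(\gm/4)$-strong convexity of $d(x_0,\cdot)$, yielding $d(x_0,y) \ge (\gm/4)\|x_0-y\|_2^2 \ge \gm^2 t\hat\beta/(4\gM)$, as desired. If instead $\|x_0-y\|_2 > \delta$, the definition of $\Delta_\delta$ gives $d(x_0,y) \ge \Delta_\delta$, and the calibration $\hat\beta \le 4\Delta_\delta\gM/(\gm^2\g)$ together with $t \le \g$ yields $\Delta_\delta \ge \gm^2\g\hat\beta/(4\gM) \ge \gm^2 t\hat\beta/(4\gM)$. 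Either way the conclusion holds.

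\textbf{Main obstacle.} The delicate point is synchronizing the two regimes so that the lower bound $\gm^2 t\hat\beta/(4\gM)$ is attained uniformly in $t \in [0,\g]$; this is exactly what forces the upper bound on $\hat\beta$ in terms of $\Delta_\delta$ and $\g$, and explains the factor $4\gM/(\gm^2 \g)$ in the hypothesis. A secondary concern is that the bound in Step 1 uses only $\lambda_{\max}(\sf{H}_c)$, so one must verify that the Weyl-inequality argument underlying \lemref{lem: taubound} survives under the approximation tolerance $\xi \le \min(1,\gm/(2\gM))$; this is the tightest constraint the proof imposes on the Mahalanobis-approximation error.
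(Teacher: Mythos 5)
Your proof is correct and takes essentially the same route as the paper: convert the Mahalanobis lower bound to an $\ell_2$ lower bound via $\lambda_{\max}(\sf{H}_c)\le 2\tau_c\gM$ from \lemref{lem: taubound}, then split on whether $\|x_0-y\|_2\le\delta$ (strong convexity from \lemref{lem: convex}) or $\|x_0-y\|_2>\delta$ (the $\Delta_\delta$ bound, calibrated by the hypothesis $\hat\beta\le 4\Delta_\delta\gM/(\gm^2\g)$). The reasoning, the key lemmas invoked, and the role of each hypothesis all match the paper's argument.
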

\begin{figure}[t!]
    \centering
    \includegraphics[width=0.65\textwidth]{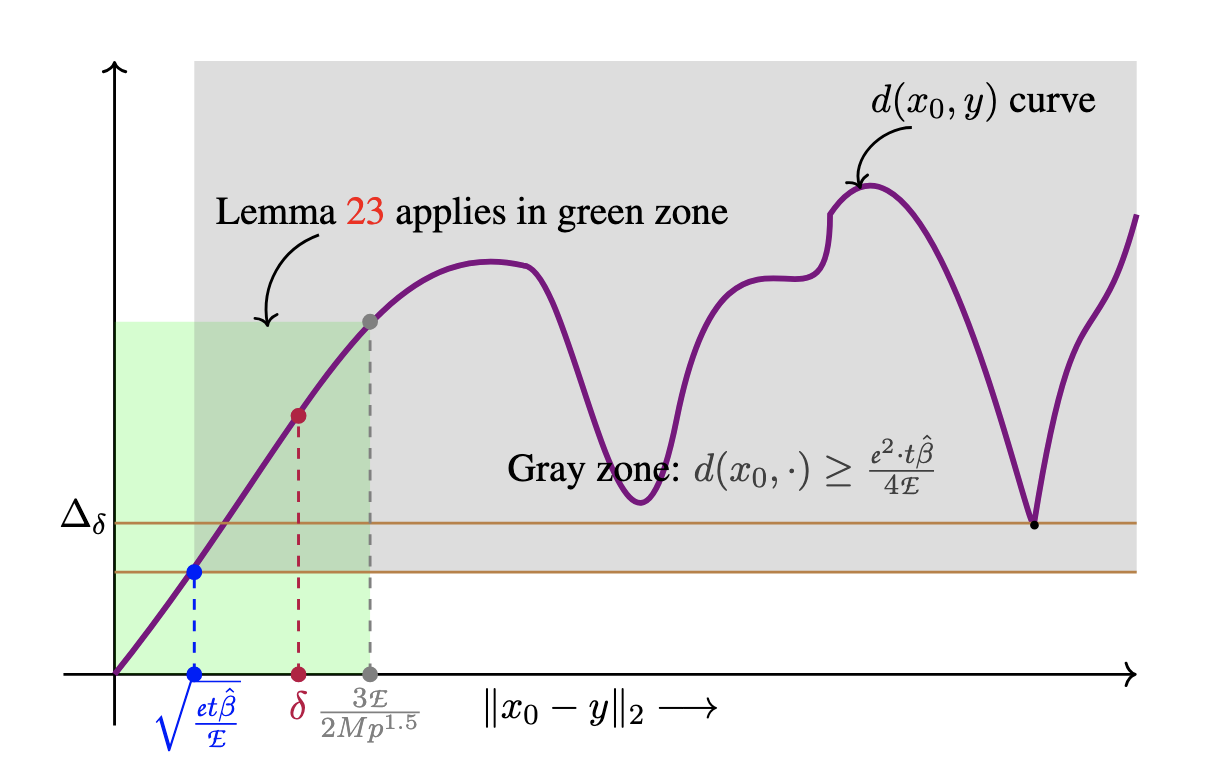} % Replace 'your_image.png' with your actual image file name
    \caption{
    The illustration is based on the results of \lemref{lem: convex} and \lemref{lem: smalldist}. The violat curve represents the function \( d(x_0, \cdot): \cX \to \reals_{\geq 0} \). The scalar \( \Delta_\delta := \min_{x \in \cX} \{ d(x, x') : x' \in \cX \setminus B_2(x, \delta) \} \), with \textcolor{purple}{\( \delta = \frac{4\gm^2}{M p^{1.5} \gM} \)}, shows the minimum value of \( d(x_0, \cdot) \) regardless of the choice of \( x_0\). The green zone, located to the left of the scalar \textcolor{gray}{\( \frac{3 \gM}{2M p^{1.5}} \)}, represents the interval where \( d(x_0, \cdot) \) is both strongly convex and smooth, as demonstrated in \lemref{lem: convex}.
Assume that the scalar \( \hat{\beta} \) is selected so that the distance \textcolor{blue}{\( \sqrt{\frac{\gm t\hat{\beta}}{\gM}} \)}, for any $t \in \bracket{0, \g}$, is within the safe green zone. Then the gray zone, to the right of \textcolor{blue}{\( \sqrt{\frac{\gm t\hat{\beta}}{\gM}} \)}, indicates that \( d(x_0, \cdot) \) is lower-bounded by \textcolor{red}{\( \frac{\gm^2 \cdot t\hat{\beta}}{4\gM} \)}, as shown in \lemref{lem: smalldist}.  }
    \label{fig:local}
\end{figure}

    \begin{proof} 
%For any \( y' \in \cX \) and $t \in \bracket{1, \g}$, consider the boundary case where \( (x_0 - y')^{\top}\sf{H}_c(x_0 - y') = 2 t\hat{\beta} \). By applying \lemref{lem: taubound}, and noting that \( \tau_c \in \left[\frac{1}{\gM}, \frac{1}{\gm}\right] \) and that \( \lambda_{\sf{min}}(\sf{H}_c) \ge \frac{\tau_c \gm}{2} \) and \( 2\tau_c \gM \ge \lambda_{\sf{max}}(\sf{H}_c) \), we can bound the distance \( \| x_0 - y' \|_2 \) as:
%\[
%\frac{4 \gM t\hat{\beta}}{\gm} \ge \|x_0 - y'\|_2^2 \ge \frac{\gm t\hat{\beta}}{\gM}.
%\]
%Since \( \sqrt{\frac{4 \gM t\hat{\beta}}{\gm}} \) is smaller than \( \frac{3 \gm}{2M p^{3/2}} \), we can invoke \lemref{lem: convex} to apply the smoothness and strong convexity properties of \( d \), which yields:
%\begin{align}    
%\frac{4 \gM^2 t\hat{\beta}}{\gm} \ge d(x_0, y') \ge \frac{\gm^2 t\hat{\beta}}{4 \gM}. \label{eq: bounddist}
%\end{align}
%Note that for all $t \in \bracket{0, \g}$, \( \frac{4 \gM^2 t\hat{\beta}}{\gm} < \Delta_\delta \). 
%\sanjoy{I don't understand the reason for the stuff above; can't the proof just begin here?}
Pick any $t \in \bracket{0, \g}$ and $y \in \cX$. Then
\begin{align*}
    (x_0 - y)^{\top}\sf{H}_c(x_0 - y) \ge 2 t\hat{\beta} \implies \ \|x_0 - y\|_2^2 \ge \frac{\gm t\hat{\beta}}{\gM}
\end{align*}
where we use the bounds from \lemref{lem: taubound}, i.e. for \( \tau_c \in \left[\frac{1}{\gM}, \frac{1}{\gm}\right] \), we have \( \lambda_{\sf{min}}(\sf{H}_c) \ge \frac{\tau_c \gm}{2} \) and \( 2\tau_c \gM \ge \lambda_{\sf{max}}(\sf{H}_c) \).

There are two possibilities: either $\|x_0 - y\| \leq \delta$ or $\|x_0 - y\| > \delta$.
If $\|x_0 - y\| \leq \delta$, note that we can use \lemref{lem: convex} ($\delta \le \frac{3 \gm}{2M p^{3/2}}$), and thus we obtain 
%\eqnref{eq: bounddist} gives
\begin{align*}
    d(x_0,y) \ge \frac{\gm^2t' \hat{\beta}}{4\gM} \ge \frac{\gm^2t \hat{\beta}}{4\gM}
\end{align*}
where we write $\|x_0 - y\| = \frac{\gm t' \hat{\beta}}{\gM}$ for some scalar $t'$.

On the other hand, if $\|x_0 - y\| > \delta$, by the definition of $\Delta_\delta$, we have 
\begin{align*}
    d(x_0, y) \ge \Delta_\delta > \frac{\gm^2 \cdot\g \hat{\beta}}{4\gM}
\end{align*}
%Since \( y' \) is at most \( \frac{4 \gM t\hat{\beta}}{\gm} \) away from \( x_0\) in squared \( \ell_2 \)-norm, and \( d(x_0, y) \) is lower bounded by $\frac{\gm^2 t\hat{\beta}}{4 \gM}$ for any $y \in \cX$ such that $||x_0 - y||_2 \in \left[\sqrt{\frac{\gm t\hat{\beta}}{\gM}}, \delta \right]$ (due to strong convexity)
Thus, we have shown that
%we have the following implication:
\[
\forall y \in \cX\,\, t \in \bracket{0, \g}, \text{ s.t. } (x_0 - y)^{\top}\sf{H}_c(x_0 - y) \ge 2 t\hat{\beta} \implies  d(x_0, y) \ge \frac{\gm^2 \cdot t\hat{\beta}}{4 \gM}.
\]
%\sanjoy{Would be good to explain the last step more carefully, by breaking into the two cases $\|x_0 - y\| \leq \delta$ and $\|x_0 - y\| > \delta$.}
This completes the proof of the lemma.
\end{proof}

    \iffalse
        \thmref{thm: advmahamain}, we know that there exists $\tau_c > 0$ such that
        \begin{align}
            ||\tau_c\cdot \sf{H}^*_c - \sf{H}_c||_{F} \le \frac{2 \kappa_0 p \gmf(\epsilon) }{\gM}
        \end{align}
        
        In \eqnref{eq: strongconvex} of \lemref{lem: convex}, we showed that if $||h||_2$ is smaller than $\frac{4\gm}{Mp^{\frac{3}{2}}}$ then, $d(x,\cdot)$ is $\frac{\gm}{4}$-strongly convex at $x$ in the second argument. 
        
        Using this observation, we note that if $c \in \cd$ and $y \in \cX$ then on the boundary  $(c - y)^{\top}\sf{H}_c(c - y) = 2 \hat{\beta}$, we have\vspace{-2mm}
    \begin{align*}
        \frac{\gM\hat{\beta}}{\gm}  \ge ||c - y||_2^2 \ge \frac{\gm\hat{\beta}}{4\gM}
    \end{align*}
    
    Using strong convexity and smoothness as shown in \eqnref{eq: strongconvex}-\eqnref{eq: smoothconvex},
    \begin{equation*}
         \frac{\gM^2\cdot \hat{\beta}}{\gm} \ge d(c,y) \ge \frac{\gm^2\cdot \hat{\beta}}{16\gM}
    \end{equation*}
    where we have used the bounds from \eqnref{eq: tau} and \eqnref{eq: eigenbound}.
    
    Now note that $\frac{\gm^2\cdot \hat{\beta}}{16\gM} < \Delta$, which implies if $(c - y)^{\top}\sf{H}_c(c - y) > 2\hat{\beta}$ then $d(c,y) > \frac{\gm^2\cdot \hat{\beta}}{16\gM}$.
    \fi

With this we provide the proof of the main theorem of \secref{sec: mult}.

   \subsection{Proof of \thmref{thm: smoothhessian}}\label{subapp: final}
   %\sanjoy{Can we just use $\epsilon$ instead of $\sqrt{g(\omega)}$? The idea is to see exactly how the bound depends on the three algorithmic parameters.}

    In the proof below, we denote the multiplicative factor as \( \re \) to distinguish it from other notations. The learner selects a cover of size \( \epsilon > 0 \) for global approximation and an error threshold \( \xi \) for learning local Mahalanobis distances. We show that \algoref{alg: smoothdisthessianfull}, when operated with \( \epsilon \) and \( \xi \) bounded by certain functions of \( \re \), dependent on the smoothness parameters of \( d \), achieves triplet equivalence up to a multiplicative factor of \( \re \).
 %   Consider covers of the following form:
  %  \begin{itemize}
   %     \item $\epsilon^2$-cover $\cC(\cX, \epsilon^2, d)$ denoted as $\cC_2(\cX)$ with centers $\cC_2$,
   %     %\item $\epsilon^3$-cover $\cC(\cX, \epsilon^3, d)$, denoted as $\cC_3(\cX)$ with centers $\cC_3$.
   % \end{itemize}
   % We need to adjust some constants for the cover which can be fixed appropriately from the later discussion. 

   The learner queries the user in the following manner:%\akash{changes in epsilon error based on xi error requirement} %\akash{talk about complexity next to the construction}

   \begin{itemize}
       \item[a)] \hypertarget{item1}{(\textcolor{black}{\tt{Global strategy}}) The learner queries to learn a finite sample distance function $d_f$ on centers $\cC$ corresponding to a  $\epsilon$-cover of the space wrt to $\ell_2$ distance up to triplet equivalence
       using \algoref{alg: finite} where 
       \begin{gather}
           \epsilon \le \sqrt{\frac{\gm^2 \hat{\beta}\re}{16\gM^2(1 + \re)}}, \label{eq: g}\text{where }\\
           %\textcolor{red}{\hat{\beta} := \min\curlybracket{\frac{\gM\Delta}{\gamma^2_m}, \frac{4\gm^2}{Mp^{1.5}\gM}, \small\left(\frac{\gm h(\re)}{\small 16 \paren{\frac{M p^{\frac{3}{2}}}{6}\paren{\sqrt{\frac{2\gM}{\gm}} + 1} + L}}\right)^2}},\label{eq: hat1}\\
           \hat{\beta} := \min \Bigg\{ 
\underbrace{{\frac{\gm}{8\gM}\frac{9\gm^2}{4M^2p^3}},{ \frac{4\Delta_\delta \gM}{\gm^2\g}}}_{\text{\shortstack{Based on \\\lemref{lem: convex}, \lemref{lem: smalldist}}}}, 
\left( \frac{\gm \re}{8 (\re + 2)\left( \frac{M p^{1.5}}{6}\sqrt{\frac{8\g\gM}{\gm}} + \frac{L}{2}\sqrt{\re} \right)} \right)^2 \Bigg\},\label{eq: hat}
       \end{gather}
       where we set $\g \ge \frac{40 \gM^3}{\gm^3}$, and $\Delta_{\delta} := \min_{x \in \cX} \curlybracket{d(x,x') : x' \in \cX \setminus B_2\paren{x,\delta}}$ for $\delta \le \frac{3\gm}{2M p^{\frac{3}{2}}}$. %In the construction above any value smaller than $\re$ can be replaced as $g(s)$ is monotonic in $s$. 
       For this choice of the cover, using the claim in \thmref{thm: finite}, the query complexity is at most $\cN(\cX,\epsilon,\ell_2)^2 \log \cN(\cX,\epsilon,\ell_2)$ to learn $d_f$.
       }  %The TEACHER provides triplet comparisons to teach a finite sample distance function $(\cC_\Delta,d)$, denoted as $d_f$ (learner finds  $(\cC_\Delta,d)$ up to triplet equivalence as $d_f$). 
       \newline
       \item[b)] \hypertarget{item2}{(\textcolor{black}{\tt{Local strategy}}) %On each center $c \in \cC$, using \algoref{alg: learnapproxmahamain}, the learner queries triplet comparisons to learn the local Mahalanobis distance functions $\curlybracket{\sf{H}_c}_{c \in \cC_\Delta}$ with 
       %error threshold $\xi \le \frac{\gm^2f(\re)}{32p\kappa_0\gM^2}$, where $f(\re) \le \min \left\{\ew, \frac{6\gm}{4M' p^{1.5} \gM^2}\right\}$ and $M' := \min\{\sqrt{M}, M\}$}. The query complexity to learn the local Mahalanobis distance functions is $\cN(\sqrt{\epsilon^2},\cX,\ell_2)\paren{p^2\log \frac{2}{\xi} + p}$ using \thmref{thm: advmahamain}.
        On each center \( c \in \cC \), using \algoref{alg: learnapproxmahamain}, the learner queries triplet comparisons to learn the local Mahalanobis distance functions \( \{ \sf{H}_c \}_{c \in \cC} \), with an error threshold\footnote{As per \remref{rem: errorbound}, we can avoid the bound of $\xi \leq 3 \gm^3/(2Mp^{3/2} \gM^2)$ as stated in \thmref{thm: advmahamain} assuming that the learner can estimate the scaling factor $\rho$ (see discussion preceding \thmref{thm: advmahamain}).} \( \xi \leq \frac{\gm \re}{4\gM (\re + 2) } \).}
        %, where \( f(\re) \leq \min \left\{ \ew, \frac{3\gm^3}{2M p^{1.5} \gM^2} \right\} \).
        %\( \xi \leq \frac{\gm^3 f(\re)}{128p^2 \gM^3} \), where \( f(\re) \leq \min \left\{ \ew, \frac{6\gm}{4M' p^{1.5} \gM^2} \right\} \) and \( M' := \min\{ \sqrt{M}, M \} \).}
   The query complexity for learning the local Mahalanobis distance functions is \( \cN(\cX,\epsilon, \ell_2) \cdot (p^2 \log \paren{\frac{8p^2\gM^3(\re + 2)}{\gm^3\re}} + p) \), using \thmref{thm: advmahamain}. %\sanjoy{Note that for Theorem~\ref{thm: advmahamain}, we need $\xi \leq 3 \gm^3/(2Mp^{3/2} \gM^2)$.} \\
   %\akash{ I have left a remark after Theorem~\ref{thm: advmahamain}. We don't exactly need this. This is the case when the learner can't estimate the parameters--$\gm,\gM$. We can keep it as stated in the theorem or use the remark to avoid another bound on $\xi$.}
       %$f(\epsilon) = \epsilon^2$ threshold.
   \end{itemize}

   %-----------------------------------------\\
   % This implies that the $g(\omega) \le C\cdot \frac{w'}{p^3}$ where $w' = h(\re) := \re^3 \cdot \mathbf{1}_{\{\re < 1\}} + C_0 \cdot \mathbf{1}_{\{\re \geq 1\}}$ such that $C,C_0$ depends on $\gm,\gM, \Delta_\delta, M, L$. Error threshold $\xi \le C' \frac{w''}{p^{3.5}}$ where $w'' = \re \cdot \mathbf{1}_{\{\re < 1\}} + C_0' \cdot \mathbf{1}_{\{\re \geq 1\}}$ such that $C' C_0'$ depends on $\gm,\gM, \Delta_\delta, M, L$.
    
  % -----------------------------------------\\
   
   With this, we can define the approximated distance function $d'$ as follows: for all $x,y \in \cX$
    \begin{equation}\label{eq: strategy}
        d'(x,y) = \begin{cases}
            d_{f}(c(x), c(y)) + \theta & \textnormal{ if } (y - x)^{\top}\sf{H}_{c(x)}(y - x) > \theta\\
             (y - x)^{\top}\sf{H}_{c(x)}(y - x) & \textnormal{ o.w. }
        \end{cases}
    \end{equation}
    %\akash{May have to assume something on assignment of weights using $d_f$ or how large $ (y - x)^{\top}\sf{H}_{c(x)}(y - x)$ could be }
    where $\theta = 4\hat{\beta}$ is the threshold to switch between the global distance function $d_f$ and local Mahalanobis distance function. %Without loss of generality, we assume that the smallest distance between any two centers $c_1,c_2 \in \cC$ in $d_f$ is at least $5\hat{\beta}$ so that global distance (wrt to $d_f$) is always greater than the local distance assigned via the Mahalanabis distance function $\sf{H}_{c(x)}$ for any $x \in \cX$, i.e., $\min_{c_1,c_2 \in \cC} d_f(c_1,c_2) > 5\hat{\beta}$ which can be added to any pairs of centers to the distance function $d_f$ returned by \algoref{alg: finite}.
    
    Based on this distance function we can design a triplet prediction strategy. First consider the following definitions for any triplet $x,y,z \in \cX$ as 
\[
\ell_{ab} := (b - a)^\top \mathsf{H}_{c(x)} (b - a), \quad \text{for } a, b \in \{x, y, z\}.
\]
Given this, the case-by-case conditions for triplet classification are:
\[
\begin{cases}
    \text{If } \ell_{xy} > \theta \text{ and } \ell_{xz} > \theta, & \text{Answer according to the triplet } (c(x), c(y), c(z)) \text{ using } d_f. \\
    \text{If } \ell_{xy} \le \theta \text{ and } \ell_{xz} \le \theta, & \text{Answer according to } \sgn{\ell_{xy} - \ell_{xz}}. \\
    \text{If } \ell_{xy} > \theta \text{ and } \ell_{xz} \le \theta & \text{Answer } ``d(x, y) > d(x, z)".\\
    \text{Otherwise,} & \text{Answer } ``d(x, z) > d(x, y)". \\
\end{cases}
\]
    %\akash{argue how the comparison method reduces to this approximation.}
    
    In the rest of the proof, we provide the correctness of the query strategy for the approximated distance function $d'$ as shown in \eqnref{eq: strategy}.

    \noindent\textbf{Case 1:} The distances are far apart: both \( (y - x)^{\top}\sf{H}_{c(x)}(y - x) \) and \( (z - x)^{\top}\sf{H}_{c(x)}(z - x) \) are greater than \( 2 \hat{\beta} \) (note the factor that shows that the global approximation is still fine within this factor). In this case, the learner finds the centers \( (c(x), c(y), c(z)) \) for the samples and uses the finite sample distance function \( (\cC, d_f) \) to answer the query.

    We show that if \( d(x,y) > (1 + \re) d(x,z) \), then \( d(c(x), c(y)) > d(c(x), c(z)) \), which in turn implies \( d_f(x,y) > d_f(x,z) \), as \( d_f \) is triplet-equivalent to \( d \) on \( \cd \) (see \thmref{thm: finite}).

    \iffalse
    ---------
    
    In \eqnref{eq: strongconvex} of \lemref{lem: convex}, we showed that if $||h||_2$ is smaller than $\frac{4\gm}{Mp^{\frac{3}{2}}}$ then, $d(x,\cdot)$ is $\frac{\gm}{4}$-strongly convex at $x$ in the second argument. Using this observation, we note that if $c \in \cd$ and $y \in \cX$ then on the boundary  $(c - y)^{\top}\sf{H}_c(c - y) = 2 \hat{\beta}$, we have\vspace{-2mm}
    \begin{align*}
        \frac{\gM\hat{\beta}}{\gm}  \ge ||c - y||_2^2 \ge \frac{\gm\hat{\beta}}{4\gM}
    \end{align*}
    
    Using strong convexity and smoothness as shown in \eqnref{eq: strongconvex}-\eqnref{eq: smoothconvex},
    \begin{equation*}
         \frac{\gM^2\cdot \hat{\beta}}{\gm} \ge d(c,y) \ge \frac{\gm^2\cdot \hat{\beta}}{16\gM}
    \end{equation*}
    where we have used the bounds from \eqnref{eq: tau} and \eqnref{eq: eigenbound}.
    
    Now note that $\frac{\gm^2\cdot \hat{\beta}}{16\gM} < \Delta$, which implies if $(c - y)^{\top}\sf{H}_c(c - y) > 2\hat{\beta}$ then $d(c,y) > \frac{\gm^2\cdot \hat{\beta}}{16\gM}$.

    -----------------
    \fi 

Note that $\epsilon \le \sqrt{\hat{\beta}} \le \frac{3\gm}{2Mp^{1.5}}$. Thus, if \( ||c(x) - x||_2 \le \epsilon \), the smoothness of \( d \) in the second argument, as given by \eqnref{eq: smoothconvex} (\lemref{lem: convex}), bounds \( d(c(x), x) \) and \( d(x, c(x)) \) as follows:
\begin{subequations}\label{eq: closebound}
    \begin{align}
        d(c(x), x) &\le \gM \cdot ||c(x) - x||^2_2 \le \gM \cdot \epsilon^2, \\
        d(x, c(x)) &\le \gM \cdot ||c(x) - x||^2_2 \le \gM \cdot \epsilon^2.
    \end{align}
\end{subequations}
Note that this holds for any \( x \in \cX \). %\sanjoy{In order to apply \lemref{lem: convex}, we need $\epsilon \leq 3 \gm/(2Mp^{3/2})$; why does this hold?}

    %\akash{check if symmetry is not required along with triangle inequality}
    
    Now, we will show how reducing $x,y,z$ to the closest centers in $\cC$ leads to correct classification:
    \allowdisplaybreaks
    \begin{align}
        d(c(x), c(y)) &\ge d(c(x), y) - d(c(y), y) \label{eq: glob0}\\
                          & \ge d(x,y) - d(x, c(x)) - d(c(y), y)\label{eq: glob1}\\
                          & > (1+ \re)\cdot d(x,z) - 2\gM\cdot \epsilon^2\label{eq: glob2}\\%2\paren{\frac{\gm \hat{\beta}}{16\gM}}\\
                         % & \ge d(x,z) + \re d(x,z) - 2\gM\cdot \epsilon^2\\
                          & \ge \paren{d(c(x),c(z)) - d(x,c(x)) - d(c(z),z)} + \re d(x,z) - 2\gM\cdot \epsilon^2\label{eq: glob3}\\
                          & \ge d(c(x),c(z)) + \re d(x,z) - 4 \gM\cdot \epsilon^2\label{eq: glob4}\\
                           & \ge d(c(x),c(z)) + \paren{\frac{\gm^2 \hat{\beta} \re}{4\gM}} - 4 \gM\cdot  \paren{\frac{\gm^2 \hat{\beta}\re}{16\gM^2(1 + \re)}}\label{eq: glob5}\\
                           & \ge d(c(x),c(z)) + \paren{\frac{\gm^2 \hat{\beta} \re}{4\gM}} - \paren{\frac{\gm^2 \hat{\beta} \re}{4\gM(1 + \re)}}\label{eq: glob6}\\
                          %& \ge (1 + \re)\cdot (d(c(x),c(z)) - d(c(x),x) - d(z,c(z))) -  2\gM\cdot \epsilon^2\label{eq: glob3}\\
                          %& \ge d(c(x),c(z)) + \re\cdot d(c(x),c(z)) - 2(1 + \re) \gM\cdot \epsilon^2 - 2\gM\cdot \epsilon^2\label{eq: glob4}\\
                          %& \ge d(c(x),c(z)) + \re \cdot (d(c(x),z) - d(c(z),z)) - 2(1 + \re) \gM\cdot \epsilon^2 - 2\gM\cdot \epsilon^2\label{eq: glob5}\\
                          %& \ge d(c(x),c(z)) + \re\cdot d(c(x),z) - \re \gM\cdot \epsilon^2 - 2(1 + \re) \gM\cdot \epsilon^2 - 2\gM\cdot \epsilon^2\label{eq: glob6}\\
                          %& \ge d(c(x),c(z)) + \re\cdot d(c(x),z) - 4(1 + \re)\gM\cdot \epsilon^2\label{eq: glob7}\\
                          %& \ge d(c(x),c(z)) + \paren{\frac{\gm^2 \hat{\beta} \re}{4\gM}} - \paren{\frac{\gm^2 \hat{\beta}\re }{8\gM }}\label{eq: glob8}\\
                          & > d(c(x),c(z)) \label{eq: glob9}
                          %%& \ge d(c(x),c(z)) + 4\epsilon \hat{\beta} - \epsilon \frac{M p^{\frac{3}{2}}}{6} d(c(x),z) - \epsilon \hat{\beta} - 2(1 + \epsilon) \hat{\beta} - \paren{\frac{\gm \hat{\beta}}{4\gM}}
    \end{align}
    %\akash{fix this}
    In the first two equations: \eqnref{eq: glob0}-\eqref{eq: glob1} we have used the triangle inequality for $d$. In \eqnref{eq: glob2}, we note that the conditions of \lemref{lem: convex} are satisfied due to the choice of the cover radius $\epsilon$ and thus the bound follows using \eqnref{eq: closebound}. In \eqnref{eq: glob3}, we expand $d(x,z)$ using triangle inequality. In \eqnref{eq: glob4}, we use the bounds from \eqnref{eq: closebound} on $d(x,c(x))$, and $d(c(z),z)$.
    %\eqnref{eq: glob5}-\eqref{eq: glob6} are standard manipulations. 
    In \eqnref{eq: glob5}, we note that the conditions for \lemref{lem: smalldist} are satisfied and thus $d(x,z) \ge \paren{\frac{\gm^2 \hat{\beta}}{4\gM}}$ for the choice of $t=1$ is implied as \( (z - x)^{\top}\sf{H}_{c(x)}(z - x) \ge 2\hat{\beta} \). Furthermore, we expand $4\gM\cdot \epsilon^2$ to complete the final bound in \eqnref{eq: glob9}. Hence, in the \textbf{Case 1} we have shown the multiplicative approximation with $d'$. 
    %Note that for any point $x$ and a center $c \in \cC$ such that $||x - c||_2^2 = \hat{\beta}$, we have 
    %\begin{align*}
    %    d(x,c) \ge (x-c)^{\top}\sf{H}_c(x-c) - \frac{M p^{\frac{3}{2}}}{6} \cdot||x-c||^{3}_2 \ge 3 \Delta_s \ge \Delta 
    %\end{align*}
    
    Now, we will consider the second case.
    
    \noindent\textbf{Case 2:} The distances are small: $(y - x)^{\top}\sf{H}_{c(x)}(y - x)$ and $(z - x)^{\top}\sf{H}_{c(x)}(z - x)$ are smaller than $4 \g\hat{\beta}$ (note the choice of the upper bound which shows that the local approximation is still fine within a larger factor).
    %\sanjoy{For this case, you might find the following observation helpful: if $A > (1+\omega)B$ then $A-B > \frac{\omega}{2+\omega} \cdot (A + B)$.}
    
     Under this condition, learner decides the triplet comparison on $(x,y,z)$ by checking the label in $\sgn{(y - x)^{\top}\sf{H}_{c(x)}(y - x) - (z - x)^{\top}\sf{H}_{c(x)}(z - x)}$. We show that it is positive if $d(x,y) > (1 + \re)d(x,z)$.

    In the rest of the proof, we denote $\bigstar := (y - x)^{\top}\sf{H}_{c(x)}(y - x) - (z - x)^{\top}\sf{H}_{c(x)}(z - x)$.
 %   The error threshold: 
  %  \begin{align*}
   %     \xi \le \frac{\gm \re}{4\gM (\re + 4) }
   % \end{align*}
    
    The key idea is to lower bound the difference in the distances computed by the learner, i.e. $\bigstar$ in terms of $d(x,y)$ and $d(x,z)$ and show that the approximation error remains within control.
%==================================================================\\
    %First, we note \( \tau_c = \frac{1}{y^\top \sH y} \), i.e., \( \tau_c \) is a scalar corresponding to the Mahalanobis distance function \( \sH \) that scales the quadratic product \( y^{\top} \sH y = 1 \) (see proofs of \thmref{thm: mahamain} and \thmref{thm: advmahamain} in \appref{app: maha}).
    For the choice of $\xi$ as stated earlier, using \thmref{thm: advmahamain}, we can bound difference of evaluations at $(x - y)$ as follows:
    %Thus, for any \( x, x' \in \cX \),
    \begin{align}
    (x - y)^{\top} \left( \tau_{c(x)} \cdot \sHc{x} - \sf{H}_{c(x)} \right) (x - y) \le ||\tau_{c(x)} \cdot \sHc{x} - \sf{H}_{c(x)}||_F \cdot ||x - y||_2^2 
    \le \xi\cdot ||x - y||_2^2. \label{eq: quad}
%\frac{8p^2 \gM^2 f(\epsilon)}{\gm^2} 
    \end{align}
    where \( \tau_{c(x)}= \frac{1}{z^\top \sHc{x} z} \) ($z$ is chosen such that largest diagonal element of $\tau_{c(x)} \sHc{x}$ is 1, see proofs of \thmref{thm: mahamain} and \thmref{thm: advmahamain} in \appref{app: maha}).
    
    Similarly, using \eqnref{eq: quad}, we can show a lower bound on $(z - x)^{\top}\sf{H}^*_{c(x)}(z - x)$
    \begin{align}
        (z - x)^{\top}\sf{H}^*_{c(x)}(z - x) \ge \frac{1}{\tau_{c(x)}} \bracket{(x - z)^{\top}\sf{H}_{c(x)}(x - z)  - \xi\cdot ||x -z||^2_2} \label{eq: lb1}
    \end{align} 
    Using \eqnref{eq: hessian} due to Hessian continuity of $d$ we have 
    \begin{subequations}\label{eq: bound1}
    \begin{align}
       % c' \cdot (y - x)^{\top}\sf{H}_{c(x)}(y - x) &\ge \text{ or } (y - x)^{\top}\sf{H}'_{c(x)}(y - x)  + O(4 \kappa_0 p \cdot f(\epsilon)\cdot ||y - x||_2) \ge \\ 
        \frac{1}{2}(y - x)^{\top}\sf{H}^*_{c(x)}(y - x) &\ge \frac{1}{2}(y - x)^{\top}\sf{H}^*_{x}(y - x) - \frac{L}{2}\cdot ||x - c(x)||_2\cdot ||x - y||_2^2\\
        &\ge d(x,y) - \frac{M p^{\frac{3}{2}}}{6}||x - y||_2^3 - \frac{L}{2}\cdot ||x - c(x)||_2\cdot ||x - y||_2^2\\
        & = d(x,y) -\underbrace{\paren{\frac{M p^{\frac{3}{2}}}{6}||x - y||_2 + \frac{L}{2}\cdot ||x - c(x)||_2}}_{\alpha(x,y)} ||x - y||_2^2 \label{eq: mult1}
    \end{align}
    \end{subequations}
    In the second equation, we have used the Taylor's approximation using \lemref{lem: taylorlemma}. With a similar manipulation, we can achieve the following lower bound 
    \begin{align}
        d(x,z) \ge \frac{1}{2}(z - x)^{\top}\sf{H}^*_{c(x)}(z - x) - \underbrace{\paren{\frac{M p^{\frac{3}{2}}}{6}||x - z||_2 + \frac{L}{2}\cdot ||x - c(x)||_2}}_{\alpha(x,z)} ||x - z||_2^2 \label{eq: mult2}
    \end{align}
    With the approximations above, we can lower bound $\bigstar$ as
    \begin{align}
        \bigstar &\ge \tau_{c(x)}\paren{ (y - x)^{\top}\sf{H}^*_{c(x)}(y - x) - (z - x)^{\top}\sf{H}^*_{c(x)}(z - x)} - \xi \cdot ||x-y||_2^2 - \xi \cdot ||x-z||_2^2 \label{eq: low1}\\
        &\ge 2\tau_{c(x)}\paren{ d(x,y) - \alpha(x,y)\cdot ||x-y||_2^2 - d(x,z) - \alpha(x,z)\cdot ||x-z||_2^2} - \xi \cdot ||x-y||_2^2 - \xi \cdot ||x-z||_2^2  \label{eq: low2}\\
        & \ge 2\tau_{c(x)}\underbrace{\paren{d(x,y)  - d(x,z)}}_{\dagger} - (2\tau_{c(x)} \alpha(x,y) + \xi) \cdot ||x-y||_2^2  - (2\tau_{c(x)}\alpha(x,z) + \xi)\cdot ||x-z||_2^2  \label{eq: low3}
    \end{align}
    \eqnref{eq: low1} follows as a direct consequence of \eqnref{eq: quad} and \eqnref{eq: lb1}. In \eqnref{eq: low2} we use the bounds on the distances $d(x,y)$ and $d(x,z)$ as shown in \eqnref{eq: bound1}-\eqnref{eq: mult2}. %In \eqnref{eq: low3}, we note that $\frac{2\xi}{\tau_{c(x)}\gm} \ge 0$.

    Now, we can bound $\dagger$ as
    \begin{align*}
        d(x,y) - d(x,z) \ge \paren{\frac{\re}{\re + 2}} \paren{d(x,y) + d(x,z)}
    \end{align*}
    where we have used the inequality: if $A > (1+\re)B$ then $A-B > \frac{\re}{2+\re} \cdot (A + B)$. Thus, we can rewrite \eqnref{eq: low3} as 
    \begin{align}
        \bigstar \ge  \paren{\frac{2\tau_{c(x)}\re }{(\re + 2)}} \paren{d(x,y) + d(x,z)} - (2\tau_{c(x)} \alpha(x,y) + \xi) \cdot ||x-y||_2^2  - (2\tau_{c(x)}\alpha(x,z) + \xi)\cdot ||x-z||_2^2 \label{eq: low4}
    \end{align}
    To complete the proof of the \textbf{Case 2} we will show the following bounds 
    \begin{align*}
        & d(x,y) > \frac{(\re + 2)}{\re } \paren{\alpha(x,y) + \frac{\xi}{2\tau_{c(x)}}}\cdot ||x-y||_2^2\\
        & d(x,z) \ge \frac{(\re + 2)}{\re } \paren{\alpha(x,z) + \frac{\xi}{2\tau_{c(x)}}} \cdot ||x-z||_2^2 
    \end{align*}
    %where we have used the implication: $\frac{2\xi}{\tau_{c(x)}\gm} \le \frac{\re}{2(\re + 4)} < 1$.
    We will show that the constants $\alpha(x,y)$ and $\alpha(x,z)$ are upper bounded by $\frac{\gm\re}{8(\re + 2)}$. 
    Note that given the condition that $(y - x)^{\top}\sf{H}_{c(x)}(y - x) \le 4\g\hat{\beta}$ and the choice of the cover radius $\epsilon$ we have the following bounds on the distances $x,y$ and the center $c(x)$ 
    \begin{align*}
        ||x - y||^2_2 \le \frac{8\g\hat{\beta}}{\tau_{c(x)}\gm} \le \frac{8\g\gM\hat{\beta}}{\gm},\quad ||x - c(x)||^2_2 \le \hat{\beta}\re
    \end{align*}
    where in the first bound we have used \lemref{lem: taubound} (note that $\xi$ satisfies the stated bound) and in the second the definition of $\epsilon$.

    Now, using the bound in \eqnref{eq: hat}) we have 
    \begin{align*}
       \alpha(x,y) =  \paren{\frac{M p^{\frac{3}{2}}}{6}||x - y||_2 + \frac{L}{2}\cdot ||x - c(x)||_2} \le \paren{\frac{M p^{\frac{3}{2}}}{6}\sqrt{\frac{8\g\gM}{\gm}} + \frac{L}{2}\sqrt{\re}}\cdot \sqrt{\hat{\beta}} < \frac{\re \gm}{8(\re + 2)}
    \end{align*}
    This gives the stated bound on $\alpha(x,y)$ and $\alpha(x,z)$. Thus, we have 
    \begin{align*}
        \frac{\re + 2}{\re } \paren{\alpha(x,y) + \frac{\xi}{2\tau_{c(x)}}} < \frac{\gm}{4}, \quad
        \frac{\re + 2}{\re } \paren{\alpha(x,z) + \frac{\xi}{2\tau_{c(x)}}} < \frac{\gm}{4}
    \end{align*}
    %With this consider the following key observation: if $v > \frac{\gm}{4}$ and $\|x'-x''\|_2 \le \frac{3\gm}{2Mp^{1.5}}$ then $d(x',x'') > \frac{4v}{\gm} \cdot ||x'-x''||_2^2$. This follows easily using Taylor's approximation in \lemref{lem: taylorlemma}, i.e.,
%\begin{align*}
%    \left\lvert d(x',x'') - \frac{1}{2}(x'-x'')^{\top}\sf{H}^*_{x'}(x'-x'') \right \rvert \le \frac{M p^{\frac{3}{2}}}{6}||x' - x''||_2^3.
%\end{align*}
%Using the stated condition on \( \|x'-x''\|_2^2 \), we can bound the right-hand side:
%\[
%\frac{M p^{\frac{3}{2}}}{6}||x' - x''||_2^3 \le \frac{\gm}{4}\cdot \|x'-x''\|_2^2.
%\]
%This implies that the deviation between the quadratic form and the actual distance is small, so we get
%\begin{align*}
%    d(x',x'') \ge \frac{1}{4} (x'-x'')^{\top}\sf{H}^*_{x'}(x'-x'') \ge \frac{\gm}{4} ||x'-x''||_2^2
%\end{align*}

Using Taylor's approximation in \lemref{lem: taylorlemma} and noting that $x, y$ and $z$ satisfy the condition--$||x-y||_2, ||x-z||_2 \le \frac{3\gm}{2Mp^{1.5}}$, we have 
\begin{align*}
    &d(x,y) \ge \frac{1}{4}(x-y)^\top\sf{H}^*_{x}(x-y) \ge   
    \frac{\gm}{4} ||x-y||_2^2 > \frac{(\re + 2)}{\re } \paren{\alpha(x,y) + \frac{\xi}{2\tau_{c(x)}}}\cdot ||x-y||_2^2\\
    &d(x,z) \ge \frac{1}{4}(x-z)^\top\sf{H}^*_{x}(x-z) \ge \frac{\gm}{4} ||x-z||_2^2 \ge \frac{(\re + 2)}{\re } \paren{\alpha(x,z) + \frac{\xi}{2\tau_{c(x)}}}\cdot ||x-z||_2^2 
\end{align*}
where in the first equation we get strictly inequality because $x \neq y$ (otherwise $d(x,y) \not > (1 + \re ) d(x,z)$).

But this implies that in \eqnref{eq: low4} we achieve a strict positive inequality, i.e.
\begin{align*}
    \bigstar > 0.
\end{align*}
This completes the proof of the second case.

    Now, in the final case, we have the following possibility.
    %----------------------------\\
    % \begin{align}
    %     4 \kappa_0 p f(\epsilon)\cdot ||x -x'||^2_2 \le \frac{\tau_{c} \gm \epsilon}{16} ||x -x'||^2_2 \le \frac{\epsilon}{8} (x - x')^{\top}\sf{H}_{c} (x - x') %\label{eq: straight}
    %\end{align}
     
    % Using an argument shown for \eqnref{eq: straight} for the proof of \eqnref{eq: factorub1} in \lemref{lem: taubound}, we note that $\frac{5\epsilon}{16} (x - z)^{\top}\sf{H}_{c(x)}(x - z)  > 8 \kappa_0 p \xi\cdot ||x -z||^2_2$, implying 
    %Since $\paren{1 - \frac{\epsilon}{8}} (1 + \epsilon)\paren{1 - \frac{\epsilon}{16}} > 1$, we have 
    %\begin{align*}
    %    (y - x)^{\top}\sf{H}_{c(x)}(y - x) > (z - x)^{\top}\sf{H}_{c(x)}(z - x)
    %\end{align*}

    \noindent\textbf{Case 3:} The distances are small: one of $(y - x)^{\top}\sf{H}_{c(x)}(y - x)$ and $(z - x)^{\top}\sf{H}_{c(x)}(z - x)$ is smaller\footnote{not necessarily strictly} than $4 \hat{\beta}$ and the other \tt{strictly} greater than $4 \hat{\beta}$.
    %\akash{prove for the case when $(z - c(x))^{\top}\sf{H}_{c(x)}(z - c(x)) < 4\g \hat{\beta}$ where $\g = \frac{20\gM^3}{\gm^3}$. Things to prove here: \begin{itemize}
        %\item if $(y - c(x))^{\top}\sf{H}_{c(x)}(y - c(x)) \ge 4\g\hat{\beta}$ then $d(x,y) > d(c(x),y) - d(x,c(x)) (= \gM \epsilon^2)  \ge \frac{8\gM^2\hat{\beta}}{\gm}$. %Note, $(z - c(x))^{\top}\sf{H}_{c(x)}(z - c(x)) \ge 4\g\hat{\beta}$ implies 
        %\item if $(z - c(x))^{\top}\sf{H}_{c(x)}(z - c(x)) < 4\hat{\beta}$ then $d(x,z) \le \frac{8.5\gM^2\hat{\beta}}{\gm}$. From previous analysis in \lemref{lem: convex} and \lemref{lem: smalldist} we get a bound on $d(c(x),z)$. Now, $d(x,z) \le d(x, c(x)) + d(c(x),z)$. So, we have $d(x,z) \le \frac{8\gM^2\hat{\beta}}{\gm} + small\, error$
        %\item if $(z - c(x))^{\top}\sf{H}_{c(x)}(z - c(x)) < 4\hat{\beta}$ then $(z - x)^{\top}\sf{H}_{c(x)}(z - x) \le \min_{c,c_0} d_f(c,c_0) + 4 \hat{\beta}$. Note, $\sqrt{(z - x)^{\top}\sf{H}_{c(x)}(z - x)} \le \sqrt{(z - c(x))^{\top}\sf{H}_{c(x)}(z - c(x))} + \sqrt{(x - c(x))\sf{H}_{c(x)}(x - c(x))} \le \sqrt{4\hat{\beta}} + \sqrt{\frac{2\gM}{\gm}\epsilon^2}$, squaring $4\hat{\beta} + \frac{2\gM}{\gm}\epsilon^2 + 2\sqrt{4\frac{2\gM}{\gm}\hat{\beta} \epsilon^2}$
    %\end{itemize} }
    For ease of notation, we use $\ell_{xy} =(y - x)^{\top}\sf{H}_{c(x)}(y - x)$ and $\ell_{xz} = (z - x)^{\top}\sf{H}_{c(x)}(z - x)$.   
%    Wlog assume that $\ell_{c(x)y} = (y - c(x))^{\top}\sf{H}_{c(x)}(y - c(x)) > 4 \hat{\beta}$. 
    We analyze two subcases according to the values of $\ell_{xy}$ and $\ell_{xz}$.

First, if \( \ell_{xy}, \ell_{xz} \le 4\g\hat{\beta} \), then using the analysis in \textbf{Case 2}, we note that
\[
d(x,y) > (1 + \re)d(x,z) \implies \ell_{xy} > \ell_{xz} \implies \ell_{xy} > 4\hat{\beta}, \ell_{xz} \le 4\hat{\beta} \implies d'(x,y) > d'(x,z).
\]

Second, if one of \( \ell_{xy} \) or \( \ell_{xz} \) is greater than \( 4\g \hat{\beta} \), assume without loss of generality that \( \ell_{xy} > 4\g \hat{\beta} \). Then, using \lemref{lem: smalldist} and setting \( t = \g \), we have
\[
d(x,y) \ge \frac{\gm^2 \cdot \g \hat{\beta}}{4\gM} = \frac{10\gM^2\hat{\beta}}{\gm}.
\]
Since \( \ell_{xz} < 4\hat{\beta} \) we have 
\[(x - z)^{\top}\sf{H}_{c(x)}(x-z) < 4 \hat{\beta} \implies ||x-z||_2^2 < \frac{8 \gM \hat{\beta}}{\gm},\]
where we use the bound \( 2\tau_{c(x)} \gM \ge \lambda_{\sf{max}}(\sf{H}_{c(x)}) \) using \lemref{lem: taubound}.

Since \( \sqrt{\frac{8 \gM \hat{\beta}}{\gm}} \) is smaller than \( \frac{3 \gm}{2M p^{3/2}} \), we can invoke \lemref{lem: convex} to apply the smoothness property of \( d \), which yields:
\begin{align*}    
d(x,z) \le \frac{8 \gM^2 \hat{\beta}}{\gm}.                                                                                                                                                                                                                                                  
\end{align*}

Thus, \( d(x,y) > d(x,z) \), enforcing the condition \( d(x,z) \not > (1 + \re) d(x,y) \). Consequently,
\[
d(x,y) > (1 + \re) d(x,z) \implies \ell_{xy} > \ell_{xz} \implies d'(x,y) > d'(x,z)
\]
is satisfied in this subcase as well.
     %It is straightforward to note that reducing the samples $x,y,z$ to the centers $(c(x),c(y),c(z))$ should correctly give the comparison as follows from the analysis for the second case. 
     %\sanjoy{Can you spell this out a bit more?}

    Thus, we have demonstrated the correctness of the query strategy based on the approximated distance function \( d' \) as constructed in \eqnref{eq: strategy}. This establishes that \algoref{alg: smoothdisthessianfull} outputs a distance function that is \( \re \)-multiplicatively approximate to the underlying distance function \( d \).

 %   In the analysis above, the learner picks a $g(\epsilon)$-cover of the space. Note that if $\epsilon$ is chosen small enough satisfying the minimum constraint in \eqnref{eq: smalleps}, a $\cC(\cX, \epsilon^5, \ell_2)$-cover, as chosen in \algoref{alg: smoothdisthessian}, sufficiently handles the approximation.
   
}
{}
\end{document}